\def\given{\,|\,}
\def\biggiven{\,\big{|}\,}
\def\tr{\mathop{\text{tr}}\kern.2ex}
\long\def\comment#1{}
\def\tr{\mathop{\text{Tr}}}
\def\cS{{\mathcal{S}}}
\newcommand{\bel}{\begin{eqnarray}\label}
\newcommand{\eel}{\end{eqnarray}}
\newcommand{\bes}{\begin{eqnarray*}}
\newcommand{\ees}{\end{eqnarray*}}
\def\algoname{ETC}
\def\algosp{{\rm ETC }}
\newcommand{\smallsec}[1]{\vskip4pt\noindent{\bf#1}}
\newcommand{\OO}{\mathbb{O}}
\newcommand{\UU}{\mathbb{U}}
\newcommand{\BB}{\mathbb{B}}
\newcommand{\FF}{\mathbb{F}}
\newcommand{\DD}{\mathbb{D}}
\def\diag{\DD}
\def\diagO{\tilde\OO}
\def\ObsO{\OO}
\def\ObsB{\BB}
\def\ObsF{\FF}
\def\vecb{{b}}
\def\pone{\mathbb{X}}
\def\ptwo{\mathbb{Y}}
\def\past{w}
\def\futr{z}
\def\lat{q}%latent variable
\def\param{\theta}
\def\Param{\Theta}
\newcommand{\Tran}{\mathbb{T}}
\def\distP{\PP}
\def\traj{\tau}
\def\utraj{\underline\traj}
\def\est{\mathfrak{E}}
\def\poly{{\rm poly}}
\def\seq#1#2#3{{#1^{#3}_{#2}}}
\def\sp#1#2#3{{^{#1}_{#2}}#3}%superscript
\def\eid#1{\ind({#1})}
 \newcommand{\indp}{\perp\!\!\!\!\perp} 
 \newcommand{\notindp}{\not\perp\!\!\!\!\perp}
\let\emptyset\varnothing
\def\##1\#{\begin{align}#1\end{align}}
\def\$#1\${\begin{align*}#1\end{align*}}
\begin{document}

\title{Embed to Control Partially Observed Systems:  Representation Learning with Provable Sample Efficiency}
\author
{\normalsize
Lingxiao Wang\thanks{Northwestern University; 
\texttt{lingxiaowang2022@u.northwestern.edu}}
\qquad
Qi Cai\thanks{Northwestern University; 
\texttt{qicai2022@u.northwestern.edu}}
\qquad Zhuoran Yang\thanks{Yale University; 
\texttt{zhuoran.yang@yale.edu}}
\qquad Zhaoran Wang\thanks{Northwestern University; 
\texttt{zhaoranwang@gmail.com}}
}
\date{}
\maketitle
\begin{abstract}
 Reinforcement learning in partially observed Markov decision processes (POMDPs) faces two challenges. (i) It often takes the full history to predict the future, which induces a sample complexity that scales exponentially with the horizon. (ii) The observation and state spaces are often continuous, which induces a sample complexity that scales exponentially with the extrinsic dimension. Addressing such challenges requires learning a minimal but sufficient representation of the observation and state histories by exploiting the structure of the POMDP.

To this end, we propose a reinforcement learning algorithm named Embed to Control (ETC), which learns the representation at two levels while optimizing the policy.~(i) For each step, ETC learns to represent the state with a low-dimensional feature, which factorizes the transition kernel. (ii) Across multiple steps, ETC learns to represent the full history with a low-dimensional embedding, which assembles the per-step feature. We integrate (i) and (ii) in a unified framework that allows a variety of estimators (including maximum likelihood estimators and generative adversarial networks). For a class of POMDPs with a low-rank structure in the transition kernel, ETC attains an $O(1/\epsilon^2)$ sample complexity that scales polynomially with the horizon and the intrinsic dimension (that is, the rank). Here $\epsilon$ is the optimality gap. To our best knowledge, ETC is the first sample-efficient algorithm that bridges representation learning and policy optimization in POMDPs with infinite observation and state spaces.
\end{abstract}

\section{Introduction}

Deep reinforcement learning demonstrates significant empirical successes in Markov decision processes (MDPs) with large state spaces \citep{mnih2013playing,mnih2015human,silver2016mastering,silver2017mastering}. Such empirical successes are attributed to the integration of representation learning into reinforcement learning. In other words, mapping the state to a low-dimensional feature enables model/value learning and optimal control in a sample-efficient manner. Meanwhile, it becomes more theoretically understood that the low-dimensional feature is the key to sample efficiency in the linear setting \citep{cai2020provably, jin2020provably, ayoub2020model, agarwal2020flambe, modi2021model, uehara2021representation}. 

In contrast, partially observed Markov decision processes (POMDPs) with large observation and state spaces remain significantly more challenging. Due to a lack of the Markov property, the low-dimensional feature of the observation at each step is insufficient for the prediction and control of the future \citep{sondik1971optimal, papadimitriou1987complexity, coates2008learning,azizzadenesheli2016reinforcement, guo2016pac}. Instead, it is necessary to obtain a low-dimensional embedding of the history, which assembles the low-dimensional features across multiple steps \citep{hefny2015supervised, sun2016learning}. In practice, learning such features and embeddings requires various heuristics, e.g., recurrent neural network architectures and auxiliary tasks \citep{hausknecht2015deep,li2015recurrent,mirowski2016learning,girin2020dynamical}. In theory, the best results are restricted to the tabular setting \citep{azizzadenesheli2016reinforcement, guo2016pac, jin2020sample, liu2022partially}, which does not involve representation learning. 

To this end, we identify a class of POMDPs with a low-rank structure on the state transition kernel (but not on the observation emission kernel), which allows prediction and control in a sample-efficient manner. More specifically, the transition admits a low-rank factorization into two unknown features, whose dimension is the rank. On top of the low-rank transition, we define a Bellman operator, which performs a forward update for any finite-length trajectory. The Bellman operator allows us to further factorize the history across multiple steps to obtain its embedding, which assembles the per-step feature.

By integrating the two levels of representation learning, that is, (i) feature learning at each step and (ii) embedding learning across multiple steps, we propose a sample-efficient algorithm, namely Embed to Control (ETC), for POMDPs with infinite observation and state spaces. The key to ETC is balancing exploitation and exploration along the representation learning process. To this end, we construct a confidence set of embeddings upon identifying and estimating the Bellman operator, which further allows efficient exploration via optimistic planning. It is worth mentioning that such a unified framework allows a variety of estimators (including maximum likelihood estimators and generative adversarial networks).

We analyze the sample efficiency of ETC under the future and past sufficiency assumptions. In particular, such assumptions ensure that the future and past observations are sufficient for identifying the belief state, which captures the information-theoretic difficulty of POMDPs. We prove that ETC attains an $O(1/\epsilon^2)$ sample complexity that scales polynomially with the horizon and the dimension of the feature (that is, the rank of the transition). Here $\epsilon$ is the optimality gap. The polynomial dependency on the horizon is attributed to embedding learning across multiple steps, while polynomial dependency on the dimension is attributed to feature learning at each step, which is the key to bypassing the infinite sizes of the observation and state spaces. 

\smallsec{Contributions.} In summary, our contribution is threefold.
\begin{itemize}
\item We identify a class of POMDPs with the low-rank transition, which allows representation learning and reinforcement learning in a sample-efficient manner. 
\item We propose ETC, a principled approach integrating embedding and control in the low-rank POMDP.
\item We establish the sample efficiency of ETC in the low-rank POMDP with infinite observation and state spaces. 
\end{itemize}
\smallsec{Related Work.} Our work follows the previous studies of POMDPs. In general, solving a POMDP is intractable from both the computational and the statistical perspectives \citep{papadimitriou1987complexity, vlassis2012computational, azizzadenesheli2016reinforcement, guo2016pac, jin2020sample}. Given such computational and statistical barriers, previous works attempt to identify tractable POMDPs. In particular, \cite{azizzadenesheli2016reinforcement, guo2016pac, jin2020sample, liu2022partially} consider the tabular POMDPs with (left) invertible emission matrices. \cite{efroni2022provable} considers the POMDPs where the state is fully determined by the most recent observations of a fixed length. \cite{cayci2022learning} analyze POMDPs where a finite internal state can approximately determine the state. In contrast, we analyze POMDPs with the low-rank transition and allow the state and observation spaces to be arbitrarily large. Meanwhile, our analysis hinges on the future and past sufficiency assumptions, which only require that the density of the state is identified by that of the future and past observations, respectively. %In addition, \cite{azizzadenesheli2016reinforcement} narrows down the scope of solving POMDPs to finding the optimal memoryless policy \citep{littman1994memoryless}, which makes the decision based only on the most recent observation. In contrast, we aim to recover the optimal policy that makes the decision based on the full interaction history. 
In recent work, \cite{cai2022sample} also utilizes the low-rank property in the transition. Nevertheless, \cite{cai2022sample} assumes that the feature representation of state-action pairs is known, thus relieving the agent from feature learning. In contrast, we aim to recover the efficient state-action representation for planning. 
%In terms of the planning methods, \cite{azizzadenesheli2016reinforcement, guo2016pac, jin2020sample, cai2022sample} conduct the model-based planning, whereas \cite{cayci2022learning} conduct the policy-based planning. Our work follows \cite{azizzadenesheli2016reinforcement, guo2016pac, jin2020sample, cai2022sample} and conducts the model-based planning. 
In terms of the necessity of exploration, \cite{azizzadenesheli2016reinforcement, guo2016pac} analyze POMDPs where an arbitrary policy can conduct efficient exploration. Similarly, \cite{cayci2022learning} consider POMDPs with a finite concentrability coefficient \citep{munos2003error, chen2019information}, where the visitation density of an arbitrary policy is close to that of the optimal policy. In contrast, \cite{jin2020sample, efroni2022provable, cai2022sample} consider POMDPs where strategic exploration is necessary. In our work, we follow \cite{jin2020sample, efroni2022provable, cai2022sample} and design strategic exploration to attain sample efficiency in solving the POMDPs. 

%Our work is also related to the previous study of latent state space models. \cite{coates2008learning} recovers a class of latent state space models from observations by the expectation-maximization (EM) algorithm. In contrast, the spectral method \citep{rosencrantz2004learning, hefny2015supervised, azizzadenesheli2016reinforcement, sun2016learning, jin2020sample} proposes to conduct filtering and prediction by solving a system of integral equations directly. In particular, previous works utilize predictive state representations (PSRs) \citep{hefny2015supervised, sun2016learning} as a sufficient representation of interaction history of fixed length and aim to conduct filtering on such predictive states. Our embedding strategy is inspired by the spectral algorithms with predictive states. In particular, our embedding of interaction history is also a predictive state. Nevertheless, unlike the previous analysis of predictive states \citep{hefny2015supervised, sun2016learning}, we do not cast assumptions explicitly on the transition of predictive states (the filtration).~Moreover, we focus on the sample efficiency of learning predictive states via iteratively exploring the environment, whereas previous works typically study PSRs with a fixed trajectory generator \citep{hefny2015supervised, sun2016learning}.

To learn a sufficient embedding for control, we utilize the low-rank transition of POMDPs. Our idea is motivated by the previous analysis of low-rank MDPs \citep{cai2020provably, jin2020provably, ayoub2020model, agarwal2020flambe, modi2021model, uehara2021representation}. In particular, the state transition of a low-rank MDP aligns with that in our low-rank POMDP model. Nevertheless, we remark that such states are observable in a low-rank MDP but are unobservable in POMDPs with the low-rank transition. Such unobservability makes solving a low-rank POMDP much more challenging than solving a low-rank MDP.

\smallsec{Notation.} We denote by $\RR^d_{+}$ the space of $d$-dimensional vectors with nonnegative entries. We denote by $L^p(\cX)$ the $L^p$ space of functions defined on $\cX$. We denote by $\Delta(d)$ the space of $d$-dimensional probability density arrays. We denote by $[H] = \{1, \ldots, H\}$ the index set of size $H$. For a linear operator $M$ mapping from an $L^p$ space to an $L^q$ space, we denote by $\|M\|_{p\mapsto q}$ the operator norm of $M$. For a vector $x\in\RR^d$, we denote by $[x]_i$ the $i$-th entry of $x$.

\section{Partially Observable Markov Decision Process}
We define a partially observable Markov decision process (POMDP) by the following tuple, 
\$
\cM = (\cS, \cA, \cO, \{\PP_h\}_{h\in[H]},  \{\OO_h\}_{h\in[H]}, r, H, \mu_1),
\$
where $H$ is the length of an episode, $\mu_1$ is the initial distribution of state $s_1$, and $\cS$, $\cA$, $\cO$ are the state, action, and observation spaces, respectively. Here $\PP_h(\cdot\given \cdot, \cdot)$ is the transition kernel, $\OO_h(\cdot\given\cdot)$ is the emission kernel, and $r(\cdot)$ is the reward function. In each episode, the agent with the policy $\pi = \{\pi_h\}_{h\in[H]}$ interact with the environment as follows. The environment select an initial state $s_1$ drawn from the distribution $\mu_1$. In the $h$-th step, the agent receives the reward $r(o_h)$ and the observation $o_h$ drawn from the observation density $\OO_h(\cdot\given s_h)$, and makes the decision $a_h = \pi_h(\traj^{h}_1)$ according to the policy $\pi_h$, where $\traj^{h}_1 = \{o_1, a_1, \ldots, a_{h-1}, o_h\}$ is the interaction history. The environment then transits into the next state $s_{h+1}$ drawn from the transition distribution $\PP_h(\cdot\given s_h, a_h)$. The procedure ends until the environment transits into the state $s_{H+1}$. 

In the sequel, we assume that the action space $\cA$ is finite with capacity $|\cA| = A$. Meanwhile, we highlight that the observation and state spaces $\cO$ and $\cS$ are possibly infinite.

\smallsec{Value Functions and Learning Objective.} For a given policy $\pi = \{\pi_h\}_{h\in[H]}$, we define the following value function that captures the expected cumulative rewards from interactions,
\#
V^\pi = \EE_\pi\biggl[\sum^H_{h = 1} r(o_h) \biggr].
\#
Here we denote by $\EE_\pi$ the expectation taken with respect to the policy $\pi$. Our goal is to derive a policy that maximizes the cumulative rewards. In particular, we aim to derive the $\epsilon$-suboptimal policy $\pi$ such that
\$
 V^{\pi^*} - V^\pi\leq \epsilon,
\$
based on minimal interactions with the environment, where $\pi^* = \argmax_\pi V^\pi$ is the optimal policy.

\smallsec{Notations of POMDP.} In the sequel, we introduce notations of the POMDP to simplify the discussion. We define 
\$
\seq{a}{h}{h+k-1} = \{a_h, a_{h+1}, \ldots, a_{h+k-1}\}, \quad \seq{o}{h}{h+k} = \{o_h, o_{h+1}, \ldots, o_{h+k}\}
\$
as the sequences of actions and observations, respectively. Correspondingly, we write $r(\seq{o}{1}{H}) = \sum^H_{h = 1}r(o_h)$ as the cumulative rewards for the observation sequence $\seq{o}{1}{H}$. Meanwhile, we denote by $\traj^{h+k}_h$ the sequence of interactions from the $h$-th step to the $(h+k)$-th step, namely,
\$
\traj^{h+k}_h = \{o_h, a_h, \ldots, o_{h+k-1}, a_{h+k-1}, o_{h+k}\} = \{\seq{a}{h}{h+k-1}, \seq{o}{h}{h+k}\}.
\$
Similarly, we denote by $\utraj^{h+k}_h$ the sequence of interactions from the $h$-th step to the $(h+k)$-th step that includes the latest action $a_{h+k}$, namely,
\$
\utraj^{h+k}_h = \{o_h, a_h, \ldots, o_{h+k}, a_{h+k}\} = \{\seq{a}{h}{h+k}, \seq{o}{h}{h+k}\}.
\$
In addition, with a slight abuse of notation, we define
\$
\PP^\pi(\traj^{h+k}_h) &= \PP^\pi(o_h, \ldots, o_{h+k} \given a_{h}, \ldots, a_{h+k-1})  = \PP^\pi(\seq{o}{h}{h+k} \given  \seq{a}{h}{h+k-1}),\notag\\
\PP^\pi(\traj^{h+k}_h\given s_h) &= \PP^\pi(o_h, \ldots, o_{h+k} \given s_h, a_{h}, \ldots, a_{h+k-1})  = \PP^\pi(\seq{o}{h}{h+k} \given s_h,  \seq{a}{h}{h+k-1}).
\$
\smallsec{Extended POMDP.} To simplify the discussion and notations in our work, we introduce an extension of the POMDP, which allows us to access steps $h$ smaller than zero and larger than the length $H$ of an episode. 

In particular, the interaction of an agent with the extended POMDP starts with a dummy initial state $s_{1-\ell}$ for some $\ell>0$. During the interactions, all the dummy action and observation sequences $\utraj^0_{1-\ell} = \{o_{1-\ell}, a_{1-\ell}, \ldots, o_{0}, a_0\}$ leads to the same initial state distribution $\mu_1$ that defines the POMDP. Moreover, the agent is allowed to interact with the environment for $k$ steps after observing the final observation $o_H$ of an episode. Nevertheless, the agent only collects the reward $r(o_h)$ at steps $h\in[H]$, which leads to the same learning objective as the POMDP. In addition, we denote by $[H]^{+} = \{1-\ell, \ldots, H+k\}$ the set of steps in the extended POMDP. In the sequel, we do not distinguish between a POMDP and an extended POMDP for the simplicity of presentation. 

%We remark that such extension of POMDP grant us additional information in the inference of belief. As an example, we can infer the distribution of the observation $o_2$ based on the interactions $\utraj^{1}_{1-\ell} = (a_{1-\ell}, \ldots, o_{1}, a_1)$. Comparing with the regular POMDP setting, such extended POMDP allows us to infer, for example, the distribution of observation $o_2$ with extra observations $\utraj^{0}_{1-\ell} = (a_{1-\ell}, \ldots, o_{0}, a_0)$ in addition to the

% Such bottleneck factorizes the transition and captures the capacity of information that is needed in solving a POMDP.

% and factorizes the embedding?

% We remark that such bottleneck factors $\{\lat_{h}\}_{h\in[H]}$ is not equivalent with the states $\{s_h\}_{h\in[H]}$. That said, such POMDP with linear latent transition is inherently different from a POMDP with a finite state. In particular, in the POMDP with linear latent transition, the observation $o_h$ at the $h$-th may not be independent of the bottleneck factor $lat_{h+1}$ in the $(h+1)$-th step given $\lat_h$. In other words, it may happen that $o_h \notpropto \lat_{h+1} \given \lat_h$, while it is always the fact that $o_h \indp s_{h+1} \given s_h$. Such dependency happens as both $\lat_{h+1}$ and $o_h$ depends on the state $s_h$, which is not determined given $\lat_h$. Intuitively, such dependency indicates that the intermediate $\lat_h$ does not carry the entire information of the interaction history prior to the $h$-th step. 

\section{A Sufficient Embedding for Prediction and Control}
\label{sec::embedding}
%In this section, we introduce POMDP with low-rank latent transition. We then introduce a sufficient embedding of interaction history, which we utilize to solve POMDP.

%\subsection{A sufficient Embedding for Prediction and Control}
The key of solving a POMDP is the practice of inference, which recovers the density or linear functionals of density (e.g., the value functions) of future observation given the interaction history. To this end, previous approaches \citep{shani2013survey} % cite filtering work
typically maintain a belief, namely, a conditional density $\PP(s_h = \cdot\given \traj^{h}_1)$ of the current state given the interaction history. The typical inference procedure first conducts filtering, namely, calculating the belief at $(h+1)$-th step given the belief at $h$-th step. Upon collecting the belief, the density of future observation is obtained via prediction, which acquires the distribution of future observations based on the distribution of state $s_{h+1}$.

In the case that maintaining a belief or conducting the prediction is intractable, previous approaches establish predictive states \citep{hefny2015supervised, sun2016learning}, % cite PSR work and HMM
which is an embedding that is sufficient for inferring the density of future observations given the interaction history. Such approaches typically recover the filtering of predictive representations by solving moment equations. In particular, \cite{hefny2015supervised, sun2016learning} %PSR
establishes such moment equations based on structural assumptions on the filtering of such predictive states. Similarly, \cite{anandkumar2012method, jin2020sample} % HMM
establishes a sequence of observation operators and recovers the trajectory density via such observation operators.

Motivated by the previous work, we aim to construct a embedding that are both learn-able and sufficient for control. A sufficient embedding for control is the density of the trajectory, namely,
\#\label{eq::def_embedding}
\Phi(\traj^H_1) = \PP(\traj^{H}_1).
\#  
Such an embedding is sufficient as it allows us to estimate the cumulative rewards function $V^\pi$ of an arbitrary given policy $\pi$. Nevertheless, estimating such an embedding is challenging when the length $H$ of an episode and the observation space $\cO$ are large. To this end, we exploit the low-rank structure in the state transition of POMDPs.

\subsection{Low-Rank POMDP}
\begin{assumption}[Low-Rank POMDP]
\label{asu::low_rank_POMDP}
We assume that the transition kernel $\PP_h$ takes the following low-rank form for all $h\in[H]^{+}$,
\$
\PP_h(s_{h+1}\given s_h, a_h) = \psi^*_h(s_{h+1})^\top \phi^*_h(s_{h}, a_h),
\$
where
\$
\psi^*_h: \cS\mapsto \RR^d_{+}, \quad \phi^*_h: \cS\times\cA\mapsto \Delta(d)
\$
are unknown features. 
\end{assumption}
Here recall that we denote by $[H]^{+} = \{1-\ell, \ldots, H+k\}$ the set of steps in the extended POMDP. Note that our low-rank POMDP assumption does not specify the form of emission kernels.~In contrast, we only require the transition kernels of states to be linear in unknown features. %Here recall that we denote by $$. 

\smallsec{Function Approximation.} We highlight that the features in Assumption \ref{asu::low_rank_POMDP} are unknown to us. Correspondingly, we assume that we have access to a parameter space $\Param$ that allows us to fit such features as follows.
\begin{definition}[Function Approximation]
\label{def::parameter}
We define the following function approximation space $\cF^\Param = \{\cF^\Param_h\}_{h\in[H]}$ corresponding to the parameter space $\Param$,
\$
\cF^\Param_h = \bigl\{(\psi^\param_h, \phi^\param_h, \OO^\param_h): \param \in\Param\bigr\}, \quad \forall h\in[H]^{+}.
\$
Here, $\OO^\param_h:\cS\times\cO\mapsto \RR_{+}$ is an emission kernel and $\psi^\param_h: \cS\mapsto \RR^d_{+}$, $\phi^\param_h: \cS\mapsto\Delta(d)$ are features for all $h\in[H]^{+}$ and $\param\in\Param$. In addition, it holds that $\psi^\param(\cdot)^\top\phi^\param(s_h, a_h)$ defines a probability over $s_{h+1}\in\cS$ for all $h\in[H]^{+}$ and $(s_h, a_h)\in\cS\times\cA$.
\end{definition}
Here we denote by $\psi^\param_h, \phi^\param_h, \OO^\param_h$ a parameterization of features and emission kernels. In practice, one typically utilizes linear or neural netowrk parameterization for the features and emission kernels. In the sequel, we write $\PP^{\param}$ and $\PP^{\param, \pi}$ as the probability densities corresponding to the transition dynamics defined by $\{\psi^\param_h, \phi^\param_h, \OO^\param_h\}_{h\in[H]}$ and policy $\pi$, respectively. %Meanwhile, we write $\PP^{\param, \pi}$ as the probability density of trajectory generated by $\pi$ and the transition dynamics defined by $\{\psi^\param_h, \phi^\param_h, \OO^\param_h\}_{h\in[H]^+}$.  In addition,
We impose the following realizability assumption to ensure that the true model belongs to the parameterized function space $\cF^\Param$.
%We assume that the function approximation with the parameters $\param\in\Param$ is exact.
\begin{assumption}[Realizable Parameterization]
\label{asu::exact_paramete}
We assume that there exists a parameter $\param^*\in\Param$, such that $\psi^{\param^*}_h = \psi^*_h$, $\phi^{\param^*}_h = \phi^*_h$, and $\OO^{\param^*}_h = \OO_h$ for all $h\in[H]$.
\end{assumption}
We define the following forward emission operator as a generalization of the emission kernel.
\begin{definition}[Forward Emission Operator]
\label{def::forward_emi_linear}
We define the following forward emission operator $\UU^\param_{h}: L^1(\cS) \mapsto L^1(\cA^{k}\times\cO^{k+1})$ for all $h\in[H]$,
\#\label{eq::def_U_linear}
%(\UU^\param_{h} f)(\seq{o}{h}{h+k}, \seq{a}{h}{h+k-1}) = \int_{\cS} \PP^\param(\seq{o}{h}{h+k} \given s_h, \seq{a}{h}{h+k-1})\cdot f(s_h) \ud s_h, \quad \forall  f\in L^1(\cS),
(\UU^\param_{h} f)(\traj_{h}^{h+k}) = \int_{\cS} \PP^\param(\traj^{h+k}_h \given s_h)\cdot f(s_h) \ud s_h, \quad \forall  f\in L^1(\cS),~\forall \traj_{h}^{h+k}\in\cA^k\times\cO^{k+1}.
\#
Here recall that we denote by $\traj_{h}^{h+k} = \{\seq{a}{h}{h+k-1},\seq{o}{h}{h+k}\}\in\cA^k\times\cO^{k+1}$ the trajectory of interactions. In addition, recall that we define $\PP^\param(\traj^k_h \given s_h) = \PP^\param(\seq{o}{h}{h+k} \given s_h, \seq{a}{h}{h+k-1})$ for notational simplicity.
\end{definition}
We remark that when applying to a belief or a density over state $s_h$, the forward emission operator returns the density of trajectory $\traj^{h+k}_h$ of $k$ steps ahead of the $h$-th step.

\smallsec{Bottleneck Factor Interpretation of Low-Rank Transition.}
Recall that in Assumption \ref{asu::low_rank_POMDP}, the feature $\phi^*_h$ maps from the state-action pair $(s_h, a_h)\in\cS\times\cA$ to a $d$-dimensional simplex in $\Delta(d)$. Equivalently, one can consider the low-rank transition as a latent variable model, where the next state $s_{h+1}$ is generated by first generating a bottleneck factor $q_h\sim \phi^*(s_h, a_h)$ and then generating the next state $s_{h+1}$ by $[\psi^*(\cdot)]_{q_h}$. In other words, the probability array $\phi^*(s_h, a_h)\in\Delta(d)$ induces a transition dynamics from the state-action pair $(s_h, a_h)$ to the bottleneck factor $q_h\in[d]$ as follows,
\$
\PP_h({\lat_h} \given s_{h}, a_{h}) =\bigl[\phi^*_h(s_h, a_h)\bigr]_{{\lat_h}},\quad \forall {\lat_h} \in [d].
\$
Correspondingly, we write $\PP_h(s_{h+1}\given {\lat_h}) = [\psi^*_h(s_{h+1})]_{{\lat_h}}$ the transition probability from the bottleneck factor ${\lat_h}\in[d]$ to the state $s_{h+1}\in\cS$. See Figure \ref{fig::POMDP_Dag} for an illustration of the data generating process with the bottleneck factors. 

%\begin{wrapfigure}{r}{6cm} 
%\vspace{-.15in}
\begin{figure}
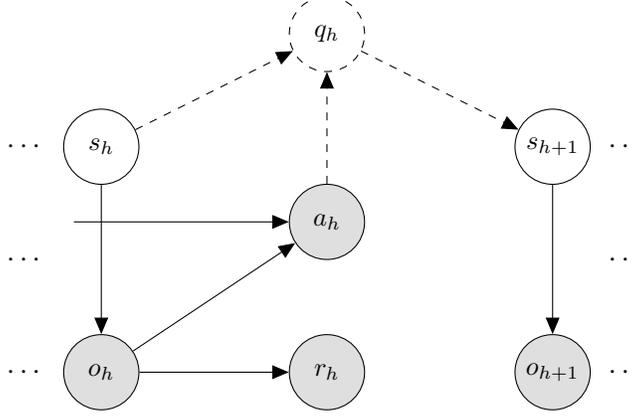

\begin{center}
 \tikz{
 \node at (0, 0) {$\cdots$};\node at (0, -1.5) {$\cdots$};
 \node at (0, -3) {$\cdots$};
 \node at (0.5, -1)(hist){};
 \node[latent, xshift=1cm, minimum size=10mm] (sh) {$s_h$};
 \node[latent, dashed, xshift=4cm, yshift=1.5cm,minimum size=10mm] (qh) {$\lat_h$};
 \node[latent, xshift=7cm, minimum size = 10mm](sh+1){$s_{h+1}$};
 \node[obs, xshift=4cm, yshift=-1cm, minimum size=10mm] (ah) {$a_h$};
 \node[obs, xshift=1cm, yshift=-3cm, minimum size=10mm] (oh) {$o_h$};
 \node[obs, xshift=4cm, yshift=-3cm, minimum size=10mm] (rh) {$r_h$};
 \node[obs, xshift=7cm, yshift=-3cm, minimum size=10mm] (oh+1) {$o_{h+1}$};
 \node at (8, 0) {$\cdots$};\node at (8, -1.5) {$\cdots$};\node at (8, -3) {$\cdots$};
 %\draw[dashed,->] (hist) -- (ah);
 \edge{hist}{ah}
 \edge{sh}{oh};\edge{oh}{ah};\edge{oh}{rh};
 \edge{sh+1}{oh+1};
 %\edge{sh}{sh+1};\edge{ah}{sh+1}
 %\edge{qh}{sh+1};\edge{ah}{qh};\edge{sh}{qh};
 \draw[dashed,->] (qh) -- (sh+1);
 \draw[dashed,->] (ah) -- (qh);
 \draw[dashed,->] (sh) -- (qh);
 }
\end{center}
\caption{The directed acyclic graph (DAG) of a POMDP with low-rank transition. Here $\{s_h, s_{h+1}\}$, $\{o_h, o_{h+1}\}$, $a_h$, $r_h$ are the states, observations, action, and reward, respectively. In addition, we denote by $\lat_h$ the bottleneck factor induced by the low-rank transition, which depends on the state and action pair $(s_h, a_h)$ and determines the density of next state $s_{h+1}$. 
In the DAG, we represent observable and unobservable variables by the shaded and unshaded nodes, respectively. In addition, we use the dashed node and arrows for the latent factor $\lat_h$ and its corresponding transitions, respectively, to differentiate such bottlenect factor from the state of the POMDP.
%indicate that the bottleneck factor $\lat_h$ is induced by the low-rank transition, which is not explicitly {\color{red}used} in the underlying data generating process of the POMDP.
%which is not part of the underlying state generating process
%to differetiate the bottleneck factor from the state
%indicate that q is the latent factor and differnetiate such bottlenect factor from the state of the POMDP.
}\label{fig::POMDP_Dag}
%\vspace{-.30in}
%\end{wrapfigure}
\end{figure}

%Intuitively, such dependency indicates that the intermediate $\lat_h$ does not carry the entire information of the interactions in the $h$-th step, as it cannot infer the observation $o_h$. 
%Nevertheless,  

\smallsec{Understanding Bottleneck Factor.} Utilizing the low-rank structure of the state transition requires us to understand the bottleneck factors $\{\lat_h\}_{h\in[H]}$ defined by the low-rank transition. We highlight that the bottleneck factor $\lat_h$ is a compressed and sufficient factor for inference.
%the bottleneck factor $\lat_h$ carries sufficient information to infer the next state $s_{h+1}$ and, therefore, the future observations. 
In particular, the bottleneck factor $\lat_h$ determines the distribution of next state $s_{h+1}$ through the feature $\psi^{*}_h(s_{h+1} = \cdot) = \PP(s_{h+1} = \cdot\given \lat_h = \cdot)$. Such a property motivate us to obtain our desired embedding via decomposing the density of trajectory based on the feature set $\{\psi^{*}_h\}_{h\in[H]^+}$. To achieve such a decomposition, we first introduce the following sufficiency condition for all the parameterized features $\psi^\param_h$ with $\param\in\Param$.

\begin{assumption}[Future Sufficiency]
\label{asu::lin_future_suff}
We define the mapping $g^\param_h: \cA^k\times\cO^{k+1}\mapsto \RR^d$ for all parameter $\param\in\Param$ and $h\in[H]$ as follows,
\$
g^\param_h = \Bigl[\UU^\param_h \bigl[\psi^\param_{h-1}\bigr]_1, \ldots, \UU^\param_h \bigl[\psi^\param_{h-1}\bigr]_d\Bigr]^\top,
\$
where we denote by $[\psi^\param_{h-1}\bigr]_i$ the $i$-th entry of the mapping $\psi^\param_{h-1}$ for all $i\in[d]$. 
We assume for some $k>0$ that the matrix
\$
M^\param_h = \int_{\cA^{k}\times\cO^{k+1}} g^\param_h(\traj^{h+k}_h)g^\param_h(\traj^{h+k}_h)^\top \ud \traj^{h+k}_h \in\RR^{d\times d}
\$
is invertible. We denote by $M^{\param, \dagger}_h$ the inverse of $M^\param_h$ for all parameter $\param\in\Param$ and $h\in[H]$.
\end{assumption}

Intuitively, the future sufficiency condition in Assumption \ref{asu::lin_future_suff} guarantees that the density of trajectory $\traj^{h+k}_h$ in the future captures the information of the bottleneck variable $\lat_{h-1}$, which further captures the belief at the $h$-th step. To see such a fact, we have the following lemma.

\begin{lemma}[Pseudo-Inverse of Forward Emission]
\label{lem::left_inv}
We define linear operator $\UU^{\param, \dagger}_{h}: L^1(\cA^{k}\times\cO^{k+1}) \mapsto L^1(\cS)$ for all $\param\in\Param$ and $h\in[H]$ as follows,
\#\label{eq::def_dagger_linear}
(\UU^{\param, \dagger}_{h} f)(s_{h}) =\int_{\cA^{k}\times\cO^{k+1}} \psi^\param_{h-1}(s_{h})^\top M^{\param, \dagger}_h g^\param_h(\traj^{h+k}_h)\cdot f(\traj_h^{h+k}) \ud \traj^{h+k}_h,
\#
where $f\in L^1(\cA^{k}\times\cO^{k+1})$ is the input of linear operator $\UU^{\param, \dagger}_{h}$ and $g^\param_h$ is the mapping defined in Assumption \ref{asu::lin_future_suff}. Under Assumptions \ref{asu::low_rank_POMDP} and \ref{asu::lin_future_suff}, it holds for all $h\in[H]$, $\param\in\Param$, and $\pi\in\Pi$ that
\$
\UU^{\param, \dagger}_{h}\UU^{\param}_{h}(\PP^{\param, \pi}_h) = \PP^{\param, \pi}_{h}.
\$
Here $\PP^{\param, \pi}_h \in L^1(\cS)$ maps from all state $s_h\in\cS$ to the probability $\PP^{\param, \pi}_h(s_h)$, which is the probability of visiting the state $s_h$ in the $h$-th step when following the policy $\pi$ and the model defined by parameter $\param$.
\end{lemma}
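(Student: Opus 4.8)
The plan is to reduce the operator identity to a finite-dimensional linear-algebra fact by exploiting the key structural observation that, under the low-rank model, the state visitation density $\PP^{\param,\pi}_h$ lives in the $d$-dimensional subspace of $L^1(\cS)$ spanned by the coordinate functions $[\psi^\param_{h-1}]_1,\ldots,[\psi^\param_{h-1}]_d$. Once this span membership is in hand, applying $\UU^\param_h$ and then $\UU^{\param,\dagger}_h$ reduces to multiplying a coefficient vector in $\RR^d$ by $M^{\param,\dagger}_h M^\param_h$, which Assumption \ref{asu::lin_future_suff} guarantees to be the identity.

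First I would establish that there is a vector $\nu\in\RR^d$, depending on $\param$ and $\pi$ but not on $s_h$, with
\[
\PP^{\param,\pi}_h(s_h) = \psi^\param_{h-1}(s_h)^\top \nu.
\]
To see this, write the marginal state density as a one-step pushforward under the model $\param$,
\[
\PP^{\param,\pi}_h(s_h) = \int_{\cS\times\cA} \PP^\param_{h-1}(s_h \given s_{h-1},a_{h-1})\,\PP^{\param,\pi}(s_{h-1},a_{h-1})\,\ud(s_{h-1},a_{h-1}),
\]
and substitute the low-rank factorization $\PP^\param_{h-1}(s_h\given s_{h-1},a_{h-1}) = \psi^\param_{h-1}(s_h)^\top\phi^\param_{h-1}(s_{h-1},a_{h-1})$ from Assumption \ref{asu::low_rank_POMDP}. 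Since $\psi^\param_{h-1}(s_h)$ does not depend on the integration variables, it factors out, and the claim holds with $\nu = \int \phi^\param_{h-1}(s_{h-1},a_{h-1})\,\PP^{\param,\pi}(s_{h-1},a_{h-1})\,\ud(s_{h-1},a_{h-1})$, which is a convex combination of points in $\Delta(d)$ and hence a well-defined element of $\RR^d$. In the extended POMDP the step $h-1=0$ is well defined, so this representation also covers the boundary case $h=1$.

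Next I would propagate this representation through the two operators. By the linearity of $\UU^\param_h$ (Definition \ref{def::forward_emi_linear}) and the definition of $g^\param_h$ in Assumption \ref{asu::lin_future_suff},
\[
\bigl(\UU^\param_h \PP^{\param,\pi}_h\bigr)(\traj^{h+k}_h) = \sum_{i=1}^d \nu_i\,\bigl(\UU^\param_h [\psi^\param_{h-1}]_i\bigr)(\traj^{h+k}_h) = g^\param_h(\traj^{h+k}_h)^\top \nu.
\]
Feeding this into the definition of $\UU^{\param,\dagger}_h$ and pulling the $\traj$-independent factors outside the integral yields
\[
\bigl(\UU^{\param,\dagger}_h\UU^\param_h \PP^{\param,\pi}_h\bigr)(s_h) = \psi^\param_{h-1}(s_h)^\top M^{\param,\dagger}_h\Bigl(\int g^\param_h(\traj^{h+k}_h)\,g^\param_h(\traj^{h+k}_h)^\top\,\ud\traj^{h+k}_h\Bigr)\nu = \psi^\param_{h-1}(s_h)^\top M^{\param,\dagger}_h M^\param_h\,\nu.
\]
Invoking the invertibility of $M^\param_h$ from Assumption \ref{asu::lin_future_suff}, so that $M^{\param,\dagger}_h M^\param_h = I_d$, the right-hand side collapses to $\psi^\param_{h-1}(s_h)^\top\nu = \PP^{\param,\pi}_h(s_h)$, which is the desired identity.

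The main conceptual step, and the only place the structure of the POMDP genuinely enters, is the span-membership claim of the first step: $\UU^{\param,\dagger}_h$ is \emph{not} a left inverse of $\UU^\param_h$ on all of $L^1(\cS)$, and the identity can hold only on the $d$-dimensional range of $\psi^\param_{h-1}$, so correctly identifying that $\PP^{\param,\pi}_h$ lands in this range is the crux. The remaining work is bookkeeping: interchanging the finite sum and the integrals with the operators and carrying out the cancellation $M^{\param,\dagger}_h M^\param_h = I_d$. The only technical care needed is to justify the Fubini-type interchange of integration order, which is routine since the state density is in $L^1(\cS)$ and the entries of $g^\param_h$ are square-integrable by the very existence of $M^\param_h$.
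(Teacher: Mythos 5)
Your proposal is correct and follows essentially the same route as the paper's own proof: you establish the span-membership identity $\PP^{\param,\pi}_h(\cdot) = \psi^\param_{h-1}(\cdot)^\top W_{h-1}(\param,\pi)$ via the one-step pushforward and the low-rank factorization of Assumption \ref{asu::low_rank_POMDP}, and then collapse $\UU^{\param,\dagger}_h\UU^\param_h$ through the cancellation $M^{\param,\dagger}_h M^\param_h = I_d$, exactly as in \S\ref{sec::pf_lem_left_inv}. Your added remarks on the boundary case $h=1$ (handled by the extended POMDP) and the Fubini interchange are sound refinements that the paper leaves implicit.
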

\begin{proof}
\vskip-5pt
See \S\ref{sec::pf_lem_left_inv} for a detailed proof.
\end{proof}
\vskip-5pt
By Lemma \ref{lem::left_inv}, the forward emission operator $\UU^\param_h$ defined in Definition \ref{def::forward_emi_linear} has a pseudo-inverse $\UU^{\param, \dagger}_{h}$ under the future sufficiency condition in Assumption \ref{asu::lin_future_suff}. Thus, one can identify the belief state by inverting the conditional density of the trajectory $\traj^{h+k}_h$ given the interaction history $\traj^h_{1}$.
%In particular, one can obtain the belief $\PP^{\param, \pi}_h(s_h) = \UU^{\param, \dagger}_{h} \PP^{\param, \pi}(\traj^{h+k}_h)$ in the $h$-th step by inverting the density of the trajectory $\traj^{h+k}_h$. 
More importantly, such invertibility further allows us to decompose the desired embedding $\Phi(\traj^H_1)$ in \eqref{eq::def_embedding} across steps, which we introduce in the sequel.

\subsection{Multi-Step Embedding Decomposition via Bellman Operator}

To accomplish the multi-step decomposition of embedding, we first define the Bellman operator as follows.

\begin{definition}[Bellman Operator]
\label{def::OO_linear}
We define the Bellman operators $\ObsB^\param_h(a_h, o_h): L^1(\cA^k\times\cO^{k+1}) \mapsto L^1(\cA^k\times\cO^{k+1})$ for all $(a_h, o_h)\in\cA\times\cO$ and $h\in[H]$ as follows,
\$
&\bigl(\ObsB^\param_h(a_h, o_h) f\bigr)(\traj^{h+k+1}_{h+1}) = \int_{\cS} \PP^\param(\traj^{h+k+1}_{h}\given s_h)\cdot (\UU^{\param, \dagger}_{h} f)(s_h) \ud s_h,\quad \forall\traj^{h+k+1}_{h+1}\in\cA^{k}\times\cO^{k+1}.
\$
Here recall that we denote by $\traj^{h+k+1}_{h} = \{\seq{o}{h}{h+k+1}, \seq{a}{h}{h+k}\}$ and $\PP^\param(\traj^{h+k+1}_{h}\given s_h) = \PP^\param(\seq{o}{h}{h+k+1}\given s_h, \seq{a}{h}{h+k+1})$ for notational simplicity.
\end{definition}
We call $\ObsB^\param_h(a_h, o_h)$ in Definition \ref{def::OO_linear} a Bellman operator as it performs a temporal transition from the density of trajectory $\traj^{h+k}_h$ to the density of trajectory $\traj^{h+k+1}_{h+1}$ and the observation $o_h$, given that one take action $a_h$ at the $h$-th step. More specifically, Assumption \ref{asu::lin_future_suff} guarantees that the density of trajectory $\traj^{h+k}_h$ identifies the density of $s_h$ in the $h$-th step. The Bellman operator then performs the transition from the density of $s_h$ to the density of the trajectory $\traj^{h+k+1}_{h+1}$ and observation $o_h$ given the action $a_h$. %We highlight that the definition of Bellman operator $\ObsB^\param_h(a_h, o_h)$ hinges on the pseudo-inverse $\UU^{\param, \dagger}_{h}$, whose existence is guaranteed by the future sufficiency assumption in Assumption \ref{asu::lin_future_suff}. 
The following Lemma shows that our desired embedding $\Phi(\traj^H_1)$ can be decomposed into products of the Bellman operators defined in Definition \ref{def::OO_linear}.

\begin{lemma}[Embedding Decomposition]
\label{lem::traj_density_linear}
Under Assumptions \ref{asu::low_rank_POMDP} and \ref{asu::lin_future_suff}, it holds for all the parameter $\param\in\Param$ that
\$
 \PP^{\param}(\traj^H_1) = \frac{1}{A^k}\cdot\int_{\cA^k\times\cO^{k+1}}  \bigl[\ObsB^{\param}_{H}(o_{H}, a_{H})\ldots \ObsB^{\param}_1(o_1, a_1) \vecb^{\param}_1 \bigr](\traj^{H+k+1}_{H+1}) \ud \traj^{H+k+1}_{H+1}.
\$
Here recall that we denote by $\traj^{H+k+1}_{H+1} = \{\seq{a}{H+1}{H+k}, \seq{o}{H+1}{H+k+1}\}$ the dummy future trajectory. Meanwhile, we define the following initial trajectory density,
\$
\vecb^\param_1(\traj^k_1)= \UU^\param_{1} \mu_1 = \PP^{\param}(\traj^k_1), \quad \forall\traj^k_1 \in \cA^{k}\times\cO^{k+1}.
\$
\end{lemma}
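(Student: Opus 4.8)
The plan is to establish, by induction on $h$, the stronger statement that each partial product of Bellman operators reproduces an open-loop trajectory density anchored at the first step. Writing $F^\param_h = \ObsB^\param_h(o_h,a_h)\cdots\ObsB^\param_1(o_1,a_1)\vecb^\param_1$, I would show
\[
F^\param_h(\traj^{h+k+1}_{h+1}) = \PP^\param(\traj^{h+k+1}_1) \qquad \text{for all } h,
\]
where the consumed pairs $(o_1,a_1),\dots,(o_h,a_h)$ play the role of the fixed operator arguments and $\traj^{h+k+1}_{h+1}$ carries the remaining observations and actions. The base case $h=0$ is just the definition $\vecb^\param_1 = \UU^\param_1\mu_1 = \PP^\param(\traj^{1+k}_1)$ from Definition~\ref{def::forward_emi_linear}.

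The workhorse is a strengthened pseudo-inverse identity: for any state density of the form $f = \psi^\param_{h-1}(\cdot)^\top w$ with $w\in\RR^d$, one has $\UU^{\param,\dagger}_h\UU^\param_h f = f$. This follows from the same computation behind Lemma~\ref{lem::left_inv}; indeed $\UU^\param_h f = g^\param_h(\cdot)^\top w$, and the definition of $\UU^{\param,\dagger}_h$ in \eqref{eq::def_dagger_linear} contracts $g^\param_h$ against itself to produce exactly $M^\param_h$, which cancels $M^{\param,\dagger}_h$ by the invertibility in Assumption~\ref{asu::lin_future_suff}, leaving $f$. I flag this as the key step, since it is precisely what allows the pseudo-inverse to act on an unnormalized belief, not merely on the marginal $\PP^{\param,\pi}_h$ handled by Lemma~\ref{lem::left_inv}.

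For the inductive step I would insert the hypothesis $F^\param_{h-1}(\traj^{h+k}_h) = \PP^\param(\traj^{h+k}_1)$ and apply the Markov property twice. Conditioning on $s_h$ and using that, given $s_h$ and the open-loop future actions, the future observations are independent of the past, I obtain
\[
\PP^\param(\traj^{h+k}_1) = \int_\cS \PP^\param(\traj^{h+k}_h \given s_h)\, g^\param_h(s_h)\,\ud s_h = \UU^\param_h(g^\param_h)(\traj^{h+k}_h),
\]
with $g^\param_h(s_h) = \PP^\param(\seq{o}{1}{h-1}, s_h \given \seq{a}{1}{h-1})$. Unrolling this density one step through the low-rank kernel $\psi^\param_{h-1}(s_h)^\top\phi^\param_{h-1}(s_{h-1},a_{h-1})$ exhibits $g^\param_h = \psi^\param_{h-1}(\cdot)^\top w_h$ for an explicit $w_h\in\RR^d$, so $g^\param_h$ lies in the span and the strengthened identity gives $\UU^{\param,\dagger}_h F^\param_{h-1} = g^\param_h$. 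Substituting into Definition~\ref{def::OO_linear} and reversing the same factorization one step further yields $F^\param_h(\traj^{h+k+1}_{h+1}) = \int_\cS \PP^\param(\traj^{h+k+1}_h\given s_h)\, g^\param_h(s_h)\,\ud s_h = \PP^\param(\traj^{h+k+1}_1)$, which closes the induction.

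Finally, at $h=H$ the invariant gives $F^\param_H(\traj^{H+k+1}_{H+1}) = \PP^\param(\traj^{H+k+1}_1)$. Integrating out the $k+1$ dummy future observations marginalizes them from the open-loop density, leaving $\PP^\param(\seq{o}{1}{H}\given \seq{a}{1}{H+k})$; since the trailing actions $a_H,\dots,a_{H+k}$ do not influence $\seq{o}{1}{H}$, this equals $\PP^\param(\traj^H_1)$, and integrating over the $k$ dummy actions multiplies it by $A^k$. Dividing by $A^k$ yields the claim. I expect the main difficulty to be the disciplined bookkeeping of which observations and actions are fixed versus free across the two Markov factorizations, together with verifying that $g^\param_h$ genuinely lands in the span of $\psi^\param_{h-1}$ so that the strengthened form of Lemma~\ref{lem::left_inv} applies; the $1/A^k$ normalization is then routine once the indexing is pinned down.
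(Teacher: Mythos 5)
Your proof is correct and takes essentially the same route as the paper's: an induction (the paper's ``recursive computation'') showing that the partial products $\ObsB^{\param}_h(o_h,a_h)\cdots\ObsB^{\param}_1(o_1,a_1)\vecb^{\param}_1$ equal the open-loop joint density $\PP^{\param}(\traj^{h+k+1}_1)$, followed by marginalizing the dummy future trajectory to produce the $1/A^k$ normalization. The strengthened pseudo-inverse identity you flag --- that $\UU^{\param,\dagger}_h\UU^{\param}_h$ fixes any function of the form $\psi^{\param}_{h-1}(\cdot)^\top w$, not just the normalized marginal of Lemma \ref{lem::left_inv} --- is exactly what the paper's recursion uses implicitly (compare the proof of \eqref{eq::linear_OO_id}), so making it explicit renders the same argument more careful rather than different.
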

\begin{proof}
\vskip-5pt
See \S\ref{sec::pf_lem_traj_density_linear} for a detailed proof.
\end{proof}
\vskip-5pt
By Lemma \ref{lem::traj_density_linear}, we can obtain the desired representation $\Phi(\traj^H_1) = \PP(\traj^H_1)$ based on the product of the Bellman operators. It now remains to estimate the Bellman operators across each step. In the sequel, we introduce an identity that allows us to recover the Bellman operators based on observations.

\smallsec{Estimating Bellman Operator.} In the sequel, we introduce the following notation to simplify our discussion,
\#
\futr_h &= \traj^{h+k}_h = \{o_h, a_h, \ldots, a_{h+k-1}, o_{h+k}\}\in\cA^k\times\cO^{k+1},\label{eq::def_futr_h}\\
\past_{h-1} &= \utraj^{h-1}_{h-\ell} = \{o_{h-\ell}, a_{h-\ell}, \ldots, o_{h-1}, a_{h-1}\} \in \cA^\ell\times\cO^{\ell}.\label{eq::def_past_h}
\#
We first define two density mappings that induce the identity of the Bellman Operator. We define the density mapping $\pone^{\param, \pi}_h: \cA^\ell\times\cO^\ell\mapsto L^1(\cA^k\times\cO^{k+1})$ as follows,
\#\label{eq::def_p1_linear}
%\bigl[\pone^{\param, \pi}_h({\utraj^{h-1}_{h-\ell}})\bigr](\traj^{h+k}_h) = \PP^{\param, \pi}(\traj^{h+k}_{h-\ell}), \quad \forall \traj^{h+k}_{h-\ell} \in\cA^{k+\ell}\times\cO^{k+\ell+1}.
\pone^{\param, \pi}_h(\past_{h-1}) = \PP^{\param, \pi}(\past_{h-1}, \futr_h = \cdot), \quad \forall \past_{h-1} \in\cA^{\ell}\times\cO^\ell.
% = \PP^{\param, \pi}(\traj^{h+k}_{h-\ell}), 
%\quad \forall \traj^{h+k}_{h-\ell} \in\cA^{k+\ell}\times\cO^{k+\ell+1}.
\#
%Here recall that we define $\PP^{\param, \pi}(\traj^{h+k}_{h-\ell}) = \PP(\seq{o}{h-\ell}{h+k}\given \seq{a}{h-\ell}{h+k-1})$ to simplify the notation. 
Intuitively, the density mapping $\pone^{\param, \pi}_h$ maps from an input trajectory %$\utraj^{h-1}_{h-\ell}$ 
$\past_{h-1}$ to the density of 
%$\traj^{h+k}_{h-\ell}$, 
$\futr_h$, which represents the density of $k$-steps interactions following the input trajectory %$\utraj^{h-1}_{h-\ell}$. 
$\past_{h-1}$. 
Similarly, we define the density mapping $\ptwo^{\param, \pi}_h: \cA^{\ell+1}\times\cO^{\ell+1}\mapsto L^1(\cA^k\times\cO^{k+1})$ as follows,
\#\label{eq::def_p2_linear}
%&\bigl[\ptwo^{\param, \pi}_h({\utraj^{h}_{h-\ell}})\bigr](\traj^{h+k+1}_{h+1}) = \PP^\param(\traj^{h+k+1}_{h-\ell}), \quad \forall \traj^{h+k+1}_{h-\ell}\in\cA^{k+\ell+1}\times\cO^{k+\ell+2}.
\ptwo^{\param, \pi}_h(\past_{h-1}, a_h, o_h) =
\PP^{\param, \pi}(\past_{h-1}, a_h, o_h, \futr_{h+1} = \cdot),
\quad \forall (\past_{h-1}, a_h, o_h) \in\cA^{\ell+1}\times\cO^{\ell+1}
%\quad \forall \traj^{h+k+1}_{h-\ell}\in\cA^{k+\ell+1}\times\cO^{k+\ell+2}.
\#
%Here recall that 
%we denote by $\utraj^{h}_{h-\ell} = \{\seq{a}{h-\ell}{h}, \seq{o}{h-\ell}{h}\}$ and $\traj^{h+k+1}_{h-\ell} = \{\seq{a}{h-\ell}{h+k}, \seq{o}{h-\ell}{h+k+1}\}$ for notational simplicity. 
%we define $\PP^{\param, \pi}(\traj^{h+k+1}_{h-\ell}) = \PP(\seq{o}{h-\ell}{h+k+1}\given \seq{a}{h-\ell}{h+k})$ to simplify the notation. 
%Similar to $\pone^{\param, \pi}_h$, the density mapping $\ptwo^{\param, \pi}_h$ maps from an input trajectory %$\utraj^{h}_{h-\ell}$ 
%$\{\past_h, a_h, o_h\}$ 
%to the density of the trajectory 
%$\traj^{h+k+1}_{h-\ell}$. 
%$\futr_{h+1}$. 
Based on the two density mappings defined in \eqref{eq::def_p1_linear} and \eqref{eq::def_p2_linear}, respectively, we have the following identity for all $h\in[H]$ and $\param\in\Param$,
\#\label{eq::linear_OO_id}
%\ObsB^\param_h(a_h, o_h) \pone^{\param, \pi}_h({\utraj^{h-1}_{h-\ell}}) = \ptwo^{\param, \pi}_h({\utraj^{h}_{h-\ell}}), \quad \forall {\utraj^{h}_{h-\ell}}\in\cA^{\ell+1}\times\cO^{\ell+1}.
\ObsB^\param_h(a_h, o_h) \pone^{\param, \pi}_h(\past_{h-1}) = \ptwo^{\param, \pi}_h(\past_{h-1}, a_h, o_h), \quad \forall \past_{h-1}\in\cA^{\ell+1}\times\cO^{\ell+1}.
\#
See \S\ref{sec::pf_linear_OO_id} for the proof of \eqref{eq::linear_OO_id}. %Intuitively, when operating on the density mapping $\pone^{\param}_h$, the Bellman operator $\ObsB^\param_h(a_h, o_h)$ performs a one-step transition from the density of $\futr_{h}$ to the density of $(o_h, a_h, \futr_{h+1})$.
%, given that the agent observes $o_h$ and takes action $a_h$ in the $h$-th step. 
We highlight that the identity in \eqref{eq::linear_OO_id} allows us to estimate the Bellman operator $\ObsB^{\param^*}_h(a_h, o_h)$ under the true parameter $\param^*\in\Param$. In particular, both $\pone^{\param^*, \pi}_h$ and $\ptwo^{\param^*, \pi}_h$ are density mappings involving the observations and actions, and can be estimated based on observable variables from the POMDP. Upon fitting such density mappings, we can recover the Bellman operator $\ObsB^{\param^*}_h(a_h, o_h)$ by solving the identity in \eqref{eq::linear_OO_id}.

\smallsec{An Overview of Embedding Learning.} We now summarize the learning procedure of the embedding. First, we estimate the density mappings defined in \eqref{eq::def_p1_linear} and \eqref{eq::def_p2_linear} under the true parameter $\param^*$ based on interaction history. Second, we estimate the Bellman operators $\{\ObsB^{\param^*}_h(a_h, o_h)\}_{h\in[H]}$ based on the identity in \eqref{eq::linear_OO_id} and the estimated density mappings in the first step. Finally, we recover the embedding $\Phi(\traj^H_1)$ by assembling the Bellman operators according to Lemma \ref{lem::traj_density_linear}.

\section{Algorithm Description of ETC}
\label{sec::algo_decript_linear}
In the sequel, we decribe the procedure of \algoname. In summary, \algosp iteratively (i) interacts with the environment to collect observations, (ii) fits the density mappings defined in \eqref{eq::def_p1_linear} and \eqref{eq::def_p2_linear}, respectively, by observations, (iii) identifies a confidence set of parameters by fitting the Bellman equations according to \eqref{eq::linear_OO_id}, and (iv) conducts optimistic planning based on the fitted embeddings and the associated the confidence set.

To conduct \algoname, we first initialize a sequence of datasets indexed by the step $h\in[H]$ and the action sequences $\seq{a}{h-\ell}{h+k}\in\cA^{k+\ell+1}$,
\$
\cD^0_h(\seq{a}{h-\ell}{h+k}) = \emptyset.
\$
Meanwhile, we initialize a policy $\pi^0\in\Pi$, where $\Pi$ is the class of all deterministic policies. In the sequel, we introduce the update procedure of \algosp in the $t$-th iterate.

\subsection{Data Collection}
\label{sec::data_collection}
We first introduce the data collecting process of an agent with the policy $\pi^{t-1}$ in the $t$-th iterate. For each of the step $h\in[H]$ and the action sequence $\seq{a}{h-\ell}{h+k} \in\cA^{k+\ell+1}$, the agent first execute the policy $\pi^{t-1}$ until the $(h-\ell)$-th step, and collects a sequence of actions and observations as follows,
\$
\sp{t}{}{\seq{a}{1-\ell}{h-\ell-1}} = \bigl\{\sp{t}{}{a_{1-\ell}}, \ldots, \sp{t}{}{a_{h-\ell-1}}\bigr\},
\quad
\sp{t}{}{\seq{o}{1-\ell}{h-\ell}} = \bigl\{\sp{t}{}{o_{1-\ell}}, \ldots, \sp{t}{}{o_{h-\ell}}\bigr\}.
\$
Here we use the superscript $t$ to denote the observations and actions acquired in the $t$-th iterate. Correspondingly, we denote by $\sp{t}{}{\traj_{h-\ell}^{h-1}} = \{\sp{t}{}{\seq{a}{1-\ell}{h-\ell-1}}, \sp{t}{}{\seq{o}{1-\ell}{h-\ell}}\}$ the interaction history from the $(h-\ell)$-th step to the $(h-1)$-th step. Then, the agent execute $\seq{a}{h-\ell}{h+k}$ regardless of the observations and collect the following observation sequence,
\$
\sp{t}{}{\seq{o}{h-\ell+1}{h+k+1}} = \bigl\{\sp{t}{}{o_{h-\ell+1}}, \ldots, \sp{t}{}{o_{h+k+1}}\bigr\}.
\$
Finally, we store the observation sequence $\sp{t}{}{\seq{o}{h-\ell}{h+k+1}}$ generated by fixing the action sequence $\seq{a}{h-\ell}{h+k}$ into a dataset indexed by such action sequence, namely,
\$
\cD^t_h(\seq{a}{h-\ell}{h+k}) \leftarrow \cD^{t-1}_h(\seq{a}{h-\ell}{h+k}) \cup \bigl\{\sp{t}{}{\seq{o}{h-\ell}{h+k+1}}\bigr\}.
\$
\subsection{Density Estimation} 
Upon collecting the data, we follow the embedding learning procedure and fit the density mappings for the estimation of Bellman operator. In practice, various approaches are available in fitting the density by observations, including the maximum likelihood estimation (MLE), the generative adversial approaches, and the reproducing kernel Hilbert space (RKHS) density estimation. In what follows, we unify such density estimation approaches by a density estimation oracle.

%In the sequel, we assume that we have access to a density estimation oracle based on samples.

% various way to estimate density

%introduce an oracle to capture such density estimation process for generality

% mle example, 
% sun wen: w= log|\Phi|

% gan: sun wen: Flambe, w = log|\Phi|

\begin{assumption}[Density Estimation Oracle]
\label{asu::density_est}
We assume that we have access to a density estimation oracle $\est(\cdot)$. Moreover, for all $\delta>0$ and dataset $\cD$ drawn from the density $p$ of size $n$ following a martingale process, we assume that
%We assume that we have access to a density estimation oracle $\est(\cdot)$. Moreover, for all $\delta>0$ and dataset $\cD$ drawn from $X_i \sim p$ for $i\in[n]$, we have
\$
\|\est(\cD) - p\|_1 \leq C\cdot \sqrt{w_{\est}\cdot \log(1/\delta)/n}
\$
with probability at least $1 - \delta$. Here $C>0$ is an absolute constant and $w_{\est}$ is a parameter that depends on the density estimation oracle $\est(\cdot)$.
\end{assumption}
We highlight that such convergence property can be achieved by various density estimations. In particular, when the function approximation space $\cP$ of $\est(\cdot)$ is finite, Assumption \ref{asu::density_est} holds for the maximum likelihood estimation (MLE) and the generative adversial approach with $w_{\est} = \log|\cP|$ \citep{geer2000empirical, zhang2006e, agarwal2020flambe}. Meanwhile, $w_{\est}$ scales with the entropy integral of $\cP$ endowed with the Hellinger distance if $\cP$ is infinite \citep{geer2000empirical, zhang2006e}. In addition, Assumption \ref{asu::density_est} holds for the RKHS density estimation \citep{gretton2005measuring,smola2007hilbert,cai2022sample} with  $w_{\est} = \poly(d)$, where $d$ is rank of the low-rank transition \citep{cai2022sample}.

% MLE rate
%and the generative adversarial approaches \citep{}%GAN
%. In particular, for the MLE estimator, the parameter $w_{\est}$ scales with the capacity of candidate density function class \citep{}. %We remark that our theory hinges only on the rate of convergence that scales with $\sqrt{1/n}$, which 

We now fit the density mappings based on the density estimation oracle. For each step $h\in[H]$ and action sequence $\seq{a}{h-\ell}{h+k} \in\cA^{k+\ell+1}$, we first fit the density of trajectory as follows,
\$
\hat\PP^t_{h}(\cdot\given \seq{a}{h-\ell}{h+k}) =  \est\bigl(\cD^t_h(\seq{a}{h-\ell}{h+k})\bigr),
\$
where the dataset $\cD^t_h$ is updated based on the data collection procedure described in \S\ref{sec::data_collection}. Meanwhile, we define the following density mappings for the estimation of Bellman operators,
\#
\bigl[\hat\pone^t_h({\utraj^{h-1}_{h-\ell}})\bigr](\traj^{h+k}_h) &= \hat\PP^t_h(\traj^{h+k}_{h-\ell}),\label{eq::def_density_est_p1_linear}\\
\bigl[\hat \ptwo^t_h(\utraj^h_{h-\ell})\bigr](\traj^{h+k+1}_{h+1}) &= \hat\PP^t_h(\traj^{h+k+1}_{h-\ell})\label{eq::def_density_est_p2_linear}.
\#
Here recall that we define the trajectories $\utraj^{h}_{h-\ell} = \{\seq{a}{h-\ell}{h}, \seq{o}{h-\ell}{h}\}$ and $\traj^{h+k+1}_{h-\ell} = \{\seq{a}{h-\ell}{h+k}, \seq{o}{h-\ell}{h+k+1}\}$. Meanwhile, we write $\hat\PP^t_h(\traj^{h+k+1}_{h-\ell}) = \hat\PP^t(\seq{o}{h-\ell}{h+k+1}\given \seq{a}{h-\ell}{h+k})$ for notational simplicity. We remark that the density mappings $\hat\pone^t_h$ and $\hat \ptwo^t_h$ are estimations of the density mappings defined in \eqref{eq::def_p1_linear} and \eqref{eq::def_p2_linear}, respectively, under the true parameter $\param^*$ and the mixing policy that collects the sample. We then estimate the Bellman operators by minimizing the following objective,
\#\label{eq::def_obj_Bellman}
L^t_h(\param) = \sup_{\seq{a}{h-\ell}{h}\in\cA^{\ell+1}}\int_{\cO^{\ell+1}}\|\ObsB^{\param}_h(a_h, o_h)\hat\pone^t_h({\utraj^{h-1}_{h-\ell}}) - \hat\ptwo^t_h({\utraj^{h}_{h-\ell}})\|_1 \ud \seq{o}{h-\ell}{h}.
\#
We remark that the objective defined in \eqref{eq::def_obj_Bellman} is motivated by the identity in \eqref{eq::linear_OO_id}. In what follows, we introduce an exploration procedure based on the objective defined in \eqref{eq::def_obj_Bellman}. In addition, we acquire the estimation of initial trajectory density 
$
\hat\vecb^t_1(\traj^k_1) = \hat\PP^t_{1}(\traj^k_1)
$ by marginalizing the dummy past trajectory $\traj^0_{1-\ell}$ of $\hat\PP^t_{1}$. 

\subsection{Optimistic Planning} We remark that the objective defined in \eqref{eq::def_obj_Bellman} encapsulates the uncertainty in the estimation of the corresponding Bellman operator $\ObsB^{\param}_h(a_h, o_h)$. In particular, a smaller objective $L^t_h(\param)$ yields a higher confidence that $\param$ is close to the true parameter $\param^*$. Thus, we define the following confidence set of parameters,
\#\label{eq::def_CI_linear}
\cC^t = \Bigl\{\param\in\Param: \max\bigl\{\|\vecb^\param_1 - \hat\vecb^t_1\|_1, L^t_h(\param)\bigr\} \leq \beta_t\cdot \sqrt{1/t},\quad\forall h\in[H] \Bigr\},
\#
where $\beta_t$ is the tuning parameter in the $t$-th iterate. Meanwhile, for each parameter $\param\in\Param$, we can estimate the embedding
\$
\Phi^\param(\traj^H_1) = \PP^\param(\traj^H_1)
\$
based on the Bellman operators $\{\BB^\param_h\}_{h\in[H]}$ and Lemma \ref{lem::traj_density_linear}. Such embedding further allows us to evaluate a policy as follows,
\$
V^\pi(\param) = \int_{\cO^H} r(\seq{o}{1}{H})\cdot\PP^\param\bigl(\seq{o}{1}{H} \given \seq{(a^\pi)}{1}{H}\bigr) \ud \seq{o}{1}{H} = \int_{\cO^H}  r(\seq{o}{1}{H})\cdot \Phi^\param\bigl(\seq{o}{1}{H}, \seq{(a^\pi)}{1}{H}\bigr)\ud \seq{o}{1}{H},
\$
where we define $V^\pi(\param)$ as the cumulative rewards of $\pi$ in the POMDP induced by the parameter $\param\in\Param$. Meanwhile, we define $\seq{(a^\pi)}{1}{H} = (a^\pi_1, \ldots, a^\pi_{H})$, where the actions $a^\pi_{h}$ are the action taken by the deterministic policy $\pi$ in the $h$-th step given the observations. 

To conduct optimistic planning, we seek for the policy that maximizes the return among all parameters $\param\in\cC^t$ and the corresponding features. The update of policy takes the following form,
\$
\pi^t \leftarrow \argmax_{\pi\in\Pi}\max_{\param\in\cC^t} V^\pi(\param),
\$
where we denote by $\Pi$ the set of all deterministic policies. We summarize \algosp in Algorithm \ref{alg::POMDP_linear}.

\begin{algorithm}[htpb]
\caption{Embed to Control}
\label{alg::POMDP_linear}
\begin{algorithmic}[1]
\REQUIRE Number of iterates $T$. A set of tuning parameters $\{\beta_t\}_{t\in[T]}$.
\STATE{\bf Initialization:} Set $\pi_0$ as a deterministic policy. Set the dataset $\cD^0_{h}(\seq{a}{h-\ell}{h+k})$ as an empty set for all $(h, \seq{a}{h-\ell}{h+k})\in[H]\times\cA^{k+\ell+1}$.
\FOR{$t\in[T]$}
\FOR{$(h, \seq{a}{h-\ell}{h+k})\in[H]\times\cA^{k+\ell+1}$}
\STATE Start a new episode from the $(1-\ell)$-th step.
\STATE Execute policy $\pi^{t-1}$ until the $(h-\ell)$-th step and receive the observations $\sp{t}{}{\seq{o}{1-\ell}{h-\ell}}$.
\STATE Execute the action sequence $\seq{a}{h-\ell}{h+k}$ regardless of the observations and receive the observations $\sp{t}{}{\seq{o}{h-\ell+1}{h+k+1}}$.
\STATE Update the dataset $
\cD^t_h(\seq{a}{h-\ell}{h+k}) \leftarrow \cD^{t-1}_h(\seq{a}{h-\ell}{h+k}) \cup \bigl\{\sp{t}{}{\seq{o}{h-\ell}{h+k+1}}\bigr\}.
$
\ENDFOR
\STATE Estimate the density of trajectory $\hat\PP^t_h(\cdot\given \seq{a}{h-\ell}{h+k}) \leftarrow \est\bigl(\cD^t(\seq{a}{h-\ell}{h+k})\bigr)$ for all $h\in[H]$.
\STATE Update the density mappings $\hat\pone^t_h$ and $\hat\ptwo^t_h$ as follows,
\vskip-10pt
\$
%\bigl[\hat\pone^t_h({\utraj^{h-1}_{h-\ell}})\bigr](\traj^{h+k}_h) = \hat\PP^t_h(\traj^{h+k}_{h-\ell}),\qquad \bigl[\hat \ptwo^t_h(\utraj^h_{h-\ell})\bigr](\traj^{h+k+1}_{h+1}) = \hat\PP^t_h(\traj^{h+k+1}_{h-\ell}).
\hat\pone^t_h(\past_{h-1}) = \hat\PP^t_h(\past_{h-1}, \futr_{h} = \cdot),\qquad \hat\ptwo^t_h(\past_{h-1}, a_h, o_h) = \hat\PP^t_h(\past_{h-1}, a_h, o_h, \futr_{h+1} = \cdot).
\$
%\vskip-20pt
\STATE Update the initial density estimation $\hat\vecb^t_1(\traj^H_1) \leftarrow \hat\PP^t(\traj^H_1)$.
\STATE Update the confidence set $\cC^t$ by \eqref{eq::def_CI_linear}.
\STATE Update the policy $
\pi^t \leftarrow \argmax_{\pi\in\Pi}\max_{\param\in\cC^t} V^\pi(\param)
$. %by optimistic planning with ${\cC^t}$.
\ENDFOR
\STATE {\bf Output:} policy set $\{\pi^t\}_{t\in[T]}$.
\end{algorithmic}
\end{algorithm}

\section{Analysis}
\label{sec::analysis}
In what follows, we present the sample complexity analysis of \algosp presented in Algorithm \ref{alg::POMDP_linear}. Our analysis hinges on the following assumptions.

\begin{assumption}[Bounded Pseudo-Inverse]
\label{asu::bdd_left_inv_linear}
%We assume further that 
%$
%\ZZ^\param_h \UU^\param_h (\psi^\param_{h-1}(\cdot)^\top v) = %v
%$
%for all $v\in\RR^d$ and $\param\in\Param$. In addition, we assume that $\|\ZZ^\param_h\|_{1\mapsto1} \leq \nu$ for all $\param\in\Param$ and $h\in[H]$, where $\nu > 0$ is an absolute constant.
We assume that $\|\UU^{\param,\dagger}_h\|_{1\mapsto1} \leq \nu$ for all $\param\in\Param$ and $h\in[H]$, where $\nu > 0$ is an absolute constant.
\end{assumption}
We remark that the upper bound of the pseudo-inverse in Assumption \ref{asu::bdd_left_inv_linear} quantifies the fundamental difficulty of solving the POMDP. In particular, the pseudo-inverse of forward emission recovers the state density at the $h$-th step based on the trajectory $\traj^{h+k}_h$ from the $h$-th step to the $(h+k)$-th step. Thus, the upper bound $\nu$ on such pseudo-inverse operator characterizes how ill-conditioned the belief recovery task is based on the trajectory $\traj^{h+k}_h$. In what follows, we impose a similar past sufficiency assumption.

\iffalse
\begin{definition}[Reverse Emission]
\label{def::op_F_linear}
We define for all $h\in[H]$ the following linear operator $\ObsF^{\param, \pi}_h: \RR^d \mapsto L^1(\cO^\ell\times\cA^\ell)$ for all $h\in[H]$, $\pi\in\Pi$, and $\param\in\Param$,
\$
(\ObsF^{\param, \pi}_h v)({\utraj^{h-1}_{h-\ell}}) = \sum_{\lat_{h-1} \in[d]} [v]_{\lat_{h-1}} \cdot \PP^{\param, \pi}(\seq{o}{h-\ell}{h-1}\given {\lat_{h-1}}, \seq{a}{h-\ell}{h-1}), \quad \forall v\in\RR^d,
\$
where $({\utraj^{h-1}_{h-\ell}})\in\cA^\ell\times\cO^\ell$. Here recall that we write $\PP^{\param, \pi}$ as the probability density of trajectory generated by $\pi$ and the transition dynamics defined by $\{\psi^\param_h, \phi^\param_h, \OO^\param_h\}_{h\in[H]}$.
\end{definition}
We call the operator $\ObsF^{\param, \pi}_h$ the reverse emission operator as it maps from the density of bottleneck factors $q_{h-1}$ in the $(h-1)$-th step to the density of the trajectory $\utraj^{h-1}_{h-\ell}$ from the $(h-\ell)$-th step to the $(h-1)$-th step. In what follows, we impose a boundedness assumption on the left inverse of such reverse emission operator, which mirrors the bounded pseudo-inverse assumption in Assumption \ref{asu::bdd_left_inv_linear}.
\fi

\begin{assumption}[Past Sufficiency]
\label{asu::inv_ROO_linear}
We define for all $h\in[H]$ the following reverse emission operator $\ObsF^{\param, \pi}_h: \RR^d \mapsto L^1(\cO^\ell\times\cA^\ell)$ for all $h\in[H]$, $\pi\in\Pi$, and $\param\in\Param$,
\$
(\ObsF^{\param, \pi}_h v)({\utraj^{h-1}_{h-\ell}}) = \sum_{\lat_{h-1} \in[d]} [v]_{\lat_{h-1}} \cdot \PP^{\param, \pi}(\seq{o}{h-\ell}{h-1}\given {\lat_{h-1}}, \seq{a}{h-\ell}{h-1}), \quad \forall v\in\RR^d,
\$
where $({\utraj^{h-1}_{h-\ell}})\in\cA^\ell\times\cO^\ell$. We assume for some $\ell>0$ that the operator $\ObsF^{\param, \pi}_h$ is left invertible for all $h\in[H]$, $\pi\in\Pi$, and $\param\in\Param$. We denote by $\ObsF^{\param, \pi, \dagger}_h$ the left inverse of $\ObsF^{\param, \pi}_h$. We assume further that $\|\ObsF^{\param, \pi, \dagger}_h\|_{1\mapsto1} \leq \gamma$ for all $h\in[H]$, $\pi\in\Pi$, and $\param\in\Param$, where $\gamma > 0$ is an absolute constant.
\end{assumption}
We remark that the left inverse $\ObsF^{\param, \pi, \dagger}_h$ of reverse emission operator $\ObsF^{\param, \pi}_h$ recovers the density of the bottleneck factor $q_{h-1}$ based on the density of trajectory $\utraj^{h-1}_{h-\ell}$ from the $(h-\ell)$-th step to the $(h-1)$-th step. Intuitively, the past sufficiency assumption in Assumption \ref{asu::inv_ROO_linear} guarantees that the density of trajectory $\utraj^{h-1}_{h-\ell}$ from the past captures sufficient information of the bottleneck factor $q_{h-1}$, which further determines the state distribution at the $h$-th step. Thus, similar to the upper bound $\nu$ in Assumption \ref{asu::bdd_left_inv_linear}, the upper bound $\gamma$ in Assumption \ref{asu::inv_ROO_linear} characterizes how ill-conditioned the belief recovery task is based on the trajectory $\utraj^{h-1}_{h-\ell}$ generated by the policy $\pi$. 

In what follows, we analyze the mixture policy $\overline\pi^T$ of the policy set $\{\pi^t\}_{t\in[T]}$ returned by \algosp in Algorithmn \ref{alg::POMDP_linear}. In particular, the mixture policy $\overline\pi^T$ is executed by first sampling a policy $\pi$ uniformly from the policy set $\{\pi^t\}_{t\in[T]}$ in the beginning of an episode, and then executing $\pi$ throughout the episode.
%We now present the sample efficiency of Algorithm \ref{alg::POMDP_linear} in the following theorem.

%Similar to Assumption \ref{asu::bdd_left_inv_linear}, the upper bound $\gamma$ in Assumption \ref{asu::inv_ROO_linear} quantifies the fundamental difficulty of solving the POMDP. In particular, the left inverse $\ObsF^{\param, \pi, \dagger}_h$ of reverse emission operator $\ObsF^{\param, \pi}_h$ recovers the density of the bottleneck factor $q_{h-1}$ based on the density of trajectory $\utraj^{h-1}_{h-\ell}$ from the $(h-\ell)$-th step to the $(h-1)$-th step. We remark that the bottleneck factor $q_{h-1}$ contains sufficient information to infer the belief at $(h-1)$-th step. Thus, similar to the upper bound $\nu$ in Assumption \ref{asu::bdd_left_inv_linear}, the upper bound $\gamma$ in Assumption \ref{asu::inv_ROO_linear} characterizes how ill-conditioned the belief recovery task is based on the trajectory $\utraj^{h-1}_{h-\ell}$ generated by the policy $\pi$. We now present the sample efficiency of Algorithm \ref{alg::POMDP_linear} in the following theorem.

\begin{theorem}
\label{thm::sample_complexity_linear}
Let $\overline\pi^T$ be the mixture policy of the policy set $\{\pi^t\}_{t\in[T]}$ returned by Algorithm \ref{alg::POMDP_linear}. Let $\beta_t = (\nu+1)\cdot A^{2k}\cdot \sqrt{w_{\est}\cdot(k+\ell)\cdot\log(H\cdot A\cdot T)}$ for all $t\in[T]$ and
\$
T = \cO\bigl(\gamma^2\cdot \nu^4\cdot d^2\cdot w^2_{\est}\cdot H^2\cdot A^{2(2k+\ell)}\cdot(k+\ell)\cdot \log( H\cdot A/\epsilon)/\epsilon^2 \bigr).
\$
Under Assumptions \ref{asu::low_rank_POMDP}, \ref{asu::lin_future_suff}, \ref{asu::density_est}, \ref{asu::bdd_left_inv_linear}, and \ref{asu::inv_ROO_linear}, it holds with probability at least $1-\delta$ that $\overline\pi^T$ is $\epsilon$-suboptimal.
\end{theorem}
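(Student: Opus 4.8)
The plan is to follow the optimism-in-the-face-of-uncertainty template in three stages: (i) show that the confidence sets $\{\cC^t\}_{t\in[T]}$ defined in \eqref{eq::def_CI_linear} contain the true parameter $\param^*$ with high probability; (ii) use optimism to reduce the suboptimality of each $\pi^t$ to a model-evaluation gap between the optimistic model $\param^t$ and the true model $\param^*$ along $\pi^t$; and (iii) bound that gap by the estimation error of the Bellman operators, which shrinks like $\sqrt{1/t}$, so that averaging over $t\in[T]$ yields an $O(1/\sqrt{T})$ regret for the mixture policy $\overline\pi^T$ and hence $\epsilon$-suboptimality for the stated $T$.

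First I would establish validity of the confidence set. For $\param^*$ the Bellman identity \eqref{eq::linear_OO_id} holds exactly, so the population version of the objective $L^t_h(\param^*)$ in \eqref{eq::def_obj_Bellman} vanishes when the true density mappings $\pone^{\param^*,\pi},\ptwo^{\param^*,\pi}$ are used; the empirical objective is therefore controlled by the gap between $\hat\pone^t_h,\hat\ptwo^t_h$ and their population counterparts. Invoking the density-estimation oracle of Assumption \ref{asu::density_est} on each dataset $\cD^t_h(\seq{a}{h-\ell}{h+k})$ (whose size grows linearly in $t$ along a martingale) and taking a union bound over the $O(HA^{k+\ell+1})$ indices and over $t\in[T]$ gives an $\ell^1$ error of order $\sqrt{w_{\est}(k+\ell)\log(HAT)/t}$. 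Amplifying by the $1\mapsto1$ operator norm of $\ObsB^{\param^*}_h$, which carries the pseudo-inverse bound $\nu$ through Definition \ref{def::OO_linear} and Assumption \ref{asu::bdd_left_inv_linear}, and by the $A^{2k}$ factor from the supremum over action sequences in \eqref{eq::def_obj_Bellman}, yields $L^t_h(\param^*)\le\beta_t\sqrt{1/t}$; the same estimate controls $\|\vecb^{\param^*}_1-\hat\vecb^t_1\|_1$, so $\param^*\in\cC^t$ for all $t$ with probability at least $1-\delta$ for the stated $\beta_t$.

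On this event, optimism gives for the optimistic pair $(\pi^t,\param^t)=\argmax_{\pi,\param\in\cC^t}V^\pi(\param)$ that $V^{\pi^t}(\param^t)\ge V^{\pi^*}(\param^*)=V^{\pi^*}$, whence $V^{\pi^*}-V^{\pi^t}\le V^{\pi^t}(\param^t)-V^{\pi^t}(\param^*)$. Since each $V^\pi(\param)$ is linear in the trajectory embedding $\Phi^\param(\traj^H_1)=\PP^\param(\traj^H_1)$ of \eqref{eq::def_embedding} against a bounded reward, this is at most a constant times the embedding difference $\Phi^{\param^t}-\Phi^{\param^*}$ integrated along $\pi^t$. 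I then expand this difference using the multi-step decomposition of Lemma \ref{lem::traj_density_linear}: writing each embedding as a product of Bellman operators and telescoping, the difference becomes a sum over $h\in[H]$ in which every factor but one is a bounded forward operator and the remaining factor is $(\ObsB^{\param^t}_h-\ObsB^{\param^*}_h)$ applied to a belief propagated under $\param^*$ and $\pi^t$. The operator-norm bookkeeping here produces the powers of $A^{k}$ and $\nu$.

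The crux — and the step I expect to be the main obstacle — is converting the confidence-set control of $(\ObsB^{\param^t}_h-\ObsB^{\param^*}_h)$ on the \emph{exploration} density $\hat\pone^t_h$, generated by $\pi^{t-1}$, into control on the belief appearing under the \emph{evaluated} policy $\pi^t$. The key structural fact is that, by future sufficiency and Lemma \ref{lem::left_inv}, every such belief is the image under a forward emission of a $d$-dimensional bottleneck-factor density, so all relevant inputs to the Bellman operator lie in a fixed $d$-dimensional subspace. Since $\param^t,\param^*\in\cC^t$, the identity \eqref{eq::linear_OO_id} gives $\|(\ObsB^{\param^t}_h-\ObsB^{\param^*}_h)\hat\pone^t_h\|_1\le 2\beta_t\sqrt{1/t}$ after integrating over the past, and Assumption \ref{asu::inv_ROO_linear} (past sufficiency) lets me invert the reverse emission $\ObsF^{\param,\pi}_h$ to re-express $\pi^t$'s factor density in the basis of the explored past densities, transferring the bound at the cost of the left-inverse factor $\gamma$; this is precisely the change of measure between consecutive policies, resolved by the $d$-dimensionality of the belief together with $\gamma$ rather than by an elliptical-potential argument. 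Collecting the amplification factors (powers of $\gamma,\nu,A,d,H$) gives $V^{\pi^*}-V^{\pi^t}\lesssim (\text{poly})\cdot\beta_t\sqrt{1/t}$ on the good event; averaging over $t\in[T]$ and using $\sum_{t\le T}1/\sqrt t\lesssim\sqrt T$ yields $V^{\pi^*}-V^{\overline\pi^T}\lesssim (\text{poly})\cdot\beta_T/\sqrt T$, and substituting $\beta_T$ and solving for $T=\cO(\gamma^2\nu^4 d^2 w_{\est}^2 H^2 A^{2(2k+\ell)}(k+\ell)\log(HA/\epsilon)/\epsilon^2)$ makes the right-hand side at most $\epsilon$, as claimed.
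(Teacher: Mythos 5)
Your stages (i) and (ii), and your telescoping performance-difference decomposition, match the paper's proof (Lemmas \ref{lem::good_event_linear} and \ref{lem::perf_diff_linear}), but there is a genuine gap at exactly the step you flag as the crux. The past-sufficiency inversion does not perform the change of measure you attribute to it. What Assumption \ref{asu::inv_ROO_linear} buys (Part II of the proof of Lemma \ref{lem::good_event_linear}) is a transfer \emph{within a fixed data-collection distribution}: since $\hat\pone^t_h$ estimates $\pone^{\param^*,\overline\pi^t}_h$, the density under the \emph{historical mixture} $\overline\pi^t$ of $\pi^0,\dots,\pi^{t-1}$, inverting $\ObsF^{\param^*,\overline\pi^t}_h$ converts the confidence-set bound on $(\ObsB^{\param^t}_h-\ObsB^{\param^*}_h)\hat\pone^t_h$ into a bound on $\sum_{\lat_{h-1}\in[d]}\|u^t_{h,\lat_{h-1}}\|_1\cdot\PP^{\param^*,t}(\lat_{h-1}\given\seq{a}{h-\ell}{h-1})$, i.e., weighted by the \emph{mixture's} bottleneck-factor visitation, as in \eqref{eq::good_event_linear_II}. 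The performance-difference bound, however, needs the weight $\PP^{\param^*,\pi^t}(\lat_{h-1}\given\cdot)$ under the newly selected policy $\pi^t$, which is not in the mixture. Running your route directly --- inverting $\ObsF^{\param^*,\pi^t}_h$ instead --- reduces the target to $\|(\ObsB^{\param^t}_h-\ObsB^{\param^*}_h)\pone^{\param^*,\pi^t}_h\|_1$, but the confidence set certifies operator agreement only on the mixture density $\pone^{\param^*,\overline\pi^t}_h$, and $\pone^{\param^*,\pi^t}_h$ can be far from it. Neither $\gamma$ nor the $d$-dimensionality of the factor space closes this: $\pi^t$ may concentrate on a factor $\lat_{h-1}$ to which the mixture assigns mass $\cO(1/t)$, inflating the transfer by a factor of $t$, so your claimed per-iterate bound $V^{\pi^*}-V^{\pi^t}\lesssim\beta_t\cdot\sqrt{1/t}$ is in general false pointwise in $t$.

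The paper closes this gap with an amortized, on-average argument: Lemma \ref{lem::sum_trick} (Lemma 16 of \cite{jin2020sample}), applied with $z_t$ the operator-difference norms and $w_t=\PP^{\param^*,\pi^t}(\lat_{h-1}\given\cdot)$, exploiting that $\pi^t$ joins the mixture at round $t+1$, so $w_t$ cannot repeatedly exceed its running average $S_{t-1}$. This is precisely the simplex analogue of the elliptical-potential argument you dismissed, and it is what produces the extra $d$ (summing the trick over $\lat_{h-1}\in[d]$) and $\log T$ factors appearing in \eqref{eq::pf_thm_linear_eq3} and hence in the final choice of $T$ --- factors your accounting omits. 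Only the average $\frac1T\sum_{t=1}^T\bigl(V^{\pi^*}-V^{\pi^t}\bigr)$ is controlled, which suffices for the mixture policy $\overline\pi^T$; to repair your proof, replace the per-iterate transfer in your crux paragraph with this averaging lemma and propagate the resulting $d\cdot\log T$ into your bookkeeping.
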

\begin{proof}
\vskip-5pt
See \S\ref{sec::pf_thm_sample_complexity_linear} for a detailed proof.
\end{proof}
\vskip-5pt
In Theorem \ref{thm::sample_complexity_linear}, we fix the lengths of future and past trajectories $k$ and $\ell$, respectively, such that Assumptions \ref{asu::lin_future_suff} and \ref{asu::inv_ROO_linear} holds.
Theorem \ref{thm::sample_complexity_linear} shows that the mixture policy $\overline\pi^T$ of the policy set $\{\pi^t\}_{t\in[T]}$ returned by \algosp is $\epsilon$-suboptimal if the number of iterations $T$ scales with $\cO(1/\epsilon^2)$. We remark that such a dependency regarding $\epsilon$ is information-therotically optimal for reinforcement learning in MDPs \citep{ayoub2020model, agarwal2020flambe, modi2021model, uehara2021representation}, %cite MDP sample efficiency
which is a special case of POMDPs. In addition, the sample complexity $T$ depends polynomially on the length of horizon $H$, number of actions $A$, the dimension $d$ of the low-rank transition in Assumption \ref{asu::low_rank_POMDP}, and the upper bounds $\nu$ and $\gamma$ in Assumptions \ref{asu::bdd_left_inv_linear} and \ref{asu::inv_ROO_linear}, respectively. We highlight that the sample complexity depends on the observation and state spaces only through the dimension $d$ of the low-rank transition, extending the previous sample efficiency analysis of tabular POMDPs \citep{azizzadenesheli2016reinforcement, jin2020sample}. %tabular POMDP 
In addition, the sample complexity depends on the upper bounds of the operator norms $\nu$ and $\gamma$ in Assumptions \ref{asu::bdd_left_inv_linear} and \ref{asu::inv_ROO_linear}, respectively, which quantify the fundamental difficulty of solving the POMDP. See \S\ref{sec::tabular_POMDP} for the analysis under the tabular POMDP setting.

\section{Conclusion}
In this paper, we propose Embed to Control (ETC) as a unified framework for embedding and control in POMDPs. In particular, by exploiting the low-rank transition and the future sufficiency condition, we decompose the embedding learning into the learning of Bellman operators across multiple steps. By assembling the Bellman operators, we identify a sufficient embedding for the control in the POMDP. Moreover, we identify a confidence set of parameters fitting the Bellman operators, which further allows us to conduct exploration. Our analysis shows that ETC attains the $\cO(1/\epsilon^2)$ sample complexity to attain an $\epsilon$-suboptimal policy. To our best knowledge, we provide the first sample efficiency analysis for representation learning in POMDPs with infinite observation and state spaces. 

\section*{Acknowledgements}
Zhaoran Wang acknowledges National Science Foundation (Awards 2048075, 2008827, 2015568, 1934931), Simons Institute (Theory of Reinforcement Learning), Amazon, J.P. Morgan, and Two Sigma for their supports.

%In particular, by the low-rank property of transition, we 

%To this end, we propose a reinforcement learning algorithm named Embed to Control (ETC), which learns the representation at two levels while optimizing the policy.~(i) For each step, ETC learns to represent the state with a low-dimensional feature, which factorizes the transition kernel. (ii) Across multiple steps, ETC learns to represent the full history with a low-dimensional embedding, which assembles the per-step feature. 

%We integrate (i) and (ii) in a unified framework that allows a variety of estimators (including maximum likelihood estimators and generative adversarial networks). For a class of POMDPs with a low-rank structure in the transition kernel, ETC attains an $O(1/\epsilon^2)$ sample complexity that scales polynomially with the horizon and the intrinsic dimension (that is, the rank). Here $\epsilon$ is the optimality gap. To our best knowledge, ETC is the first sample-efficient algorithm that bridges representation learning and policy optimization in POMDPs with infinite observation and state spaces.

% Explain eps
% H, A
% d
% gamma, nu -> A^k, A^\ell

% contribution:

% dependency of obs and states only through $d$

% first analysis of embedding to control: commonly realized in practice but scarcely known in theory
% principled framework of embedding learning for POMDP

\bibliographystyle{ims}
\bibliography{POMDP.bib}

\newpage

\appendix
\renewcommand{\arraystretch}{1.5}
\section*{List of Notation}
In the sequel, we present a list of notations in the paper.
\begin{table}[htpb]
\begin{center}
\begin{tabular}{ | m{3.25cm} || m{12cm}|} 
 %\hline
 %\multicolumn{2}{|c|}{List of Notation} \\
 %\hline
 \hline
  Notation &  Explanation \\ 
  \hline
  \hline
  $\cS$, $\cA$, $\cO$  & The state, action, and observation spaces, respectively. \\ 
  \hline
  $A$, $H$ & The capacity of action space $|\cA|$ and the length of an episode, respectively.  \\ 
  \hline
  $\Phi$& The embedding of trajectory defined in \eqref{eq::def_embedding}.\\
  \hline
  $\PP_h(s_{h+1}\given s_h, a_h)$ & The transition probability from $(s_h, a_h)$ to $s_{h+1}$.  \\ 
  \hline
   $\OO_h(o_h\given s_h)$& The emission probability of observing $o_h$ given $s_h$.\\
  \hline
  $[H]^{+}$ & The set of steps $\{1-\ell, \ldots, H+k\}$ of the extended POMDP.\\
  \hline
  $\seq{a}{h}{h+k-1}$, $\seq{o}{h}{h+k}$ & The sequences of actions and observations $\{a_h, \ldots, a_{h+k-1}\}$ and $\{o_h, \ldots, o_{h+k}\}$, respectively.\\
  \hline
  $\traj^{h+k}_h$& The sequence of interactions $\{o_h, a_h, \ldots, o_{h+k-1}, a_{h+k-1}, o_{h+k}\}$ from the $h$-th step to the $(h+k)$-th step.\\
  \hline
  $\utraj^{h+k}_h$& The sequence of interactions $\{o_h, a_h, \ldots, o_{h+k-1}, a_{h+k-1}, o_{h+k}, a_{h+k}\}$ from the $h$-th step to the $(h+k)$-th step, including the $(h+k)$-th action.\\
  \hline
   $\PP(\traj^{h+k}_h)$, $\PP(\traj^{h+k}_h\given s_h)$& The conditional densities $\PP(\seq{o}{h}{h+k}\given \seq{a}{h}{h+k-1})$ and $\PP(\seq{o}{h}{h+k}\given s_h, \seq{a}{h}{h+k-1})$, respectively.\\
  \hline
  $z_h$, $w_{h-1}$ & The shorthand for the sequences of interactions $\traj^{h+k}_h$ and $\utraj^{h-1}_{h-\ell}$, respectively, on page $7$ of the paper.\\
  \hline

$\UU^\param_h$, $\UU^{\param, \dagger}_h$ & The forward emission operator and its pseudo-inverse defined in Definition \ref{def::forward_emi_linear} and Lemma \ref{lem::left_inv}, respectively.\\
  \hline
  $M^\param_h$, $M^{\param, \dagger}_h$ & The $d$-by-$d$ matrix and its inverse defined in Assumption \ref{asu::lin_future_suff}.\\
  \hline
  $\ObsB^\param_h$ & The Bellman operator defined in Definition \ref{def::OO_linear}.\\
  \hline
   $\phi^*$, $\psi^*$ & The unknown features of the low-rank POMDP in Assumption \ref{asu::low_rank_POMDP}.\\
  \hline
  $\phi^\param$, $\psi^\param$, $\OO^\param_h$ & The parameterized features and emission kernel in Definition \ref{def::parameter}.\\
  \hline

\end{tabular}
\end{center}
\end{table}

\begin{table}[htpb]
\begin{center}
\begin{tabular}{ | m{3.25cm}  || m{12cm}|} 
 %\hline
 %\multicolumn{2}{|c|}{List of Notation} \\
 %\hline
   
 \hline
  Notation &  Explanation \\ 
  \hline
 \hline
  $\PP^{\param}$, $\PP^{\param, \pi}$ & The probability densities corresponding to the transition dynamics defined by $\{\psi^\param, \phi^\param, \OO^\param_h\}$ and the policy $\pi$, respectively.\\

  \hline
 
  $\est(\cdot)$& The density estimation oracle defined in Assumption \ref{asu::density_est}.\\
  \hline

   $w_\est$, $\nu$, $\gamma$ & The parameters in Assumptions \ref{asu::density_est}, \ref{asu::bdd_left_inv_linear}, and \ref{asu::inv_ROO_linear}, respectively.\\
  \hline
  $\pone^{\param, \pi}_h$, $\ptwo^{\param, \pi}_h$ & The density mappings defined in \eqref{eq::def_p1_linear} and \eqref{eq::def_p2_linear}, respectively.\\
  \hline
  $\vecb^\param_1(\traj^H_1)$ & The density of initial trajectory $\PP^\param(\traj^H_1)$.\\
  \hline
  $\cD^t_h$, $\pi^t$, $\cC^t$ & The dataset, policy, and confidence set of parameters, respectively, in the $t$-th iteration of Algorithm \ref{alg::POMDP_linear}.\\
  \hline
  $\hat\pone^t_h$, $\hat\ptwo^t_h$, $\hat\vecb^t_1$& The estimated density mappings and initial trajectory density, respectively, in the $t$-th iteration of Algorithm \ref{alg::POMDP_linear}.\\
  \hline
  $L^t_h$& The objective function defined in \eqref{eq::def_obj_Bellman}.\\
  \hline

\end{tabular}
\end{center}
\end{table}

\newpage

\section{Proof of Preliminary Result}
\label{sec::pf_prel}
In the sequel, we present the proof of preliminary results in \S\ref{sec::embedding}.
\subsection{Proof of Lemma \ref{lem::left_inv}}
\label{sec::pf_lem_left_inv}
\begin{proof}
It holds for all time step $h\in[H]$, policy $\pi\in\Pi$, parameter $\param\in\Param$ that
\#\label{eq::pf_lem_left_inv_eq1}
\PP^{\param, \pi}_h(s_h) &= \int_{\cS\times\cA}\PP^\param_{h-1}(s_h\given s_{h-1}, a_{h-1})\cdot \PP^{\param, \pi}(s_{h-1}, a_{h-1})\ud s_{h-1}, a_{h-1}\notag\\
&=\psi^\param_{h-1}(s_h)^\top \int_{\cS\times\cA} \phi^\param_{h-1}(s_{h-1}, a_{h-1})\cdot \PP^{\param, \pi}(s_{h-1}, a_{h-1})\ud s_{h-1}, a_{h-1}\notag\\
&=\psi^\param_{h-1}(s_{h})^\top W_{h-1}(\param, \pi),
\#
where we define
\$
W_{h-1}(\param, \pi) = \int_{\cS\times\cA} \phi^\param_{h-1}(s_{h-1}, a_{h-1})\cdot \PP^{\param, \pi}(s_{h-1}, a_{h-1})\ud s_{h-1}, a_{h-1}.
\$
%By Assumption \ref{asu::lin_future_suff}, it further holds that
%\$
%\ZZ^\param_h \UU^\param_{h} \PP^{\param, \pi}_h(\cdot) = \ZZ^\param_h \UU^\param_{h} \bigl(\psi^\param_{h-1}(\cdot)^\top W_{h-1}(\param, \pi)\bigr) = W_{h-1}(\param, \pi).
%\$
Meanwhile, recall that we define the following linear operator in Lemma \ref{lem::left_inv},
\$
(\UU^{\param, \dagger}_{h} f)(s_{h}) =\int_{\cA^{k}\times\cO^{k+1}} \psi^\param_{h-1}(s_{h})^\top z^\param_h(\traj_h^{h+k}) \cdot f(\traj_h^{h+k}) \ud \traj^{h+k}_h, \quad \forall f\in L^1(\cA^{k}\times\cO^{k+1}),~\forall s_h\in\cS,
\$
where we define
\$
z^\param_h(\traj^{h+k}_h) = M^{\param, \dagger}_h(\UU^\param_h \psi^\param_{h-1})(\traj^{h+k}_h),\quad \forall \traj^{h+k}_h\in\cA^{k}\times\cO^{k+1},
\$
It thus follows from \eqref{eq::pf_lem_left_inv_eq1} that
\#
&\int_{\cA^{k}\times\cO^{k+1}} z^\param_h(\traj^{h+k}_h)(\UU^\param_h \PP^{\param, \pi}_h)(\traj^{h+k}_{h}) \ud \traj^{h+k}_h \notag\\
&\qquad= M^{\param, \dagger}_h\int_{\cA^{k}\times\cO^{k+1}} (\UU^\param_h \psi^\param_{h-1})(\traj^{h+k}_h)(\UU^\param_h \psi^\param_{h-1})(\traj^{h+k}_h)^\top W_{h-1}(\param, \pi) \ud \traj^{h+k}_h\notag\\
&\qquad = M^{\param, \dagger}_h M^{\param}_h W_{h-1}(\param, \pi) = W_{h-1}(\param, \pi).
\#
Here recall that we define
\$
M^{\param}_h = \int_{\cA^{k}\times\cO^{k+1}} (\UU^\param_h \psi^\param_{h-1})(\traj^{h+k}_h)(\UU^\param_h \psi^\param_{h-1})(\traj^{h+k}_h)^\top\ud \traj^{h+k}_h \in\RR^{d\times d}
\$
and $M^{\param,\dagger}_h$ as the inverse of $M^{\param}_h$ in Assumption \ref{asu::lin_future_suff}. Thus, we have
\#
\UU^{\param, \dagger}_{h}\UU^{\param}_{h}\bigl(\PP^{\param, \pi}_h(\cdot)\bigr) 
&= \psi^\param_{h-1}(\cdot)^\top \int_{\cA^{k}\times\cO^{k+1}} z^\param_h(\traj^{h+k}_h)(\UU^\param_h \PP^{\param, \pi}_h)(\traj^{h+k}_{h}) \ud \traj^{h+k}_h\notag\\
&=\psi^\param_{h-1}(\cdot)^\top W_{h-1}(\param, \pi) = \PP^{\param, \pi}_h(\cdot),
\#
which completes the proof of Lemma \ref{lem::left_inv}.
\end{proof}

\subsection{Proof of Equation \ref{eq::linear_OO_id}}
\label{sec::pf_linear_OO_id}
\begin{proof}
By the definition of Bellman operators in Definition \ref{def::OO_linear}, we have
\#\label{eq::pf_eq_id_1}
&\bigl(\ObsB^\param_h(a_h, o_h) \pone_h({\utraj^{h-1}_{h-\ell}})\bigr)(\traj^{h+k+1}_{h+1}) = \int_{\cS} \PP^\param( \traj^{h+k+1}_{h}\given s_h)\cdot \bigl(\UU^{\param, \dagger}_{h} \pone^{\param}_h({\utraj^{h-1}_{h-\ell}}) \bigr)(s_h) \ud s_h.
\#
Meanwhile, by the definition of $\pone^{\param}_h$ and $\UU^\param_h$ in \eqref{eq::def_p1_linear} and \eqref{eq::def_U_linear}, respectively, we have
\$
\bigl[\pone^{\param}_h({\utraj^{h-1}_{h-\ell}})\bigr](\traj^{h+k}_h) &= \PP^\param(\traj^{h+k}_{h-\ell}) = \int_{\cS} \PP^\param(\seq{o}{h-\ell}{h-1}, s_h\given \seq{a}{h-\ell}{h-1})\cdot \PP^\param(\traj^{h+k}_{h}\given s_h) \ud s_h\notag\\
&=\bigl(\UU^\param_{h} \PP^\param(\seq{o}{h-\ell}{h-1}, s_h = \cdot \given \seq{a}{h-\ell}{h-1})\bigr)(\traj^{h+k}_h).
\$
Thus, by Lemma \ref{lem::left_inv}, it holds that
\#\label{eq::pf_eq_id_2}
\UU^{\param, \dagger}_{h} \pone^{\param}_h({\utraj^{h-1}_{h-\ell}}) &= \UU^{\param, \dagger}_{h}\UU^\param_{h} \PP^\param( \seq{o}{h-\ell}{h-1}, s_h = \cdot\given \seq{a}{h-\ell}{h-1} )=\PP^\param(\seq{o}{h-\ell}{h-1}, s_h = \cdot\given \seq{a}{h-\ell}{h-1} ).
\#
Plugging \eqref{eq::pf_eq_id_2} into \eqref{eq::pf_eq_id_1}, we conclude that
\$
&\bigl(\ObsB^\param_h(a_h, o_h) \pone_h({\utraj^{h-1}_{h-\ell}})\bigr)(\traj^{h+k+1}_{h+1})= \int_{\cS} \PP^\param( \traj^{h+k+1}_{h}\given s_h)\cdot \PP^\param(s_h, \seq{o}{h-\ell}{h-1}\given \seq{a}{h-\ell}{h-1} ) \ud s_h= \PP^\param(\traj^{h+k+1}_{h-\ell}),
\$
where the second equality follows from the fact that the past observations $\seq{o}{h-\ell}{h-1}$ is independent of the forward observations $\seq{o}{h+1}{h+k+1}$ given the current state $s_h$. Thus, by the definition of $\ptwo^{\param}_h$ in \eqref{eq::def_p1_linear}, we conclude the proof of equation \ref{eq::linear_OO_id}.
\end{proof}

\subsection{Proof of Lemma \ref{lem::traj_density_linear}}
\label{sec::pf_lem_traj_density_linear}
\begin{proof}
We first define the following density function of initial trajectory,
\#\label{eq::pf_lem_tdl_eq1}
\vecb^\param_{1}(\traj^{1+k}_{1}) = (\UU^\param_1 \mu_1)(\traj^{1+k}_{1}) = \PP^\param(\traj^{1+k}_{1}) \in L^1(\cA^k\times\cO^{k+1}).
\#
Thus, it holds from the definition of Bellman operators in Definition \ref{def::OO_linear} that
\#\label{eq::pf_lem_tdl_eq2}
\bigl[\ObsB^\param_1(a_1, o_1) \vecb^\param_{1}\bigr](\traj^{k+2}_{2}) &= \int_{\cS} \PP^\param(\traj^{k+2}_{1}\given s_1)\cdot (\UU^{\param, \dagger}_{1} \vecb^\param_{1})(s_1) \ud s_1\notag\\
&=\int_{\cS} \PP^\param(\traj^{k+2}_{1}\given s_1) \cdot \mu_1(s_1) \ud s_1 = \PP(\traj^{k+2}_1), 
\#
where $\mu_1$ is the initial state density of the POMDP. Here the second equality follows from the left invertibility of the forward emission operator $\UU^\param_1$ in Lemma \ref{lem::left_inv} and the definition of $\vecb^\param_{1}$ in \eqref{eq::pf_lem_tdl_eq1}. Thus, by the recursive computation following \eqref{eq::pf_lem_tdl_eq2}, we obtain that
\$
\bigl[\ObsB^{\param}_{H}(o_{H}, a_{H})\ldots \ObsB^{\param}_1(o_1, a_1) \vecb^\param_1\bigr](\traj^{H+k+1}_{H+1}) = \PP^\param(\traj^{H+k+1}_1).
\$
Finally, by marginalizing over the dummy future trajectories \$\traj^{H+k+1}_{H+1} = \{a_{H+1}, o_{H+1}, \ldots, a_{H+k}, o_{H+k+1}\} \in\cA^k\times\cO^{k+1},\$
we conclude that
\$
\PP^{\param}(\traj^H_1) &= \frac{1}{A^k}\cdot \int_{\cA^k\times\cO^{k+1}} \PP^{\param}(\traj^{H+k+1}_{1}) \ud \traj^{H+K+1}_{H+1}\notag\\
&= \frac{1}{A^k}\cdot \int_{\cA^k\times\cO^{k+1}} \bigl[\ObsB^{\param}_{H}(a_{H}, o_{H})\ldots \ObsB^{\param}_1(a_1, o_1) \vecb^\param_1\bigr](\traj^{H+k}_H) \ud \seq{o}{h}{h+k}.
\$
Thus, we complete the proof of Lemma \ref{lem::traj_density_linear}.
\end{proof}
%\ind_{o_H, \seq{a}{H}{H+k-1}}(\traj^{H+k}_1)\cdot

\section{Proof of Main Result}
\label{sec::pf_main}
In the sequel, we present the proof of the main result in \S\ref{sec::analysis}.
\subsection{Computing the Performance Difference}
In the sequel, we present lemmas for the sample efficiency analysis of \algoname. Our analysis is motivated by previous work \citep{jin2020sample, cai2022sample}. We first define linear operators $\{\Tran^\param_h, \diagO^\param_h\}_{h\in[H]}$ as follows,
\#
\bigl(\Tran^\param_h(a_h) f\bigr)(s_{h+1}) &= \int_{\cS} \PP^\param_{h}(s_{h+1}\given s_h, a_h) \cdot f(s_h) \ud s_h, \quad \forall f\in L^1(\cS), ~a_h\in\cA,\label{eq::def_tranOP_linear}\\
\bigl(\diagO^\param_{h}(o_h) f\bigr)(s_h) &= \OO^\param_h(o_h\given s_h)\cdot f(s_h),\quad \forall f \in L^1(\cS), ~o_h\in\cO \label{eq::def_diagO_linear}.
\#
It thus holds that
\#\label{eq::claim_OO_linear}
\ObsB^\param_h(a_h, o_h) = \UU^\param_{h+1}\Tran^\param_h(a_h)\diagO^\param_{h}(o_h)\UU^{\param, \dagger}_{h}.
\#
To see such a fact, note that we have for all $f\in L^1(\cA^{k}\times\cO^{k+1})$ that
\#\label{eq::pf_claim_OO_linear_eq1}
&\bigl(\UU^\param_{h+1}\Tran^\param_h(a_h)\diagO_{h}(o_h)\UU^{\param, \dagger}_{h} f\bigr)(\traj^{h+k+1}_{h+1}) \notag\\
&\qquad = \biggl(\UU^\param_{h+1} \int_{\cS} \PP^\param_{h}(\cdot\given s_h, a_h) \cdot \OO_h(o_h\given s_h)(\UU^{\param, \dagger}_{h} f)(s_h) \ud s_h\biggr)(\traj^{h+k+1}_{h+1})\notag\\
&\qquad = \int_{\cS^2} \PP^\param( \traj^{h+k+1}_{h+1}\given s_{h+1})\cdot \PP^\param_{h}(s_{h+1}\given s_h, a_h) \cdot \OO_h(o_h\given s_h)(\UU^{\param, \dagger}_{h} f)(s_h) \ud s_h\ud s_{h+1}\notag\\
&\qquad=\int_{\cS^2}\PP^\param(\seq{o}{h}{h+k+1}, s_{h+1}\given s_{h},  \seq{a}{h}{h+k}) (\UU^{\param, \dagger}_{h} f)(s_h) \ud s_h\ud s_{h+1},
\#
where the first and second equality  follows from the definitions of $\Tran^\param_h$, $\diagO_{h}$, and $\UU^\param_{h+1}$ in \eqref{eq::def_tranOP_linear}, \eqref{eq::def_diagO_linear}, and \eqref{eq::def_U_linear}, respectively. Meanwhile, the third equality follows from the fact that the POMDP is Markov with respect to the state and action pairs $(s_{h+1}, a_{h+1})$. Marginalizing over the state $s_{h+1}$ on the right-hand side of \eqref{eq::pf_claim_OO_linear_eq1}, we obtain for all $(\traj^{h+k+1}_{h+1})\in\cA^k\times\cO^{k+1}$ that
\$
\bigl(\UU^\param_{h+1}\Tran^\param_h(a_h)\diagO^\param_{h}(o_h)\UU^{\param, \dagger}_{h} f\bigr)(\traj^{h+k+1}_{h+1}) &= \int_{\cS} \PP^\param(\traj^{h+k+1}_h\given s_h)\cdot (\UU^{\param, \dagger}_{h} f)(s_h) \ud s_{h}\notag\\
& = \bigl(\ObsB^\param_{h}(a_h, o_h) f\bigr)(\traj^{h+k+1}_{h+1}), 
\$
where the second equality follows from the definition of Bellman operator $\ObsB^\param_{h}$ in Definition \ref{def::OO_linear}. Thus, we complete the proof of \eqref{eq::claim_OO_linear}.

\begin{lemma}[Performance Difference]
\label{lem::perf_diff_linear}
It holds for all policy $\pi\in\Pi$ and parameters $\param, \param'\in\Param$ that
\$
|V^\pi(\param) - V^\pi(\param')| &\leq H\cdot\nu\cdot \sum_{h=1}^{H-1}\sum_{\seq{a}{h-\ell}{h}\in\cA^{\ell+1}} \int_{\cO} \sum_{\lat_{h-1} \in[d]} \|u_{h, \lat_{h-1}}\|_1\ud o_h + H\cdot\nu\cdot\|\vecb^{\param}_1 - \vecb^{\param'}_1\|_1,
\$
where we define
\$
u_{h, \lat_{h-1}} = \bigl(\ObsB^\param_h(a_h, o_h) - \ObsB^{\param'}_h(a_h, o_h)\bigr)\UU^{\param'}_{h}\PP^{\param'}_{h}(s_{h} = \cdot\given {\lat_{h-1}})\cdot \PP^{\pi}({\lat_{h-1}} \given \seq{a}{h-\ell}{h-1}).
\$
\end{lemma}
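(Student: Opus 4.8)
The plan is to express the difference $V^\pi(\param) - V^\pi(\param')$ through the embedding decomposition of Lemma \ref{lem::traj_density_linear}, which writes each trajectory density $\PP^\param(\traj^H_1)$ as a product of Bellman operators applied to the initial density $\vecb^\param_1$. Since $V^\pi(\param) = \int_{\cO^H} r(\seq{o}{1}{H}) \cdot \PP^\param(\seq{o}{1}{H}\given \seq{(a^\pi)}{1}{H})\ud\seq{o}{1}{H}$ and the reward is bounded (so that $\|r\|_\infty \leq H$ after summing over $H$ steps), the task reduces to bounding the $L^1$ difference between the two operator products. The reward factor pulls out the leading $H$, and the $L^1$ norm of the trajectory-density difference is what remains to control. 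I would first reduce the value gap to $|V^\pi(\param) - V^\pi(\param')| \leq H \cdot \|\PP^{\param,\pi}(\traj^H_1) - \PP^{\param',\pi}(\traj^H_1)\|_1$, tracking the correct conditioning on the policy-induced actions.

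The core of the argument is a \emph{telescoping} of the operator product. Writing $\Phi^\param = \BB^\param_H \cdots \BB^\param_1 \vecb^\param_1$ and similarly for $\param'$, I would decompose the difference across the $H$ steps plus the initial term: a standard hybrid/telescoping identity replaces one factor at a time, giving a sum of $H$ terms in which a single Bellman-operator difference $(\BB^\param_h - \BB^{\param'}_h)$ sits between a tail of $\param$-operators on the left and a head of $\param'$-operators (applied to $\vecb^{\param'}_1$) on the right, plus one term $\BB^\param_H\cdots\BB^\param_1(\vecb^\param_1 - \vecb^{\param'}_1)$ for the initial densities. The left tail of $\param$-operators, being (up to normalization) a composition of forward-transition maps, should be shown to be an $L^1$-contraction-like object — using the factorization $\BB^\param_h = \UU^\param_{h+1}\Tran^\param_h(a_h)\diagO^\param_h(o_h)\UU^{\param,\dagger}_h$ established in \eqref{eq::claim_OO_linear} — so that after integrating over the future dummy trajectory its contribution is controlled by $\nu = \sup_h \|\UU^{\param,\dagger}_h\|_{1\mapsto1}$ from Assumption \ref{asu::bdd_left_inv_linear}. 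This is what produces the overall factor $H \cdot \nu$.

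The step I expect to be the main obstacle is identifying the right-hand object $\UU^{\param'}_h \PP^{\param'}_h(s_h=\cdot \given \lat_{h-1})$ weighted by $\PP^\pi(\lat_{h-1}\given \seq{a}{h-\ell}{h-1})$ that appears inside $u_{h,\lat_{h-1}}$, and showing that the head of $\param'$-operators applied to $\vecb^{\param'}_1$ decomposes exactly into this bottleneck-indexed sum over $\lat_{h-1}\in[d]$. Concretely, I would use the bottleneck-factor interpretation of the low-rank transition: the density reaching step $h$ factors through the $d$-dimensional bottleneck $\lat_{h-1}$, so that $\UU^{\param'}_h$ applied to the state density at step $h$ splits into a sum of $d$ rank-one pieces, each being $\UU^{\param'}_h \PP^{\param'}_h(s_h=\cdot\given \lat_{h-1})$ scaled by the probability $\PP^\pi(\lat_{h-1}\given\seq{a}{h-\ell}{h-1})$ of that bottleneck under the past action sequence. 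Matching this decomposition to the claimed formula, and verifying that the supremum/integral structure over $\seq{a}{h-\ell}{h}\in\cA^{\ell+1}$ and $o_h\in\cO$ lines up with the data-collection conditioning, is the delicate bookkeeping; everything else is routine operator-norm and triangle-inequality manipulation once the telescoping and the bottleneck split are in place.
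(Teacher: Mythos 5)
Your plan follows essentially the same route as the paper's proof: reduce the value gap to $H$ times an $L^1$ distance between trajectory densities (using $0\le r\le H$), telescope the product of Bellman operators into $H$ single-difference terms plus the initial term $\vecb^\param_1-\vecb^{\param'}_1$, bound the left tail of $\param$-operators by $\nu$ via the factorization $\ObsB^\param_h = \UU^\param_{h+1}\Tran^\param_h(a_h)\diagO^\param_h(o_h)\UU^{\param,\dagger}_h$ (the paper's Lemma \ref{lem::bound_RHS_linear}), and collapse the head of $\param'$-operators into $\UU^{\param'}_h$ applied to the step-$h$ density, split through the bottleneck $\lat_{h-1}$ with the policy-dependent actions replaced by a sum over $\seq{a}{h-\ell}{h}\in\cA^{\ell+1}$ (the paper's Lemma \ref{lem::RHS_II_linear}). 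The ``delicate bookkeeping'' you flag is precisely what Lemma \ref{lem::RHS_II_linear} carries out, so your proposal is correct and matches the paper's argument.
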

\begin{proof}
See \S\ref{sec::pf_lem_perf_diff_linear} for a detailed proof.
\end{proof}

\subsection{Confidence Set Analysis}

We first present the following norm bound on Bellman operators.
\begin{lemma}[Norm Bound of Bellman Operator]
\label{lem::norm_bound_linear}
Under Assumptions \ref{asu::low_rank_POMDP}, \ref{asu::lin_future_suff}, and \ref{asu::bdd_left_inv_linear}, it holds for all $h\in[H]$, $\param\in\Param$, and $(a_h, o_h)\in\cA\times\cO$ that $\|\ObsB^{\param}_{h}(a_h, o_h)\|_{1\mapsto1} \leq \nu\cdot A^k$.
\end{lemma}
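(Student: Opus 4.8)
The plan is to build the bound directly on the operator factorization \eqref{eq::claim_OO_linear}, namely
\[
\ObsB^\param_h(a_h, o_h) = \UU^\param_{h+1}\,\Tran^\param_h(a_h)\,\diagO^\param_{h}(o_h)\,\UU^{\param, \dagger}_{h},
\]
and then control $\|\ObsB^\param_h(a_h,o_h)\|_{1\mapsto1}$ by submultiplicativity of the $1\mapsto1$ operator norm:
\[
\|\ObsB^\param_h(a_h,o_h)\|_{1\mapsto1} \le \|\UU^\param_{h+1}\|_{1\mapsto1}\cdot\|\Tran^\param_h(a_h)\|_{1\mapsto1}\cdot\|\diagO^\param_h(o_h)\|_{1\mapsto1}\cdot\|\UU^{\param,\dagger}_h\|_{1\mapsto1}.
\]
The rightmost factor is handled immediately: Assumption \ref{asu::bdd_left_inv_linear} gives $\|\UU^{\param,\dagger}_h\|_{1\mapsto1}\le\nu$. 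It then remains to show that the three remaining factors contribute $A^k$, $1$, and $1$, whose product with $\nu$ yields the claimed bound $\nu\cdot A^k$.

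For the transition factor I would use that $\Tran^\param_h(a_h)$ is a stochastic operator. From its definition \eqref{eq::def_tranOP_linear}, the triangle inequality and Tonelli's theorem give, for any $f\in L^1(\cS)$,
\[
\|\Tran^\param_h(a_h) f\|_1 \le \int_\cS\Big(\int_\cS \PP^\param_h(s_{h+1}\given s_h,a_h)\,\ud s_{h+1}\Big)|f(s_h)|\,\ud s_h = \|f\|_1,
\]
since $\PP^\param_h(\cdot\given s_h,a_h)$ is a density that integrates to one; hence $\|\Tran^\param_h(a_h)\|_{1\mapsto1}\le1$. The emission factor $\diagO^\param_h(o_h)$ is, by \eqref{eq::def_diagO_linear}, pointwise multiplication by $\OO^\param_h(o_h\given\cdot)$, so $\|\diagO^\param_h(o_h)\|_{1\mapsto1}=\|\OO^\param_h(o_h\given\cdot)\|_\infty\le1$, using that the emission values are bounded by one (automatic when $\cO$ is discrete).

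The crux is the forward emission operator $\UU^\param_{h+1}$, where the factor $A^k$ enters through the structure of its codomain $L^1(\cA^k\times\cO^{k+1})$. Using the definition \eqref{eq::def_U_linear}, the triangle inequality, and Tonelli's theorem, for any $f\in L^1(\cS)$,
\[
\|\UU^\param_{h+1} f\|_1 \le \int_\cS|f(s_{h+1})|\Big[\sum_{\seq{a}{h+1}{h+k}\in\cA^k}\int_{\cO^{k+1}}\PP^\param(\traj^{h+1+k}_{h+1}\given s_{h+1})\,\ud\seq{o}{h+1}{h+1+k}\Big]\ud s_{h+1}.
\]
For each fixed action sequence, $\PP^\param(\traj^{h+1+k}_{h+1}\given s_{h+1})=\PP^\param(\seq{o}{h+1}{h+1+k}\given s_{h+1},\seq{a}{h+1}{h+k})$ is a conditional density that integrates to one over the $k+1$ future observations, so the bracketed quantity equals the number of action sequences, namely $A^k$. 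This gives $\|\UU^\param_{h+1}\|_{1\mapsto1}\le A^k$. Combining the four bounds yields $\|\ObsB^\param_h(a_h,o_h)\|_{1\mapsto1}\le A^k\cdot1\cdot1\cdot\nu=\nu\cdot A^k$.

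The step I expect to require the most care is the forward-emission bound $\|\UU^\param_{h+1}\|_{1\mapsto1}\le A^k$: one must track correctly that the $L^1$ norm on $\cA^k\times\cO^{k+1}$ \emph{sums} over the $A^k$ action sequences while only \emph{integrating} over the future observations, so that the density normalization contributes a factor one and the action summation contributes the $A^k$. A secondary subtlety is the emission-multiplier bound, which is where the (implicit) requirement that emission values not exceed one is invoked.
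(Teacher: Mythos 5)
Your proof is correct, and it takes a mildly but genuinely different route from the paper's. The paper proves Lemma \ref{lem::norm_bound_linear} directly from Definition \ref{def::OO_linear}, without the factorization: it bounds $\|\ObsB^{\param}_{h}(a_h,o_h)f\|_1$ by $\int_{\cA^{k}\times\cO^{k+1}}\int_{\cS}\PP^\param(\traj^{h+k+1}_h\given s_h)\cdot|(\UU^{\param,\dagger}_h f)(s_h)|\ud s_h\ud\traj^{h+k+1}_{h+1}$, notes that summing over the $A^k$ dummy action sequences and integrating out the $k+1$ future observations leaves at most $A^k$, and then applies Assumption \ref{asu::bdd_left_inv_linear} to get $\nu\cdot A^k\cdot\|f\|_1$ in two lines. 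You instead route through the identity \eqref{eq::claim_OO_linear} and submultiplicativity, bounding the four factors separately: $\|\UU^{\param}_{h+1}\|_{1\mapsto1}\le A^k$, $\|\Tran^\param_h(a_h)\|_{1\mapsto1}\le 1$, $\|\diagO^\param_h(o_h)\|_{1\mapsto1}\le 1$, $\|\UU^{\param,\dagger}_h\|_{1\mapsto1}\le\nu$. The substance is identical---in both arguments the $A^k$ comes from the action sum inside the forward emission's codomain and the $\nu$ from the pseudo-inverse---but your version is more modular (the same per-factor normalization facts reappear, in observation-integrated form, in the paper's proof of Lemma \ref{lem::bound_RHS_linear}), at the mild cost of leaning on \eqref{eq::claim_OO_linear}, whose verification in the paper is itself about as long as the direct estimate. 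The subtlety you flag is real and shared: your pointwise bound $\|\diagO^\param_h(o_h)\|_{1\mapsto1}=\sup_{s_h}\OO^\param_h(o_h\given s_h)\le 1$ is not automatic for continuous $\cO$, but the paper's proof makes exactly the same silent step (its second inequality in \eqref{eq::pf_lem_norm_bound_linear_eq1} bounds $A^k\cdot\OO^\param_h(o_h\given s_h)$ by $A^k$); strictly speaking both arguments yield $\nu\cdot A^k\cdot\sup_{s_h}\OO^\param_h(o_h\given s_h)$ pointwise in $o_h$, and the stated constant requires emission densities bounded by one, which holds automatically in the discrete case and is harmless in the paper's downstream uses where $o_h$ is integrated out.
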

\begin{proof}
It holds for all $f\in L^1(\cA^{k}\times\cO^{k+1})$ that
\#\label{eq::pf_lem_norm_bound_linear_eq1}
\|\ObsB^{\param}_{h}(a_h, o_h) f\|_{1} &\le \int_{\cA^{k}\times\cO^{k+1}}\int_{\cS} \PP^\param(\traj^{h+k+1}_h\given s_h)\cdot|\UU^{\param, \dagger}_h f (s_h)| \ud s_h \ud \traj^{h+k+1}_{h+1}\notag\\
&\leq A^k\cdot\int_{\cS}|\UU^{\param, \dagger}_h f (s_h)| \ud s_h.
\#
Meanwhile, by the definition of $\UU^{\param, \dagger}_h$ in \eqref{eq::def_dagger_linear} and Assumption \ref{asu::bdd_left_inv_linear}, it holds that 
\#\label{eq::pf_lem_norm_bound_linear_eq2}
\int_{\cS}|\UU^{\param, \dagger}_h f (s_h)| \ud s_h  = \|\UU^{\param, \dagger}_h f (s_h)\|_1 \leq \nu\cdot \|f\|_1.
\#
Combining \eqref{eq::pf_lem_norm_bound_linear_eq1} and \eqref{eq::pf_lem_norm_bound_linear_eq2}, we conclude that
\$
\|\ObsB^{\param}_{h}(a_h, o_h) f \|_{1}\leq \nu\cdot A^k \cdot \|f\|_1,
\$
which completes the proof of Lemma \ref{lem::norm_bound_linear}.
\end{proof}

In what follows, we recall the definition of the reverse emission operator.
\begin{definition}[Reverse Emission]
%\label{def::op_F_linear}
We define for all $h\in[H]$ the following linear operator $\ObsF^{\param, \pi}_h: \RR^d \mapsto L^1(\cO^\ell\times\cA^\ell)$ for all $h\in[H]$, $\pi\in\Pi$, and $\param\in\Param$,
\$
(\ObsF^{\param, \pi}_h v)({\utraj^{h-1}_{h-\ell}}) = \sum_{\lat_{h-1} \in[d]} [v]_{\lat_{h-1}} \cdot \PP^{\param, \pi}(\seq{o}{h-\ell}{h-1}\given {\lat_{h-1}}, \seq{a}{h-\ell}{h-1}), \quad \forall v\in\RR^d,
\$
where $({\utraj^{h-1}_{h-\ell}})\in\cA^\ell\times\cO^\ell$.
\end{definition}
In addition, we define the following visitation measure of mix policy in the $t$-th iteration,
\$
\PP^t = \frac{1}{t}\cdot\sum^{t-1}_{\omega = 0}\PP^{\pi^\omega},
\$
where $\{\pi^\omega\}_{\omega \in [t]}$ is the set of policy returned by Algorithm \ref{alg::POMDP_linear}. We remark that the data collected by our data collection process in Algorithm \ref{alg::POMDP_linear} follow the trajectory density induced by $\PP^t$ in the $t$-th iterate. Hence, the estimated density $\hat\PP^t$ returned by our density estimator $\est(\cD^t)$ in the $t$-th iterate aligns closely to $\PP^t$. Meanwhile, recall that we define the following estimators in \eqref{eq::def_density_est_p1_linear} and \eqref{eq::def_density_est_p2_linear}, respectively,
\$
\bigl[\hat\pone^t_h({\utraj^{h-1}_{h-\ell}})\bigr](\traj^{h+k}_h) = \hat\PP^t_h(\traj^{h+k}_{h-\ell}), \qquad \bigl[\hat \ptwo^t_h(\utraj^h_{h-\ell})\bigr](\traj^{h+k+1}_{h+1}) = \hat\PP^t_h(\traj^{h+k+1}_{h-\ell}).
\$
Recall that we define the confidence set as follows,
\$
{\cC^t} = \biggl\{\param\in\Param:& \int_{\cO^{\ell+1}}\|\ObsB^{\param}_h(a_h, o_h)\hat\pone^t_h(\seq{a}{h-\ell}{h-1}) - \hat\ptwo^t_h({\utraj^{h}_{h-\ell}})\|_1 \ud \seq{o}{h-\ell}{h} \leq \beta_t,\quad \forall\seq{a}{h-\ell}{h}\in\cA^{\ell+1} \biggr\},
\$
where we select
\$
\beta_t = (\nu+1)\cdot A^{2k}\cdot \sqrt{w_{\est}\cdot(k+\ell)\cdot\log(H\cdot A\cdot T)/t}.
\$
In the sequel, we denote by $\param^t$ the parameter selected in optimistic planning. The following lemma guarantees that the true parameter $\param^*$ is included by our confidence set $\cC^t$ with high probability. Moreover, we show that initial density and the Bellman operators $\{\ObsB^{\param^t}\}$ corresponding to the parameter $\param^t$ aligns closely to that corresponding to the true parameter $\param^*$.
\begin{lemma}[Good Event Probability]
\label{lem::good_event_linear}
Under Assumptions \ref{asu::low_rank_POMDP}, \ref{asu::lin_future_suff}, and \ref{asu::inv_ROO_linear}, it holds with probability at least $1 - \delta$ that $\param^*\in{\cC^t}$. In addition, it holds for all $h\in[H]$ and $t\in[T]$ with probability at least $1 - \delta$ that
\#
&\|\vecb^{\param^t}_1 - \vecb^{\param^*}_1\|_1 = \cO(\nu\cdot A^{2k}\cdot\sqrt{w_{\est}\cdot (k+\ell)\cdot\log(H\cdot A\cdot T)/t}), \label{eq::good_event_linear_I}\\
&\sum_{\seq{a}{h-\ell}{h} \in \cA^{\ell+1}}\int_{\cO} \sum_{\lat_{h-1} \in[d]}\bigl\|\bigl(\ObsB^{\param^t}_h(a_h, o_h) - \ObsB^{\param^*}_h(a_h, o_h)\bigr)\UU^{\param^*}_{h}\PP^{\param^*}_h(s_h = \cdot\given {\lat_{h-1}})\bigr\|_1 \notag\\
&\qquad\cdot \PP^{\param^*, t}({\lat_{h-1}}\given \seq{a}{h-\ell}{h-1}) \ud o_h =\cO\bigl(\gamma\cdot\nu\cdot A^{2k+\ell}\cdot \sqrt{w_{\est}\cdot(k+\ell)\cdot\log(H\cdot A\cdot T)/t}\bigr), \label{eq::good_event_linear_II}
\#
where we define
\$
\PP^{\param^*, t}({\lat_{h-1}}\given \seq{a}{h-\ell}{h-1}) = \frac{1}{t}\cdot\sum^{t-1}_{\omega = 0} \PP^{\param^*, \pi^\omega}({\lat_{h-1}} \given \seq{a}{h-\ell}{h-1}).
\$
\end{lemma}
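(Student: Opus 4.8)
The plan is to prove the three claims in order, using the inclusion $\param^*\in\cC^t$ as the foundation for everything else. For the inclusion, I would bound each defining quantity of $\cC^t$ at $\param=\param^*$. For the Bellman residual, I insert the population density mappings $\pone^{\param^*,t}_h,\ptwo^{\param^*,t}_h$ associated with the mix-policy visitation $\PP^t=\tfrac1t\sum_{\omega}\PP^{\pi^\omega}$ and split
\$
\ObsB^{\param^*}_h\hat\pone^t_h-\hat\ptwo^t_h=\ObsB^{\param^*}_h(\hat\pone^t_h-\pone^{\param^*,t}_h)+(\ObsB^{\param^*}_h\pone^{\param^*,t}_h-\ptwo^{\param^*,t}_h)+(\ptwo^{\param^*,t}_h-\hat\ptwo^t_h).
\$
The middle term vanishes by the identity \eqref{eq::linear_OO_id} applied along $\PP^t$; the first is controlled by $\|\ObsB^{\param^*}_h\|_{1\mapsto1}\le\nu A^k$ (Lemma \ref{lem::norm_bound_linear}) times the $L^1$ error of $\hat\pone^t_h$, and the third by the $L^1$ error of $\hat\ptwo^t_h$. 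Since both mappings are reshapings of the trajectory density $\hat\PP^t_h(\cdot\given\seq{a}{h-\ell}{h+k})$ returned by $\est$, the oracle guarantee (Assumption \ref{asu::density_est}, in its martingale form, since data is collected adaptively through $\pi^{t-1}$) bounds these errors by $C\sqrt{w_{\est}\log(1/\delta)/t}$. A union bound over the $H$ steps and $A^{k+\ell+1}$ action sequences turns $\log(1/\delta)$ into $\log(HA^{k+\ell+1}T/\delta)\lesssim(k+\ell)\log(HAT)$, which together with the $\nu A^k$ factor and the marginalization over future actions reproduces the choice of $\beta_t$. The remaining bound $\|\vecb^{\param^*}_1-\hat\vecb^t_1\|_1\le\beta_t$ follows directly from the oracle applied to the initial trajectory density.

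Given the inclusion, claim \eqref{eq::good_event_linear_I} is immediate: $\param^t$ is chosen from $\cC^t$ in optimistic planning and $\param^*\in\cC^t$, so both satisfy $\|\vecb^{\param}_1-\hat\vecb^t_1\|_1\le\beta_t$, and the triangle inequality gives $\|\vecb^{\param^t}_1-\vecb^{\param^*}_1\|_1\le2\beta_t$, of the claimed order.

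The heart of the lemma is \eqref{eq::good_event_linear_II}. Subtracting the Bellman constraints of $\param^t$ and $\param^*$ (both in $\cC^t$) and replacing $\hat\pone^t_h$ by $\pone^{\param^*,t}_h$, at the cost of a density error controlled by $\|\ObsB^{\param^t}_h-\ObsB^{\param^*}_h\|_{1\mapsto1}\le2\nu A^k$ times $\sqrt{w_{\est}\log(1/\delta)/t}$, yields for every action sequence $\seq{a}{h-\ell}{h}$ that $\int_{\cO^{\ell+1}}\|(\ObsB^{\param^t}_h-\ObsB^{\param^*}_h)(a_h,o_h)\pone^{\param^*,t}_h(\utraj^{h-1}_{h-\ell})\|_1\,\ud\seq{o}{h-\ell}{h}=\cO(\beta_t)$. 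The crucial structural step is to decompose $\pone^{\param^*,t}_h$ through the bottleneck factor $\lat_{h-1}$: using the conditional independence $\seq{o}{h-\ell}{h-1}\indp\futr_h\given\lat_{h-1}$, read off the DAG in Figure \ref{fig::POMDP_Dag} since $s_h$ is generated solely from $\lat_{h-1}$, and the identification $\PP^{\param^*}(\futr_h=\cdot\given\lat_{h-1})=\UU^{\param^*}_h\PP^{\param^*}_h(s_h=\cdot\given\lat_{h-1})$ from Definition \ref{def::forward_emi_linear}, I obtain
\$
(\ObsB^{\param^t}_h-\ObsB^{\param^*}_h)(a_h,o_h)\pone^{\param^*,t}_h(\utraj^{h-1}_{h-\ell})=\sum_{\lat_{h-1}}\PP^{\param^*,t}(\seq{o}{h-\ell}{h-1}\given\lat_{h-1},\seq{a}{h-\ell}{h-1})\cdot u_{h,\lat_{h-1}},
\$
which is precisely the reverse emission operator $\ObsF^{\param^*,t}_h$ applied, coordinatewise in $(o_h,\traj^{h+k+1}_{h+1})$, to the $\RR^d$-valued vector $(u_{h,\lat_{h-1}})_{\lat_{h-1}}$ of Lemma \ref{lem::perf_diff_linear}. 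Invoking past sufficiency (Assumption \ref{asu::inv_ROO_linear}) to apply the left inverse $\ObsF^{\param^*,t,\dagger}_h$ with $\|\ObsF^{\param^*,t,\dagger}_h\|_{1\mapsto1}\le\gamma$ pointwise, then integrating over $(o_h,\traj^{h+k+1}_{h+1})$ and summing over $\seq{a}{h-\ell}{h}$, converts the past-weighted mixture back into $\sum_{\lat_{h-1}}\|u_{h,\lat_{h-1}}\|_1$ at the price of a factor $\gamma$, giving the claimed $\cO(\gamma\nu A^{2k+\ell}\sqrt{w_{\est}(k+\ell)\log(HAT)/t})$ once the $A$-powers are tracked.

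The main obstacle is this inversion step: the confidence-set bound controls only the past-observation-weighted mixture $\ObsF^{\param^*,t}_h(u_{h,\cdot})$, whereas the target is the undressed sum $\sum_{\lat_{h-1}}\|u_{h,\lat_{h-1}}\|_1$. Passing between them requires (i) recognizing the mixture as $\ObsF^{\param^*,t}_h$ acting on the $\lat_{h-1}$-index while $(o_h,\traj^{h+k+1}_{h+1})$ is frozen, (ii) applying the left inverse pointwise and only then integrating over the future, i.e.\ a Minkowski/Fubini interchange of $\ObsF^{\param^*,t,\dagger}_h$ with the future integral, and (iii) ensuring past sufficiency holds along the mixture visitation $\PP^t$ rather than a single policy. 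Care is also needed that the adaptive data collection fits the martingale hypothesis of the oracle and that the union bound together with marginalization over the $k$ future actions reproduces exactly the $A$-powers in $\beta_t$; these bookkeeping points are routine but must be tracked to match the stated rate.
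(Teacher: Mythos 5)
Your proposal is correct and follows essentially the same route as the paper: the same three-term split with the population mappings $\pone^{\param^*,\overline\pi^t}_h,\ptwo^{\param^*,\overline\pi^t}_h$ along the mixed visitation (middle term killed by \eqref{eq::linear_OO_id}, the others by the oracle and the $\nu A^k$ norm bound of Lemma \ref{lem::norm_bound_linear}, with the same union bound producing $\beta_t$), the same triangle inequality through $\hat\vecb^t_1$ for \eqref{eq::good_event_linear_I}, and for \eqref{eq::good_event_linear_II} the same bottleneck decomposition identifying $(\ObsB^{\param^t}_h-\ObsB^{\param^*}_h)\pone^{\param^*,\overline\pi^t}_h$ with $\ObsF^{\param^*,\overline\pi^t}_h$ acting on $(u_{h,\lat_{h-1}})_{\lat_{h-1}}$, inverted at cost $\gamma$ via Assumption \ref{asu::inv_ROO_linear}. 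The only difference is presentational: the paper writes $\|u^t_h\|_1=\|\ObsF^{\param^*,\overline\pi^t,\dagger}_h\ObsF^{\param^*,\overline\pi^t}_h u^t_h\|_1\leq\gamma\|\ObsF^{\param^*,\overline\pi^t}_h u^t_h\|_1$ and then identifies the right-hand side, whereas you run the identification first and invert second — the same step, and your flagged subtleties (pointwise inversion before integrating over the future, past sufficiency along the mixture) are exactly the ones the paper's proof implicitly relies on.
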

\begin{proof}
See \S\ref{pf::lem_good_event_linear} for a detailed proof.
\end{proof}

\subsection{Proof of Theorem \ref{thm::sample_complexity_linear}}
\label{sec::pf_thm_sample_complexity_linear}
We are now ready to present the sample complexity analysis of Algorithm \ref{alg::POMDP_linear}. 
\begin{proof}
It holds that
\#\label{eq::pf_thm_linear_eq0}
V^{*}(\param^*) - V^{\overline\pi^T}(\param^*) = \frac{1}{T}\cdot\sum^{T}_{t = 1} V^*(\param^*) - V^{\pi^t}(\param^*).
\#
It suffices to upper bound the performance difference
\$
V^*(\param^*) - V^{\pi^t}(\param^*)
\$
for all $t\in[T]$. By Lemma \ref{lem::good_event_linear}, it holds with probability at least $1-\delta$ that $\param^*\in{\cC^t}$ for all $t\in[T]$. Thus, by the update of $\pi^t$ in Algorithm \ref{alg::POMDP_linear}, it holds with probability at least $1-\delta$ that
\#\label{eq::pf_thm_linear_eq1}
V^*(\param^*) - V^{\pi^t}(\param^*) \leq V^{\pi^t}(\param^t) - V^{\pi^t}(\param^*).
\#
It now suffices to upper bound the performance difference on the right-hand side of \eqref{eq::pf_thm_linear_eq1}. By Lemma \ref{lem::perf_diff_linear}, it holds that
\#\label{eq::pf_thm_linear_eq1.5}
|V^{\pi^t}(\param^t) - V^{\pi^t}(\param^*)| &\leq \underbrace{H\cdot\nu\cdot \sum_{h=1}^{H-1}\sum_{\seq{a}{h-\ell}{h}\in\cA^{\ell+1}} \int_{\cO} \sum_{\lat_{h-1} \in[d]} \|u^t_{h, \lat_{h-1}}\|_1\cdot \PP^{\theta^*, \pi^t}({\lat_{h-1}} \given \seq{a}{h-\ell}{h-1})\ud o_h}_{\displaystyle \rm (i)} \notag\\
&\qquad + \underbrace{H\cdot\nu\cdot\|\vecb^{\param^t}_1 - \vecb^{\param^*}_1\|_1}_{\displaystyle \rm (ii)},
\#
where we write
\#\label{eq::pf_thm_linear_def_u}
u^t_{h, \lat_{h-1}} = \bigl(\ObsB^{\param^t}_h(a_h, o_h) - \ObsB^{\param^*}_h(a_h, o_h)\bigr)\UU^{\param^*}_{h}\PP^{\param^*}_{h}(s_{h} = \cdot\given {\lat_{h-1}} )
\#
for notational simplicity. We remark that the summation in term (i) is different from that in \eqref{eq::good_event_linear_II} of Lemma \ref{lem::good_event_linear}. In particular, by Lemma \ref{lem::good_event_linear}, it holds for all $h\in[H]$ and $t\in[T]$ that
\#\label{eq::pf_thm_linear_eq2}
%&\int_{\cA^{\ell+1}\times\cO} \sum^d_{i =1}\bigl\|\bigl(\ObsB^{\param^t}_h(a_h, o_h) - \ObsB^{\param^*}_h(a_h, o_h)\bigr)\UU^{\param^*}_{h}\PP^{\param^*}_h(s_h = \cdot\given {\lat_{h-1}} = i)\bigr\|_1 \\
&\sum_{\seq{a}{h-\ell}{h}\in\cA^{\ell+1}} \int_{\cO} \sum_{\lat_{h-1} \in[d]} \|w^t_{h, \lat_{h-1}}\|_1 \cdot \PP^{\param^*, t}({\lat_{h-1}}\given \seq{a}{h-\ell}{h-1}) \ud o_h \notag\\
&\qquad=\cO(\gamma\cdot \nu\cdot d\cdot A^{2k+\ell}\cdot \sqrt{w_{\est}\cdot(k+\ell)\cdot\log(H\cdot A\cdot T)/t})
\#
with probability at least $1 - \delta$, where we define
\$
\PP^{\param^*, t}({\lat_{h-1}}\given \seq{a}{h-\ell}{h-1}) = \frac{1}{t}\cdot\sum^{t-1}_{\omega = 0} \PP^{\param^*, \pi^\omega}({\lat_{h-1}}\given \seq{a}{h-\ell}{h-1}).
\$
The only difference between the left-hand side of \eqref{eq::pf_thm_linear_eq2} and the term (i) in \eqref{eq::pf_thm_linear_eq1.5} is the conditional density of the bottleneck factor $\lat_{h-1}$, which follows the visitation of $\pi^t$, namely, $\PP^{\param^*, \pi^t}$, in \eqref{eq::pf_thm_linear_eq1.5} but the mixture of visitation $\PP^{\param^*, t}$ in \eqref{eq::pf_thm_linear_eq2}. To upper bound \eqref{eq::pf_thm_linear_eq2} by \eqref{eq::pf_thm_linear_eq1.5}, we utilize the calculation trick proposed by \cite{jin2020sample}. In particular, we utilize the following lemma.
\begin{lemma}[Lemma 16 of \cite{jin2020sample}]
\label{lem::sum_trick}
Let $0\leq z_t \leq C_z$ and $0\leq w_t\leq C_w$ for all $t\in[T]$. We define $S_{t} = (1/t)\cdot \sum^{t}_{i = 1}w_i$ and $S_0 = 0$. Given 
\$
z_t\cdot S_{t-1} \leq C_z\cdot C_w \cdot C \cdot \sqrt{1/t}
\$
for all $t\in[T]$, it holds that
\$
\frac{1}{T}\cdot \sum^T_{t = 1} z_t\cdot w_t \leq 2C_z\cdot C_w \cdot (C+1)\cdot \sqrt{1/K}\cdot\log T.
\$
Here $C >0$ is an absolute constant.
\end{lemma}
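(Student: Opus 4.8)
The plan is to bound $\sum_{t=1}^T z_t w_t$ and divide by $T$, controlling each summand through the only two pieces of information available: the hypothesis $z_t S_{t-1}\le C_z C_w C\sqrt{1/t}$ and the a priori cap $z_t w_t\le C_z C_w$ coming from $z_t\le C_z$ and $w_t\le C_w$. I would introduce the running sum $U_t=\sum_{i=1}^t w_i = t S_t$ (with $U_0=0$), so that $w_t=U_t-U_{t-1}$ and $S_{t-1}=U_{t-1}/(t-1)$, and then split the index set $[T]$ according to whether the current increment $w_t$ is dominated by the running average $S_{t-1}$ or not. The intuition is that the hypothesis already gives the right order on the ``regular'' indices, while the extra $\log T$ in the target is exactly the price paid on the rare ``spike'' indices.

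On the regular indices, where $w_t\le S_{t-1}$, the hypothesis applies directly: $z_t w_t\le z_t S_{t-1}\le C_z C_w C\sqrt{1/t}$. Summing and using $\sum_{t=1}^T\sqrt{1/t}\le 2\sqrt{T}$ bounds the regular contribution by $2 C_z C_w C\sqrt{T}$, which after dividing by $T$ already matches the target order $\sqrt{1/T}$ with no logarithmic factor.

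The main obstacle is the spike indices, where $w_t>S_{t-1}$ and the average no longer controls the increment. Here I would use the cumulative potential $\log U_t$, which is non-decreasing with total growth $\log(U_T/U_1)$, and combine both bounds. Writing $z_t w_t\le C_z C_w C\sqrt{1/t}\cdot(t-1)\,(U_t-U_{t-1})/U_{t-1}$ exposes the ratio $(U_t-U_{t-1})/U_{t-1}$, and I would split the spikes further: for moderate spikes with $U_t\le 2U_{t-1}$ the elementary inequality $(U_t-U_{t-1})/U_{t-1}\le 2\log(U_t/U_{t-1})$ turns the term into $2C_z C_w C\sqrt{T}\,\log(U_t/U_{t-1})$ (after bounding the weight $\sqrt{1/t}\cdot(t-1)\le\sqrt{T}$), and these telescope against $\log(U_T/U_1)$; for large spikes with $U_t>2U_{t-1}$ each at least doubles $U_t$, so there are at most $\log_2(U_T/U_1)$ of them and each is absorbed by the cap $z_t w_t\le C_z C_w$. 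The cap is essential precisely to prevent a single spike from blowing up when $U_{t-1}$ is near zero, which is the reason one cannot rely on the hypothesis alone in this regime.

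Combining the two regimes gives $\sum_{t=1}^T z_t w_t\le 2 C_z C_w (C+1)\sqrt{T}\,\log T$ after collecting constants, and dividing by $T$ yields the claim (with the trailing $K$ read as $T$). The delicate part is entirely in the spike regime: one must run the cap $z_t\le C_z$ and the shrinking bound $z_t S_{t-1}\le C_z C_w C\sqrt{1/t}$ in tandem so that the potentially unbounded ratios $w_t/S_{t-1}$ are tamed into a telescoping logarithmic potential that loses only a single $\log T$ factor, and one must handle the initialization carefully -- discarding the prefix of zero increments and paying only a constant (or logarithmically many) capped terms near the start so that $\log(U_T/U_1)=O(\log T)$ rather than blowing up when the earliest positive $w_t$ is tiny.
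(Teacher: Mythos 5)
The paper itself contains no proof of this lemma---it is quoted as Lemma 16 of \cite{jin2020sample} and the ``proof'' is a pointer to that reference---so your argument has to stand on its own, and it almost does: the regular/spike decomposition, the regular-index bound $z_tw_t\le z_tS_{t-1}\le C_zC_wC\sqrt{1/t}$ with $\sum_{t\le T}\sqrt{1/t}\le 2\sqrt{T}$, and the moderate-spike inequality $(U_t-U_{t-1})/U_{t-1}\le 2\log(U_t/U_{t-1})$ for $U_t\le 2U_{t-1}$ are all correct. The genuine gap is the initialization of your logarithmic potential. Both your moderate-spike telescope and your large-spike count are bounded by $\log(U_T/U_1)$, and your proposed fix---discarding the prefix of zero increments---does not make this $O(\log T)$: nothing prevents the first positive increment from being $w_1=e^{-T^2}$ (the hypothesis constrains $z_tS_{t-1}$, not $w_t$), in which case $\log(U_T/U_1)$ is of order $T^2$. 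Worse, your large-spike accounting ``at most $\log_2(U_T/U_1)$ of them, each absorbed by the cap $C_zC_w$'' yields a total of $C_zC_w\log_2(U_T/U_1)$, which can be of order $C_zC_wT$: take $w_t=2^{t-T}$ and $z_t=1$, which satisfies the hypothesis with a modest absolute $C$, and observe that \emph{every} index $t\ge 2$ is then a large spike. So the chain of inequalities you wrote down does not prove the lemma; it is vacuous in this regime, not merely loose in constants.

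The repair is local and uses the cap you already invoke, but multiplicatively rather than per-count. Either threshold at $U\approx C_w$: spikes occurring while $U_{t-1}\le C_w$ cost at most $C_zw_t$, and these increments sum to $O(C_zC_w)$ (moderate spikes there keep $U_t\le 2C_w$, so their increments total $\le 2C_w$; large spikes there have $w_t\le U_t$ with the values $U_{t_j}$ at least doubling, hence a geometric series $\le 2C_w$ summed backward), while above the threshold the telescoped logarithm is $\le\log(C_wT/C_w)=\log T$ and at most $\log_2 T$ large spikes remain, each capped. Cleaner still---and this is the standard trick---run your potential on the \emph{shifted} sum $U_t+C_w$: since $w_t\le C_w\le U_{t-1}+C_w$, the shifted sum never more than doubles, so large spikes disappear entirely. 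Indeed, the hypothesis plus the cap give $z_t\le\min\bigl(C_z,\ C_zC_wC(t-1)/(\sqrt{t}\,U_{t-1})\bigr)\le 2C_zC_w(C+1)\sqrt{t}/(U_{t-1}+C_w)$ (split on $U_{t-1}\ge C_w$ or not), whence $z_tw_t\le 4C_zC_w(C+1)\sqrt{T}\cdot\log\bigl((U_t+C_w)/(U_{t-1}+C_w)\bigr)$ by your own inequality $x-1\le 2\log x$ on $[1,2]$, and the sum telescopes to $4C_zC_w(C+1)\sqrt{T}\log(1+U_T/C_w)\le 4C_zC_w(C+1)\sqrt{T}\log(1+T)$ with no case analysis at all. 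This matches the stated bound up to the absolute constant (your route, even repaired, also does not recover the literal factor $2$; given that the statement as printed already has typos such as $\sqrt{1/K}$ for $\sqrt{1/T}$, the constant is clearly not meant to be sharp).
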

\begin{proof}
See \cite{jin2020sample} for a detailed proof.
\end{proof}
It thus follows from \eqref{eq::pf_thm_linear_eq2} and Lemma \ref{lem::sum_trick} that
\#\label{eq::pf_thm_linear_eq3}
%&\sum_{\seq{a}{h-\ell}{h-1}\in\cA^{\ell}}\int_{\cO} \sum^d_{i =1}\bigl\|\bigl(\ObsB^{\param^t}_h(a_h, o_h) - \ObsB^{\param^*}_h(a_h, o_h)\bigr)\UU^{\param^*}_{h}\PP^{\param^*}_h(s_h = \cdot\given {\lat_{h-1}} = i)\bigr\|_1 \cdot \PP^{\param^*, \pi^t}({\lat_{h-1}} = i\given \seq{a}{h-\ell}{h-1}) \ud o_h \notag\\
%&\qquad
{\rm (i)}=\cO\bigl(\gamma\cdot\nu^2\cdot d\cdot H\cdot A^{2k+\ell}\cdot \sqrt{w_{\est}\cdot(k+\ell)\cdot\log T\cdot\log( H\cdot A\cdot T)/t}\bigr)
\#
with probability at least $1 - \delta$. Meanwhile, by \eqref{eq::good_event_linear_I} of Lemma \ref{lem::good_event_linear}, it holds that
\#\label{eq::pf_thm_linear_eq4}
{\rm(ii)} = H\cdot\nu\cdot\|\vecb^{\param^t}_1 - \vecb^{\param^*}_1\|_1 = \cO\bigl(H\cdot\nu^2\cdot A^{2k}\sqrt{w_{\est}\cdot (k+\ell)\cdot\log(H\cdot A\cdot T)/t}\bigr)
\#
with probability at least $1-\delta$. Finally, by plugging \eqref{eq::pf_thm_linear_eq3} and \eqref{eq::pf_thm_linear_eq4} into \eqref{eq::pf_thm_linear_eq2}, it holds for all $t\in[T]$ that
\#\label{eq::pf_thm_linear_eq5}
&|V^{\pi^t}(\param^t) - V^{\pi^t}(\param^*)| \notag\\
&\qquad= \cO\bigl(\gamma\cdot \nu^2\cdot d\cdot H \cdot A^{2k+\ell}\cdot\log T\cdot \sqrt{w_{\est}\cdot(k+\ell)\cdot\log( H\cdot A\cdot T)/t}\bigr)
\#
with probability at least $1-\delta$. Combining \eqref{eq::pf_thm_linear_eq1} and \eqref{eq::pf_thm_linear_eq5}, it holds with probability at least $1-\delta$ that
\$
&V^{*}(\param^*) - V^{\overline\pi^T}(\param^*) \notag\\
&\qquad= \cO\bigl(\gamma\cdot \nu^2\cdot d\cdot H \cdot A^{2k+\ell}\cdot\log T\cdot \sqrt{w_{\est}\cdot(k+\ell)\cdot\log( H\cdot A\cdot T)/T}\bigr).
\$
Thus, by setting 
\$
T = \cO\bigl(\gamma^2\cdot \nu^4\cdot d^2\cdot H^2\cdot A^{2(2k+\ell)}\cdot(k+\ell)\cdot \log( H\cdot A/\epsilon)/\epsilon^2 \bigr),
\$
it holds with probability at least $1-\delta$ that $V^{*}(\param^*) - V^{\overline\pi^T}(\param^*) \leq \epsilon$. Thus, we complete the proof of Theorem \ref{thm::sample_complexity_linear}.

\end{proof}

\section{Proof of Auxiliary Result}
\label{sec::pf_aux_res}
In the sequel, we present the proof of the auxiliary results in \S\ref{sec::pf_main}.

\subsection{Proof of Lemma \ref{lem::perf_diff_linear}}
\label{sec::pf_lem_perf_diff_linear}
\begin{proof}
By Lemma \ref{lem::traj_density_lemma}, it holds for all policy $\pi\in\Pi$ and parameter $\param\in\Param$ that
\$
V^\pi(\param) &= \int_{\cO^{H}} r(\seq{o}{1}{H-1})\cdot\PP^{\pi, \param}(\seq{o}{1}{H-1}) \ud \seq{o}{1}{H-1}\notag\\
&= \int_{\cO^{H}} r(\seq{o}{1}{H-1})\cdot\PP^{\param}\bigl(\seq{o}{1}{H-1}\biggiven \seq{(a^\pi)}{1}{H}\bigr) \ud \seq{o}{1}{H-1},
\$
where $\seq{(a^\pi)}{1}{H} = (a^\pi_1, \ldots, a^\pi_H)$ and the actions $a^\pi_{h} = \pi(\seq{o}{1}{h}, \seq{(a^\pi)}{1}{h-1})$ are taken by the policy $\pi$ for all $h\in[H]$. Following from the fact that $0\leq r(\seq{o}{1}{H-1}) \leq H$ for all observation array $\seq{o}{1}{H-1}\in\cO^{H}$, we obtain for all policy $\pi\in\Pi$ and parameters $\param, \param'\in\Param$ that
\#\label{eq::perf_diff_linear_eq-1}
|V^\pi(\param) - V^\pi(\param')| \leq H\cdot \int_{\cO^{H}}\bigl|\PP^{\param}\bigl(\seq{o}{1}{H-1}\biggiven \seq{(a^\pi)}{1}{H}\bigr) - \PP^{\param'}\bigl(\seq{o}{1}{H-1}\biggiven \seq{(a^\pi)}{1}{H}\bigr)\bigr| \ud\seq{o}{1}{H-1}, 
\#
where the actions $a^\pi_{h}$ are taken by the policy $\pi$ for all $h\in[H]$. In the sequel, we utilize a slight modification of Lemma \ref{lem::traj_density_linear}. In particular, following the same calculation as the proof of Lemma \ref{lem::traj_density_linear} in \S\ref{sec::pf_lem_traj_density_linear}, we have
\$
\PP^\param(\tau^{H+k}_1) = \bigl[\ObsB^{\param}_{H-1}(a_{H-1}, o_{H-1})\ldots \ObsB^{\param}_1(a_1, o_1)\vecb^\param_1\bigr](\tau^{H+k}_{H}).
\$
Thus, by marginalizing over the dummy future observations $\seq{o}{H+1}{H+k}$ and fixing the final observation $o_H$, we obtain for all dummy future actions $\seq{a}{H}{H+k-1}$ that
\#\label{eq::perf_diff_linear_eq0}
\PP^\param(\tau^{H}_1) = \int_{\cO^k} \ind_{o_H, \seq{a}{H}{H+k-1}}(\traj^{H+k}_1)\cdot\PP^\param(\traj^{H+k}_1) \ud \seq{o}{H+1}{H+K},
\#
where we define $\ind_{o_H, \seq{a}{H}{H+k-1}}$ the indicator that takes value one at the final observation $o_H$ and the fixed dummy future actions $\seq{a}{H}{H+k-1}$. By plugging \eqref{eq::perf_diff_linear_eq0} into \eqref{eq::perf_diff_linear_eq-1}, we have
\#\label{eq::perf_diff_linear_eq0.1}
|V^\pi(\param) - V^\pi(\param')| \leq H\cdot \int_{\cO^{H+k}} |f^{\param} - f^{\param'}|\bigl(\seq{o}{h}{h+k}, \seq{(a^\pi)}{H}{H+k-1}\bigr) \ud \seq{o}{1}{H+k},
\#
where $\seq{(a^\pi)}{H}{H+k-1} = (a^\pi_H, \ldots, a^\pi_{H+k-1})$ and the actions $a^\pi_{h}$ are taken by the policy $\pi$. Here we define
\$
f^{\param} = \ObsB^{\param}_{H-1}(a^\pi_{H-1}, o_{H-1})\ldots \ObsB^{\param}_1(a^\pi_1, o_1) \vecb^\param_1,
\$
where $\vecb^{\param}_1$ is the initial trajectory distribution for the first $k$ steps defined in Lemma \ref{lem::traj_density_linear}. Meanwhile, by the linearity of Bellman operators, we have
\#\label{eq::perf_diff_linear_eq0.2}
f^{\param} - f^{\param'} = \sum^{H-1}_{h = 0} \ObsB^{\param}_{H-1}( a^\pi_{H-1}, o_{H-1})\ldots \ObsB^\param_{h+1}(a^\pi_{h+1}, o_{h+1})v_h,
\#
where we define $v_0 = \vecb^{\param}_1 - \vecb^{\param'}_1$ and
\#\label{eq::def_vh_linear}
v_h &= \bigl(\ObsB^\param_h(a^\pi_h, o_h) - \ObsB^{\param'}_h(a^\pi_h, o_h)\bigr)\ObsB^{\param'}_{h-1}(a^\pi_{h-1}, o_{h-1})\ldots \ObsB^{\param'}_{1}(a^\pi_{1}, o_{1})\vecb^{\param'}_1, \quad h\in[H-1].
\#
By combining \eqref{eq::perf_diff_linear_eq0.1} and \eqref{eq::perf_diff_linear_eq0.2}, we have
\#\label{eq::perf_diff_linear_eq1}
|V^\pi(\param) - V^\pi(\param')| &\leq H\cdot \sum_{h=1}^{H-1}\int_{\cO^{H+k}} |\ObsB^{\param}_{H-1}( a^\pi_{H-1}, o_{H-1})\ldots \ObsB^\param_{h+1}(a^\pi_{h+1}, o_{h+1})v_h| \ud \seq{o}{1}{H+k}\notag\\
&\qquad + \int_{\cO^{H+k}}|\ObsB^{\param}_{H-1}(a^\pi_{H-1}, o_{H-1})\ldots \ObsB^\param_{1}(a^\pi_{1}, o_{1})(\vecb^{\param}_1 - \vecb^{\param'}_1)|\ud \seq{o}{1}{H+k}.
\#

The following lemma upper bounds the right-hand side of \eqref{eq::perf_diff_linear_eq1}.
\begin{lemma}
\label{lem::bound_RHS_linear}
Under Assumption \ref{asu::lin_future_suff}, it holds for all $h\in[H]$, $\pi\in\Pi$, and $v_h\in L^1(\cA^{k}\times\cO^{k+1})$ that
\$
\int_{\cO^{H+k-h}} |\ObsB^{\param}_{H-1}(o_{H-1}, a_{H-1})\ldots \ObsB^\param_{h+1}(a_{h+1}, o_{h+1})v_h| \ud \seq{o}{h+1}{H+k} \leq \nu\cdot \|v_h\|_1.
\$
\end{lemma}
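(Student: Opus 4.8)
The plan is to exploit the factorization of the Bellman operator recorded in \eqref{eq::claim_OO_linear}, namely $\ObsB^\param_j(a_j, o_j) = \UU^\param_{j+1}\Tran^\param_j(a_j)\diagO^\param_j(o_j)\UU^{\param,\dagger}_j$, and observe that the long product telescopes. Writing out $\ObsB^\param_{H-1}(a_{H-1}, o_{H-1})\cdots\ObsB^\param_{h+1}(a_{h+1}, o_{h+1})$ in terms of these factors, each adjacent pair produces an interior composition $\UU^{\param,\dagger}_{j+1}\UU^\param_{j+1}$, applied to the output of $\Tran^\param_j(a_j)$. The crucial point is that $\Tran^\param_j(a_j) g$ always lies in the span of the entries $\{[\psi^\param_j]_i\}_{i\in[d]}$, since by Assumption \ref{asu::low_rank_POMDP} one has $(\Tran^\param_j(a_j) g)(s_{j+1}) = \psi^\param_j(s_{j+1})^\top \int_{\cS} \phi^\param_j(s_j, a_j)\, g(s_j)\ud s_j$. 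On exactly this span the computation inside the proof of Lemma \ref{lem::left_inv} shows $\UU^{\param,\dagger}_{j+1}\UU^\param_{j+1}$ acts as the identity, so every interior factor collapses and the product reduces to
\$
\ObsB^\param_{H-1}(a_{H-1}, o_{H-1})\cdots\ObsB^\param_{h+1}(a_{h+1}, o_{h+1}) v_h = \UU^\param_H \Tran^\param_{H-1}(a_{H-1})\diagO^\param_{H-1}(o_{H-1})\cdots\Tran^\param_{h+1}(a_{h+1})\diagO^\param_{h+1}(o_{h+1})\UU^{\param,\dagger}_{h+1} v_h,
\$
with a single leading $\UU^\param_H$ and a single trailing $\UU^{\param,\dagger}_{h+1}$.

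Next I would bound the integral by peeling off the observations from the outside in, with the actions held fixed by the policy $\pi$ and Fubini applied throughout. First, integrating the absolute value over the $k+1$ output observations $\seq{o}{H}{H+k}$: since $(\UU^\param_H w)(\traj^{H+k}_H) = \int_{\cS} \PP^\param(\seq{o}{H}{H+k}\given s_H, \seq{a}{H}{H+k-1})\, w(s_H)\ud s_H$ and the conditional density integrates to one over the observations, this step contributes a factor at most one and leaves $\|w\|_1$, where $w$ is the state function fed into $\UU^\param_H$. Then, for $j$ running from $H-1$ down to $h+1$, I would integrate over $o_j$ using the two elementary $L^1$ estimates $\|\Tran^\param_j(a_j) g\|_1 \le \|g\|_1$ (from $\int_{\cS}\PP^\param_j(s_{j+1}\given s_j, a_j)\ud s_{j+1} = 1$) and $\int_{\cO}\|\diagO^\param_j(o_j) g\|_1 \ud o_j = \|g\|_1$ (from $\int_{\cO}\OO^\param_j(o_j\given s_j)\ud o_j = 1$). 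Each step contributes a factor at most one, so after integrating out all $H+k-h$ observations the bound collapses to $\|\UU^{\param,\dagger}_{h+1} v_h\|_1$, and a final application of Assumption \ref{asu::bdd_left_inv_linear} yields $\|\UU^{\param,\dagger}_{h+1} v_h\|_1 \le \nu\cdot\|v_h\|_1$, which is the claim. Notably, the single factor of $\nu$ originates entirely from the trailing pseudo-inverse, while the telescoping prevents any accumulation of $\nu$ or $A^k$ across steps.

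The main obstacle will be the telescoping step, and in particular justifying that the interior pseudo-inverse--forward-emission pairs truly cancel. The identity in Lemma \ref{lem::left_inv} is stated only for visitation densities $\PP^{\param,\pi}_h$, whereas what I actually need is that $\UU^{\param,\dagger}_{j+1}\UU^\param_{j+1}$ fixes every function in the range of $\Tran^\param_j(a_j)$. I would therefore extract from the proof of Lemma \ref{lem::left_inv} the sharper statement that $\UU^{\param,\dagger}_{j+1}\UU^\param_{j+1}$ restricts to the identity on the $d$-dimensional span of $\{[\psi^\param_j]_i\}_{i\in[d]}$; this is exactly what that proof establishes, since $\PP^{\param,\pi}_{j+1}$ enters there only through being a linear combination $\psi^\param_j(\cdot)^\top W_j(\param,\pi)$ and the computation gives $\UU^{\param,\dagger}_{j+1}\UU^\param_{j+1}(\psi^\param_j(\cdot)^\top W) = \psi^\param_j(\cdot)^\top M^{\param,\dagger}_{j+1} M^\param_{j+1} W = \psi^\param_j(\cdot)^\top W$ for any vector $W$. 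The remaining estimates are routine $L^1$ bookkeeping; the only care needed is that $v_h$ may be signed, which is harmless because every inequality above invokes only the triangle inequality together with the nonnegativity of the transition and emission kernels.
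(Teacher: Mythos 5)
Your proposal is correct and follows essentially the same route as the paper's proof: telescoping the product via the factorization $\ObsB^\param_j(a_j,o_j)=\UU^\param_{j+1}\Tran^\param_j(a_j)\diagO^\param_j(o_j)\UU^{\param,\dagger}_j$, collapsing each interior pair $\UU^{\param,\dagger}_{j+1}\UU^\param_{j+1}$ on the span of $\{[\psi^\param_j]_i\}_{i\in[d]}$ (the paper invokes precisely the fact that $\Tran^\param_j(a_j)f$ is linear in $\psi^\param_j$ together with Lemma \ref{lem::left_inv}), peeling off the observations one unit $L^1$ factor at a time, and incurring the single $\nu$ from $\|\UU^{\param,\dagger}_{h+1}v_h\|_1\le\nu\cdot\|v_h\|_1$. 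Your explicit extraction of the sharper statement that $\UU^{\param,\dagger}_{j+1}\UU^\param_{j+1}$ is the identity on that $d$-dimensional span is slightly more careful than the paper's one-line appeal, and you correctly attribute the final bound to Assumption \ref{asu::bdd_left_inv_linear}, whereas the paper cites Assumption \ref{asu::lin_future_suff} at that step.
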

\begin{proof}
See \S\ref{sec::pf_lem_bound_RHS_linear} for a detailed proof.
\end{proof}

By Lemma \ref{lem::bound_RHS_linear}, it follows from \eqref{eq::perf_diff_linear_eq1} that
\#\label{eq::perf_diff_linear_eq2}
|V^\pi(\param) - V^\pi(\param')| &\leq H\cdot\nu\cdot \sum_{h=0}^{H-1}\int_{\cO^{h}} \|v_h\|_1 \ud \seq{o}{1}{h} + H\cdot\nu\cdot \|\vecb^{\param}_1 - \vecb^{\param'}_1\|_1,
\#
where we define $v_0 = \vecb^{\param}_1 - \vecb^{\param'}_1$ and
\$
v_h &= \bigl(\ObsB^\param_h(a^\pi_h, o_h) - \ObsB^{\param'}_h(a^\pi_h, o_h)\bigr)\ObsB^{\param'}_{h-1}(a^\pi_{h-1}, o_{h-1})\ldots \ObsB^{\param'}_{1}(a^\pi_{1}, o_{1})\vecb^{\param'}_1, \quad h\in[H-1].
\$
Meanwhile, the following lemma upper bound the $L^1$-norm of $v_h$ for $h = 2,\ldots,H$.
\begin{lemma}
\label{lem::RHS_II_linear}
It holds for all $\pi\in\Pi$ and $h\in[H-1]$ that
\$
\int_{\cO^{h}} \|v_h\|_1 \ud \seq{o}{1}{h} &\leq \sum_{\seq{a}{h-\ell}{h}\in\cA^{\ell+1}} \int_{\cO} \sum_{\lat_{h-1}\in[d]} u_h \ud o_h,
\$
where we define
\$
v_h &= \bigl(\ObsB^\param_h(a^\pi_h, o_h) - \ObsB^{\param'}_h(a^\pi_h, o_h)\bigr)\ObsB^{\param'}_{h-1}(a^\pi_{h-1}, o_{h-1})\ldots \ObsB^{\param'}_{1}(a^\pi_{1}, o_{1})\vecb^{\param'}_1,  \notag\\
u_{h} &= \bigl\|\bigl(\ObsB^\param_h(a_h, o_h) - \ObsB^{\param'}_h(a_h, o_h)\bigr)\UU^{\param'}_{h}\PP^{\param'}_{h}(s_{h} = \cdot\given {\lat_{h-1}})\cdot \PP^{\pi}({\lat_{h-1}}\given \seq{a}{h-\ell}{h-1})\bigr\|_1,
\$
for all $h\in[H-1]$.
\end{lemma}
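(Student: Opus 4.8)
The plan is to reduce the integrated $L^1$-norm of $v_h$ to a weighted sum over the bottleneck factors $\lat_{h-1}$ by peeling off the leftmost forward-emission operator from the history product and decomposing the resulting state density. First I would rewrite the history product using the factorization $\ObsB^{\param'}_j(a_j,o_j) = \UU^{\param'}_{j+1}\Tran^{\param'}_j(a_j)\diagO^{\param'}_j(o_j)\UU^{\param',\dagger}_j$ from \eqref{eq::claim_OO_linear}. Starting from $\vecb^{\param'}_1 = \UU^{\param'}_1\mu_1$ and applying $\ObsB^{\param'}_1,\ldots,\ObsB^{\param'}_{h-1}$ in turn, each interior pair $\UU^{\param',\dagger}_j\UU^{\param'}_j$ cancels by Lemma \ref{lem::left_inv}. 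The point is that the proof of Lemma \ref{lem::left_inv} only uses that its argument lies in the span of $\psi^{\param'}_{j-1}$, and every partial product $\Tran^{\param'}_{j-1}(\cdot)\diagO^{\param'}_{j-1}(\cdot)$ produces an unnormalized density of exactly that form (with $\mu_1$ lying in the span of $\psi^{\param'}_0$ through the dummy step of the extended POMDP); so the cancellation applies to the sub-probability densities, not just to $\PP^{\param',\pi}_j$. The net effect is
\[
g := \ObsB^{\param'}_{h-1}(a^\pi_{h-1},o_{h-1})\cdots\ObsB^{\param'}_1(a^\pi_1,o_1)\vecb^{\param'}_1 = \UU^{\param'}_h\rho_h,
\]
where $\rho_h(s_h) = \PP^{\param'}(\seq{o}{1}{h-1},s_h\given\seq{(a^\pi)}{1}{h-1})$ is the joint density of the past observations and the current state under $\param'$ and the policy actions.

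Next I would use the bottleneck-factor structure. Since $[\psi^{\param'}_{h-1}(s_h)]_{\lat_{h-1}} = \PP^{\param'}(s_h\given\lat_{h-1})$ and $s_h$ is conditionally independent of the history given $\lat_{h-1}$, the density decomposes as $\rho_h(s_h) = \sum_{\lat_{h-1}\in[d]}\PP^{\param'}_h(s_h=\cdot\given\lat_{h-1})\cdot\PP^{\param'}(\seq{o}{1}{h-1},\lat_{h-1}\given\seq{(a^\pi)}{1}{h-1})$. Feeding this into $v_h = (\ObsB^\param_h(a^\pi_h,o_h)-\ObsB^{\param'}_h(a^\pi_h,o_h))\UU^{\param'}_h\rho_h$, using linearity of the operators and the triangle inequality (the coefficient being a nonnegative scalar), gives the pointwise bound
\[
\|v_h\|_1 \le \sum_{\lat_{h-1}\in[d]}\PP^{\param'}(\seq{o}{1}{h-1},\lat_{h-1}\given\seq{(a^\pi)}{1}{h-1})\cdot\bigl\|(\ObsB^\param_h(a^\pi_h,o_h)-\ObsB^{\param'}_h(a^\pi_h,o_h))\UU^{\param'}_h\PP^{\param'}_h(s_h=\cdot\given\lat_{h-1})\bigr\|_1,
\]
in which the norm factor depends on the history only through $a^\pi_h$ and $o_h$, matching the $u_h$ in the statement after pulling the scalar $\PP^\pi(\lat_{h-1}\given\seq{a}{h-\ell}{h-1})$ out of the norm.

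Finally I would integrate over $\seq{o}{1}{h}$ and eliminate the policy dependence. Because $\pi$ is deterministic, for each $o_h$ the action $a^\pi_h$ is a single element of $\cA$, so the norm factor is bounded by its sum over all $a_h\in\cA$; integrating $o_h$ then contributes $\sum_{a_h\in\cA}\int_\cO\|\cdots\|_1\,\ud o_h$. Integrating the coefficient over $\seq{o}{1}{h-1}$ collapses it to the marginal $\PP^{\param',\pi}(\lat_{h-1})$, which I would bound by $\sum_{\seq{a}{h-\ell}{h-1}}\PP^{\param',\pi}(\lat_{h-1}\given\seq{a}{h-\ell}{h-1})$ via the identity $\PP^{\param',\pi}(\lat_{h-1}) = \sum_{\seq{a}{h-\ell}{h-1}}\PP^{\param',\pi}(\seq{a}{h-\ell}{h-1})\,\PP^{\param',\pi}(\lat_{h-1}\given\seq{a}{h-\ell}{h-1})$ together with $\PP^{\param',\pi}(\seq{a}{h-\ell}{h-1})\le 1$. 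Combining the $a_h$-sum with the $\seq{a}{h-\ell}{h-1}$-sum produces the claimed sum over $\seq{a}{h-\ell}{h}\in\cA^{\ell+1}$, with $\PP^\pi$ read as $\PP^{\param',\pi}$.

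The main obstacle I anticipate is this last step: converting the deterministic-policy expectation into an \emph{unweighted} sum over action sequences while simultaneously shrinking the conditioning set from all past actions down to the last $\ell$ actions. Landing both the policy-unfolding inequality (each observation triggers exactly one action) and the marginal-versus-conditional inequality exactly on the stated right-hand side, rather than on a looser bound, is the delicate part; by comparison, the operator algebra of the first two paragraphs is essentially mechanical once \eqref{eq::claim_OO_linear} and Lemma \ref{lem::left_inv} are in hand.
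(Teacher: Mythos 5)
Your operator algebra is, up to reordering, the paper's own proof in \S\ref{sec::pf_lem_RHS_II_linear}: the collapse of the history product via \eqref{eq::claim_OO_linear} and Lemma \ref{lem::left_inv} (your observation that the cancellation only needs each partial product to lie in the span of $\psi^{\param'}_{j-1}$, with $\mu_1$ covered by the dummy step of the extended POMDP, is exactly the reasoning the paper uses), the bottleneck decomposition of the resulting state density, and pulling the norm factor out by the triangle inequality all match \eqref{eq::pf_lem_RHS_II_linear_eq2}--\eqref{eq::pf_lem_RHS_II_linear_eq5}. The one structural difference is order of operations: the paper enumerates the last $\ell+1$ actions \emph{first}, inside the Bellman product (\eqref{eq::pf_lem_RHS_II_linear_eq1}), replacing $a^\pi_{h-\ell},\ldots,a^\pi_h$ by fixed values summed over $\cA^{\ell+1}$ while earlier actions stay policy-determined. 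After collapsing, the coefficient is then already $\PP^{\param',\pi}(\seq{o}{1}{h-1},\lat_{h-1}\given \seq{a}{h-\ell}{h-1})$ with the last $\ell$ actions externally fixed, and integrating out $\seq{o}{1}{h-1}$ produces $\PP^{\param',\pi}(\lat_{h-1}\given \seq{a}{h-\ell}{h-1})$ as an exact marginalization --- the marginal-versus-conditional step you flagged as delicate never arises.

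That step is indeed where your proposal has a genuine, if repairable, gap. You collapse with all actions policy-dependent, integrate down to the unconditional marginal $\PP^{\param',\pi}(\lat_{h-1})$, and then invoke Bayes, $\PP(\lat_{h-1})=\sum_{\seq{a}{h-\ell}{h-1}}\PP(\seq{a}{h-\ell}{h-1})\,\PP(\lat_{h-1}\given \seq{a}{h-\ell}{h-1})$ with $\PP(\seq{a}{h-\ell}{h-1})\le 1$. But the conditional in that identity is \emph{observational} --- conditioning on the event that the policy's realized actions equal $\seq{a}{h-\ell}{h-1}$ --- whereas the $\PP^{\pi}(\lat_{h-1}\given \seq{a}{h-\ell}{h-1})$ in the lemma is \emph{interventional}: run $\pi$ until step $h-\ell$, then execute $\seq{a}{h-\ell}{h-1}$ regardless of observations. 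These differ in general, since the policy's realized actions are informative about the observation history and hence about $\lat_{h-1}$; and the interventional reading is forced by the downstream use in Lemma \ref{lem::good_event_linear}, where this density must match $\PP^{\param^*, t}(\lat_{h-1}\given \seq{a}{h-\ell}{h-1})$ for data gathered by forcing action sequences in Algorithm \ref{alg::POMDP_linear}. The inequality you want is still true, but the correct justification is a domination argument rather than Bayes: for a deterministic policy, the joint $\PP^{\param',\pi}\bigl(\lat_{h-1},\,(a^\pi_{h-\ell},\ldots,a^\pi_{h-1})=\seq{a}{h-\ell}{h-1}\bigr)$ integrates the same trajectory density as the interventional law, but only over observation histories consistent with the policy choosing $\seq{a}{h-\ell}{h-1}$; it is therefore at most $\PP^{\param',\pi}(\lat_{h-1}\given \seq{a}{h-\ell}{h-1})$, and summing over $\seq{a}{h-\ell}{h-1}\in\cA^{\ell}$ recovers the claim. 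This is precisely what the paper's early enumeration buys automatically; with that one repair, your route closes.
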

\begin{proof}
See \S\ref{sec::pf_lem_RHS_II_linear} for a detailed proof.
\end{proof}
Combining \eqref{eq::perf_diff_linear_eq2} and Lemma \ref{lem::RHS_II_linear}, we conclude that
\$
|V^\pi(\param) - V^\pi(\param')| &\leq H\cdot\nu\cdot \sum_{h=1}^{H-1}\sum_{\seq{a}{h-\ell}{h}\in\cA^{\ell+1}} \sum_{\lat_{h-1}\in[d]}\int_{\cO}\|u_{h, \lat_{h-1}}\|_1 \ud o_h + H\cdot\nu\cdot\|\vecb^{\param}_1 - \vecb^{\param'}_1\|_1,
\$
where we define
\$
u_{h, \lat_{h-1}} = \bigl(\ObsB^\param_h(a_h, o_h) - \ObsB^{\param'}_h(a_h, o_h)\bigr)\UU^{\param'}_{h}\PP^{\param'}_{h}(s_{h} = \cdot\given {\lat_{h-1}})\cdot \PP^{\param', \pi}({\lat_{h-1}}\given \seq{a}{h-\ell}{h-1}).
\$
Thus, we complete the proof of Lemma \ref{lem::perf_diff_linear}.
\end{proof}

\subsection{Proof of Lemma \ref{lem::good_event_linear}}
\label{pf::lem_good_event_linear}
\begin{proof}
We first show that $\param^*\in{\cC^t}$ with probability at least $1-\delta$. By Assumption \ref{asu::density_est}, it holds for all $t\in[T]$ that
\#\label{eq::pf_GE_linear_eq1}
\|\hat\vecb^{t}_1 - \vecb^{\param^*}_1\|_1 \leq \sqrt{w_{\est}\cdot (k+\ell)\cdot\log(H\cdot A\cdot T)/t}
\#
with probability at least $1-\delta$. Meanwhile, it holds that
\#\label{eq::pf_GE_linear_eq2}
&\int_{\cO^{\ell+1}}\|\ObsB^{\param^*}_h(a_h, o_h)\hat\pone^t_h(\utraj^{h-1}_{h-\ell}) - \hat\ptwo^t_h({\utraj^{h}_{h-\ell}})\|_1 \ud \seq{o}{h-\ell}{h}\notag\\
&\qquad\leq\int_{\cO^{\ell+1}}\|\ObsB^{\param^*}_h(a_h, o_h)\pone^{\param^*, \overline\pi^t}_h(\utraj^{h-1}_{h-\ell}) - \ptwo^{\param^*, \overline\pi^t}_h({\utraj^{h}_{h-\ell}})\|_1\ud \seq{o}{h-\ell}{h}\notag\\
&\qquad\qquad + \int_{\cO}\bigl\|\ObsB^{\param^*}_h(a_h, o_h)\bigl(\pone^{\param^*, \overline\pi^t}_h(\utraj^{h-1}_{h-\ell}) - \hat\pone^t_h(\utraj^{h-1}_{h-\ell})\bigr)\bigr\|_1\ud o_{h}\notag\\
&\qquad\qquad + \int_{\cO^{\ell+1}}\bigl\|\bigl(\ptwo^{\param^*, \overline\pi^t}_h - \hat\ptwo^t_h\bigr)({\utraj^{h}_{h-\ell}})\bigr\|_1 \ud \seq{o}{h-\ell}{h}.
\#
We now upper bound the right-hand side of \eqref{eq::pf_GE_linear_eq2}. According to the identity in \eqref{eq::linear_OO_id}, we have
\#\label{eq::pf_GE_linear_eq3}
\ObsB^{\param^*}_h(a_h, o_h)\pone^{\param^*, \overline\pi^t}_h(\seq{a}{h-\ell}{h-1}) - \ptwo^{\param^*, \overline\pi^t}_h({\utraj^{h}_{h-\ell}}) = 0
\#
for all $h\in[H]$ and $(\seq{a}{h-\ell}{h}, \seq{o}{h-\ell}{h})\in\cA^{\ell+1}\times\cO^{\ell+1}$. Meanwhile, by Assumption \ref{asu::density_est} and the update of density estimators $\hat\ptwo^t_h$ in \eqref{eq::def_density_est_p2_linear}, it holds for all $h\in[H]$, $t\in[T]$, and $\utraj^{h}_{h-\ell}\in\cA^\ell\times\cO^\ell$ that
\#\label{eq::pf_GE_linear_eq4}
\int_{\cO^{\ell+1}}\bigl\|\bigl(\ptwo^{\param^*, \overline\pi^t}_h - \hat\ptwo^t_h\bigr)({\utraj^{h}_{h-\ell}})\bigr\|_1 \ud \seq{o}{h-\ell}{h}
&=\sum_{\seq{a}{h+1}{h+k}\in\cA^{k}}\int_{\cO^{k+\ell+1}}  |(\hat \distP^{t}_h - \PP^{\param^*, \overline\pi^t}_h)(\traj^{h+k+1}_{h-\ell})|  \ud \seq{o}{h-\ell}{h+k+1}. \notag\\
& = \sum_{\seq{a}{h+1}{h+k}\in\cA^{k}} \|(\hat \distP^{t}_h - \PP^{\param^*, \overline\pi^t}_h)(\cdot\given \seq{a}{h-\ell}{h+k})\|_1 \notag\\
&\leq A^{k}\cdot \sqrt{w_{\est}\cdot(k+\ell)\cdot\log(H\cdot A\cdot T)/t}
\#
with probability at least $1 - \delta$. Similarly, by Assumption \ref{asu::density_est}, Lemma \ref{lem::norm_bound_linear}, and the update of density estimators $\hat\pone^t_h$ in \eqref{eq::def_density_est_p1_linear}, we further obtain for all $h\in[H]$ that
\#\label{eq::pf_GE_linear_eq5}
&\int_{\cO}\bigl\|\ObsB^{\param^*}_h(a_h, o_h)\bigl(\pone^{\param^*, \overline\pi^t}_h(\utraj^{h-1}_{h-\ell}) - \hat\pone^t_h(\utraj^{h-1}_{h-\ell})\bigr)\bigr\|_1\ud o_{h}\notag\\
&\qquad\leq \nu\cdot A^{2k}\cdot \sqrt{w_{\est}\cdot(k+\ell)\cdot\log(H\cdot A\cdot T)/t}
\#
with probability at least $1 - \delta$. Plugging \eqref{eq::pf_GE_linear_eq3}, \eqref{eq::pf_GE_linear_eq4}, and \eqref{eq::pf_GE_linear_eq5} into \eqref{eq::pf_GE_linear_eq2}, we obtain for all $h\in[H]$ that
\#\label{eq::pf_GE_linear_eq6}
\int_{\cO^{\ell+1}}\|\ObsB^{\param^*}_h(a_h, o_h)\hat\pone^t_h(\utraj^{h-1}_{h-\ell}) - \hat\ptwo^t_h({\utraj^{h}_{h-\ell}})\|_1 \ud \seq{o}{h-\ell}{h} \leq \beta_t\cdot\sqrt{1/t}
\#
with probability at least $1-\delta$. Thus, combining \eqref{eq::pf_GE_linear_eq2} and \eqref{eq::pf_GE_linear_eq6}, it holds that $\param^* \in{\cC^t}$ with probability at least $1-\delta$. In what follows, we prove \eqref{eq::good_event_linear_I} and \eqref{eq::good_event_linear_II}, respectively.

\smallsec{Part I: Proof of Upper Bound in \eqref{eq::good_event_linear_I}.} By the definition of confidence set ${\cC^t}$, it holds for all $t\in[T]$, $(\seq{a}{h}{h+k-1}, \seq{a}{h-\ell}{h})\in\cA^{k+\ell}$, and $h\in[H]$ that
\$
\|\vecb^{\param^t}_1 - \hat\vecb^t_1\|_1 \leq (1+\nu)\cdot A^{2k}\sqrt{w_{\est}\cdot (k+\ell)\cdot\log(H\cdot A\cdot T)/t}
\$
with probability at least $1- \delta$. Thus, by \eqref{eq::pf_GE_linear_eq1} and triangle inequality, it holds for all $t\in[T]$, $(\seq{a}{h}{h+k-1}, \seq{a}{h-\ell}{h})\in\cA^{k+\ell}$, and $h\in[H]$ that
\$
\|\vecb^{\param^t}_1 - \vecb^{\param^*}_1\|_1 &\leq \|\vecb^{\param^t}_1 - \hat\vecb^{t}_1\|_1 + \|\hat\vecb^{t}_1 - \vecb^{\param^*}_1\|_1\notag\\
& = \cO(\nu\cdot A^{2k}\cdot\sqrt{w_{\est}\cdot (k+\ell)\cdot\log(H\cdot A\cdot T)/t})
\$
with probability at least $1 - \delta$. Thus, we complete the proof of the upper bound in \eqref{eq::good_event_linear_I}.

%\int_{\cA^{\ell+1}\times\cO^{\ell+1}} \sum^d_{i =1}\bigl\|\bigl(\ObsB^{\param^t}_h(a_h, o_h) - \ObsB^{\param^*}_h(a_h, o_h)\bigr)\UU^{\param^*}_{h}\PP^{\param^*}_h(s_h = \cdot\given {\lat_{h-1}} = i)\bigr\|_1 \notag\\
%&\qquad\qquad\qquad\qquad\cdot \PP^{t}({\lat_{h-1}} = i\given \seq{a}{h-\ell}{h-1}) \ud \seq{a}{h-\ell}{h}, \seq{o}{h-\ell}{h}

\smallsec{Part II: Proof of Upper Bound in \eqref{eq::good_event_linear_II}.} It suffices to upper bound the following term for all $h\in[H]$ and $t\in[T]$,
\#\label{eq::pf_def_G}
G^t_h = \sum_{\seq{a}{h-\ell}{h}\in\cA^{\ell+1}} \biggl(\sum_{\lat_{h-1} \in[d]} \int_{\cO}\|u^t_{h, \lat_{h-1}}\|_1\ud o_{h}\biggr),
\#
where we write
\#\label{eq::pf_lem_linear_def_u}
&u^t_{h, \lat_{h-1}}(\cdot)= \bigl(\ObsB^{\param^t}_h(a_h, o_h) - \ObsB^{\param^*}_h(a_h, o_h)\bigr)\UU^{\param^*}_{h}\PP^{\param^*}_{h}(s_{h} = \cdot\given {\lat_{h-1}})\cdot \PP^{\theta^*, \overline\pi^t}({\lat_{h-1}} \given \seq{a}{h-\ell}{h-1})
\#
for notational simplicity. We remark that $u^t_{h, \lat_{h-1}}\in L^1(\cA^{k}\times\cO^{k+1})$ is a function in the space $L^1(\cA^{k}\times\cO^{k+1})$ by the definition of Bellman operators $\ObsB^{\param^t}_h$ and $\ObsB^{\param^*}_h$. In the sequel, we define the vector-valued function
\$
u^t_h = [u^t_{h, 1}, \ldots, u^t_{h, d}] \in \RR^d.
\$
It thus holds that 
\#\label{eq::pf_lem_linear_eq2}
G^t_h = \sum_{\seq{a}{h-\ell}{h}\in\cA^{\ell+1}}\int_{\cO^{k+2}}  \|u^t_h(\traj^{h+k+1}_{h+1})\|_1\ud \seq{o}{h}{h+k+1}.
\#
Here the integration and summation are taken with respect to the domain $\seq{o}{h+1}{h+k+1}\in\cO^{k+1}$ of $u^t_h$, the action sequence $\seq{a}{h-\ell}{h-1}\in\cA^\ell$ in \eqref{eq::pf_lem_linear_def_u}, and the action and observation pair $(a_h, o_h)\in\cA\times\cO$ in the Bellman operators that defines $u^t_{h, i}$ in \eqref{eq::pf_lem_linear_def_u}. We remark that in \eqref{eq::pf_lem_linear_eq2}, we abuse the notation slightly and write
\$
\|u^t_h(\traj^{h+k+1}_{h+1})\|_1 = \sum_{\lat_{h-1} \in[d]} |u^t_{h, \lat_{h-1}}(\traj^{h+k+1}_{h+1})|,
\$
where $u^t_{h, \lat_{h-1}}(\traj^{h+k+1}_{h+1})$ is defined in \eqref{eq::pf_lem_linear_def_u}. In the sequel, we upper bound the right-hand side of \eqref{eq::pf_lem_linear_eq2}. By Assumption \ref{asu::inv_ROO_linear}, it holds that
\#\label{eq::pf_lem_linear_(i)_eq1}
\|u^t_h\|_1 = \|\ObsF^{\param^*, \overline\pi_t, \dagger}_h\ObsF^{\param^*, \overline\pi_t}_h u^t_h\|_1 \leq \gamma\cdot \|\ObsF^{\param^*, \pi_t} u^t_h\|_1
\#
Meanwhile, by the definition of $\ObsF^{\param^*, \overline\pi^t}_h$, we have
\#\label{eq::pf_lem_linear_(i)_eq2}
&\ObsF^{\param^*, \overline\pi^t}_h u^t_h  = \sum_{\lat_{h-1} \in[d]} u^t_{h, \lat_{h-1}}\cdot \PP^{\param^*, \overline\pi^t}(\seq{o}{h-\ell}{h-1} \given {\lat_{h-1}}, \seq{a}{h-\ell}{h-1}).
\#
By the definition of $u^t_{h, \lat_{h-1}}$ in \eqref{eq::pf_lem_linear_def_u}, we further obtain that
\$
&u^t_{h, \lat_{h-1}}\cdot \PP^{\param^*, \overline\pi^t}(\seq{o}{h-\ell}{h-1} \given {\lat_{h-1}}, \seq{a}{h-\ell}{h-1}) = u^t_{h, \lat_{h-1}}\cdot \frac{\PP^{\param^*, \overline\pi^t}(\seq{o}{h-\ell}{h-1},{\lat_{h-1}} \given  \seq{a}{h-\ell}{h-1})}{\PP^{\param^*, \overline\pi^t}({\lat_{h-1}}  \given  \seq{a}{h-\ell}{h-1})}\notag\\
&\qquad = \bigl(\ObsB^{\param^t}_h(a_h, o_h) - \ObsB^{\param^*}_h(a_h, o_h)\bigr)\UU^{\param^*}_{h}\PP^{\param^*}_{h}(s_{h} = \cdot\given {\lat_{h-1}} )\cdot \PP^{\theta^*, \overline\pi^t}(\seq{o}{h-\ell}{h-1}, {\lat_{h-1}}\given \seq{a}{h-\ell}{h-1})\notag\\
&\qquad = \bigl(\ObsB^{\param^t}_h(a_h, o_h) - \ObsB^{\param^*}_h(a_h, o_h)\bigr)\UU^{\param^*}_{h}\PP^{\param^*, \overline\pi^t}(\seq{o}{h-\ell}{h-1}, {\lat_{h-1}} , s_{h} = \cdot\given \seq{a}{h-\ell}{h-1}).
\$
Thus, it follows from \eqref{eq::pf_lem_linear_(i)_eq2} and the linearity of Bellman operators $\ObsB^{\param}_h$ and $\UU^{\param}_{h}$ that 
\#\label{eq::pf_lem_linear_(i)_eq3}
(\ObsF^{\param^*, \pi_t} u^t_{h})({\utraj^{h-1}_{h-\ell}})&=\sum_{\lat_{h-1} \in[d]}\bigl(\ObsB^{\param^t}_h(a_h, o_h) - \ObsB^{\param^*}_h(a_h, o_h)\bigr)\UU^{\param^*}_{h}\PP^{\param^*, \overline\pi^t}(\seq{o}{h-\ell}{h-1}, {\lat_{h-1}}, s_{h} = \cdot\given \seq{a}{h-\ell}{h-1})\notag\\
&=\bigl(\ObsB^{\param^t}_h(a_h, o_h) - \ObsB^{\param^*}_h(a_h, o_h)\bigr)\UU^{\param^*}_{h}\PP^{\param^*, \overline\pi^t}(\seq{o}{h-\ell}{h-1}, s_{h} = \cdot\given \seq{a}{h-\ell}{h-1})\notag\\
&=\bigl(\ObsB^{\param^t}_h(a_h, o_h) - \ObsB^{\param^*}_h(a_h, o_h)\bigr)\pone^{\theta^*,\overline\pi^t }_h({\utraj^{h-1}_{h-\ell}}),
\#
where we marginalize the bottleneck factor $\lat_{h-1}$ in the second equality. Here recall that $\pone^{\theta^*,\overline\pi^t }_h$ is the density mapping defined in \eqref{eq::def_p1_linear} and $\overline\pi^t$ is the mixed policy in the $t$-th iteration. Plugging \eqref{eq::pf_lem_linear_(i)_eq3} into \eqref{eq::pf_lem_linear_(i)_eq1}, we obtain that
\#\label{eq::pf_lem_linear_(i)_bound}
\|u^t_h\|_1 \leq \gamma\cdot\sum_{\seq{a}{h-\ell}{h-1}\in\cA^{\ell}}\int_{\cO^{\ell}}|\bigl(\ObsB^{\param^t}_h(a_h, o_h) - \ObsB^{\param^*}_h(a_h, o_h)\bigr)\pone^{\theta^*,\overline\pi^t }_h({\utraj^{h-1}_{h-\ell}})| \ud \seq{o}{h-\ell}{h-1}.
\#
Plugging \eqref{eq::pf_lem_linear_(i)_bound} into \eqref{eq::pf_lem_linear_eq2}, we obtain that
\#\label{eq::pf_lem_linear_join}
&G^t_h\leq \gamma\cdot \sum_{\seq{a}{h-\ell}{h}\in\cA^{\ell+1}}\int_{\cO^{\ell+1}} \bigl\|\bigl(\ObsB^{\param^t}_h(a_h, o_h) - \ObsB^{\param^*}_h(a_h, o_h)\bigr)\pone^{\param^*, \overline\pi^t}_h({\utraj^{h-1}_{h-\ell}})\bigr\|_1 \ud \seq{o}{h-\ell}{h}.
\#
It remains to upper bound the right-hand side of \eqref{eq::pf_lem_linear_join}. By triangle inequality, we have
\#\label{eq::pf_GE_linear_II_eq1}
&\int_{\cO^{\ell+1}} \bigl\|\bigl(\ObsB^{\param^t}_h(a_h, o_h) - \ObsB^{\param^*}_h(a_h, o_h)\bigr)\pone^{\param^*, \overline\pi^t}_h({\utraj^{h-1}_{h-\ell}})\bigr\|_1 \ud \seq{o}{h-\ell}{h}\\
&\quad\leq\int_{\cO^{\ell+1}} \|\ObsB^{\param^t}_h(a_h, o_h) \pone^{\param^*, \overline\pi^t}_h({\utraj^{h-1}_{h-\ell}}) - \ptwo^{ \param^*, \overline\pi^t}_h({\utraj^{h}_{h-\ell}})\|_1 \ud \seq{o}{h-\ell}{h}\notag\\
&\quad\quad + \int_{\cO^{\ell+1}} \|\ObsB^{\param^*}_h(a_h, o_h) \pone^{\param^*, \overline\pi^t}_h({\utraj^{h-1}_{h-\ell}}) - \ptwo^{ \param^*, \overline\pi^t}_h({\utraj^{h}_{h-\ell}})\|_1 \ud \seq{o}{h-\ell}{h}.\notag
\#
By the identity in \eqref{eq::linear_OO_id}, it holds that
\#\label{eq::pf_GE_linear_II_eq0}
\ObsB^{\param^*}_h(a_h, o_h) \pone^{\param^*, \overline\pi^t}_h({\utraj^{h-1}_{h-\ell}}) - \ptwo^{ \param^*, \overline\pi^t}_h({\utraj^{h}_{h-\ell}}) = 0
\#
In the sequel, we upper bound the term
\$
\int_{\cO^{\ell+1}} \|\ObsB^{\param^t}_h(a_h, o_h) \pone^{\param^*, \overline\pi^t}_h({\utraj^{h-1}_{h-\ell}}) - \ptwo^{ \param^*, \overline\pi^t}_h({\utraj^{h}_{h-\ell}})\|_1 \ud \seq{o}{h-\ell}{h}
\$
on the right-hand side of \eqref{eq::pf_GE_linear_II_eq1}. The calculation is similar to that of the derivation of \eqref{eq::pf_GE_linear_eq6}. It holds for all $h\in[H]$ and $t\in[T]$ that
\#\label{eq::pf_GE_linear_II_eq2}
&\int_{\cO^{\ell+1}}\|\ObsB^{\param^t}_h(a_h, o_h)\pone^{\param^*, \overline\pi^t}_h(\utraj^{h-1}_{h-\ell}) - \ptwo^{\param^*, \overline\pi^t}_h({\utraj^{h}_{h-\ell}})\|_1 \ud \seq{o}{h-\ell}{h}\notag\\
&\qquad\leq\int_{\cO^{\ell+1}}\|\ObsB^{\param^t}_h(a_h, o_h)\hat\pone^t_h(\utraj^{h-1}_{h-\ell}) - \hat\ptwo^t_h({\utraj^{h}_{h-\ell}})\|_1\ud \seq{o}{h-\ell}{h} \notag\\
&\qquad\qquad+ \int_{\cO^{\ell+1}}\bigl\|\bigl(\ptwo^{\param^*, \overline\pi^t}_h - \hat\ptwo^t_h\bigr)({\utraj^{h}_{h-\ell}})\bigr\|_1 \ud \seq{o}{h-\ell}{h}\notag\\
&\qquad\qquad + \int_{\cO^{\ell+1}}\bigl\|\ObsB^{\param^t}_h(a_h, o_h)\bigl(\pone^{\param^*, \overline\pi^t}_h(\utraj^{h-1}_{h-\ell}) - \hat\pone^t_h(\utraj^{h-1}_{h-\ell})\bigr)\bigr\|_1\ud o_{h-\ell}^{h}.
\#
We now upper bound the right-hand side of \eqref{eq::pf_GE_linear_II_eq2}. By the definition of confidence set ${\cC^t}$, it holds for all $h\in[H]$ and $t\in[T]$ that
\#\label{eq::pf_GE_linear_II_eq3}
&\int_{\cO^{\ell+1}}\|\ObsB^{\param^t}_h(a_h, o_h)\hat\pone^t_h(\utraj^{h-1}_{h-\ell}) - \hat\ptwo^t_h({\utraj^{h}_{h-\ell}})\|_1\ud \seq{o}{h-\ell}{h}\leq \beta_t\cdot \sqrt{1/t}.
\#
Meanwhile, by Assumption \ref{asu::density_est} and the update of density estimators $\hat\ptwo^t_h$ in \eqref{eq::def_density_est_p2_linear}, it holds for all $h\in[H]$ and $t\in[T]$ that
\#\label{eq::pf_GE_linear_II_eq4}
\int_{\cO^{\ell+1}}\bigl\|\bigl(\ptwo^{\param^*, \overline\pi^t}_h - \hat\ptwo^t_h\bigr)({\utraj^{h}_{h-\ell}})\bigr\|_1 \ud \seq{o}{h-\ell}{h}&=\sum_{\seq{a}{h+1}{h+k} \in \cA^{k}}\int_{\cO^{k+\ell+1}}  |(\hat \distP^{t}_h - \PP^{\param^*, \overline\pi^t}_h)(\traj^{h+k+1}_{h-\ell})|  \ud \seq{o}{h-\ell}{h+k+1} \notag\\
&= \sum_{\seq{a}{h+1}{h+k}\in\cA^{k}} \|(\hat \distP^{t}_h - \PP^{\param^*, \overline\pi^t}_h)(\cdot\given \seq{a}{h-\ell}{h+k})\|_1 \notag\\
&\leq A^k\cdot \sqrt{w_{\est}\cdot(k+\ell)\cdot\log(H\cdot A\cdot T)/t}
\#
with probability at least $1 - \delta$. Similarly, by Assumption \ref{asu::density_est}, Lemma \ref{lem::norm_bound_linear}, and the update of density estimators $\hat\pone^t_h$ in \eqref{eq::def_density_est_p1_linear}, we further obtain for all $h\in[H]$ and $t\in[T]$ that
\#\label{eq::pf_GE_linear_II_eq5}
&\int_{\cO^{\ell+1}}\bigl\|\ObsB^{\param^t}_h(a_h, o_h)\bigl(\pone^{\param^*, \overline\pi^t}_h(\utraj^{h-1}_{h-\ell}) - \hat\pone^t_h(\utraj^{h-1}_{h-\ell})\bigr)\bigr\|_1\ud o_{h-\ell}^h\notag\\
&\qquad\leq \nu\cdot A^{2k}\cdot \sqrt{w_{\est}\cdot(k+\ell)\cdot\log(H\cdot A\cdot T)/t}
\#
with probability at least $1 - \delta$. Plugging \eqref{eq::pf_GE_linear_II_eq3}--\eqref{eq::pf_GE_linear_II_eq5} into \eqref{eq::pf_GE_linear_II_eq2}, we obtain for all $h\in[H]$ and $t\in[T]$ that
\#\label{eq::pf_GE_linear_II_eq6}
&\int_{\cO^{\ell+1}}\|\ObsB^{\param^*}_h(a_h, o_h)\hat\pone^t_h(\utraj^{h-1}_{h-\ell}) - \hat\ptwo^t_h({\utraj^{h}_{h-\ell}})\|_1 \ud \seq{o}{h-\ell}{h} \notag\\
&\qquad=\cO(\nu\cdot A^{2k}\cdot \sqrt{w_{\est}\cdot(k+\ell)\cdot\log(H\cdot A\cdot T)/t})
\#
with probability at least $1-\delta$. By plugging \eqref{eq::pf_GE_linear_II_eq0} and \eqref{eq::pf_GE_linear_II_eq6} into \eqref{eq::pf_GE_linear_II_eq1}, we obtain for all $h\in[H]$, $t\in[T]$, and $(\seq{a}{h}{h+k-1}, \seq{a}{h-\ell}{h})\in\cA^{k+\ell}$ that
\#\label{eq::pf_GE_linear_II_eq7}
&\int_{\cO^{\ell+1}} \bigl\|\bigl(\ObsB^{\param^t}_h(a_h, o_h) - \ObsB^{\param^*}_h(a_h, o_h)\bigr)\pone^{\param^*, \overline\pi^t}_h({\utraj^{h-1}_{h-\ell}})\bigr\|_1 \ud \seq{o}{h-\ell}{h}\notag\\
&\qquad = \cO(\nu\cdot A^{2k}\cdot \sqrt{w_{\est}\cdot(k+\ell)\cdot\log(H\cdot A\cdot T)/t})
\#
with probability at least $1-\delta$. Plugging \eqref{eq::pf_GE_linear_II_eq7} into \eqref{eq::pf_lem_linear_join}, we obtain for all $h\in[H]$ and $t\in[T]$ that
\$
G^t_h=\cO\bigl(\gamma\cdot\nu\cdot A^{2k+\ell}\cdot \sqrt{w_{\est}\cdot(k+\ell)\cdot\log(H\cdot A\cdot T)/t}\bigr)
\$
with probability at least $1-\delta$. Here $G^t_h$ is defined in \eqref{eq::pf_def_G}. Thus, we complete the proof of the upper bound in \eqref{eq::good_event_linear_II}.
\end{proof}

\subsection{Proof of Lemma \ref{lem::bound_RHS_linear}}
\label{sec::pf_lem_bound_RHS_linear}
\begin{proof}
Recall that we define linear operators $\{\Tran^\param_h, \diagO^\param_h\}_{h\in[H]}$ as follows,
\$
\bigl(\Tran^\param_h(a_h) f\bigr)(s_{h+1}) &= \int_{\cS} \PP^\param_{h}(s_{h+1}\given s_h, a_h) \cdot f(s_h) \ud s_h, \quad \forall f\in L^1(\cS), ~a_h\in\cA,\\
\bigl(\diagO^\param_{h}(o_h) f\bigr)(s_h) &= \OO^\param_h(o_h\given s_h)\cdot f(s_h),\quad \forall f \in L^1(\cS), ~o_h\in\cO .
\$
Recall that we have
\$
\ObsB^\param_h(a_h, o_h) = \UU^\param_{h+1}\Tran^\param_h(a_h)\diagO^\param_{h}(o_h)\UU^{\param, \dagger}_{h}.
\$
Thus, following from Lemma \ref{lem::left_inv} and the fact that $\Tran^\param_{h}(a_h) f$ is linear in $\psi^\param_h$, it further holds for all $h\in[H]$ and $v_h\in L^1(\cA^k\times\cO^{k+1})$ that
\#\label{eq::pf_lem_RHS_linear_eq0}
&B^{\param}_{H-1}(o_{H-1}, a_{H-1})\ldots \ObsB^\param_{h+1}(a_{h+1}, o_{h+1})v_h \notag\\
&\qquad= \UU^\param_{h} \Tran^\param_{H-1}(a_{H-1})\diagO^\param_{H-1}(o_{H-1})\ldots \Tran^\param_{h+1}(a_{h+1})\diagO^\param_{h+1}(o_{h+1})\UU^{\param, \dagger}_{h+1}v_h.
\#
We now prove Lemma \ref{lem::bound_RHS_linear} in the sequel. To begin with, it holds for all $h\in[H]$, $a_{h+1}\in\cA$, and $f\in L^1(\cS)$ that
\$
&\int_{\cO} \|\Tran^\param_{h+1}(a_{h+1})\diagO^\param_{h+1}(o_{h+1}) f\|_1 \ud o_{h+1} \notag\\
&\qquad\leq\int_{\cS^2\times\cO}\PP^\param(s_{h+2}\given s_{h+1}, a_{h+1})\cdot \OO^\param_{h+1}(o_{h+1}\given s_{h+1})\cdot |f(s_{h+1})| \ud o_{h+1}\ud s_{h+1}\ud s_{h+2} \notag\\
&\qquad = \int_{\cS} |f(s_{h+1})| \ud s_{h+1} = \|f\|_1.
\$
Inductively, it holds for all $h\in[H]$, $\seq{a}{h+1}{H-1}\in\cA^{H-h-1}$, and $f\in L^1(\cS)$ that
\#\label{eq::pf_lem_RHS_linear_eq1}
\int_{\cO^{H-h-1}}\|\Tran^\param_{H-1}(a_{H-1})\diagO^\param_{H-1}(o_{H-1})\ldots \Tran^\param_{h+1}(a_{h+1})\diagO^\param_{h+1}(o_{h+1})f\|_1\ud \seq{o}{h+1}{H-1}\leq \|f\|_1.
\#
Meanwhile, by the definition of $\UU_{H}^\param$ in \eqref{eq::def_U_linear} of Definition \ref{def::forward_emi_linear}, it holds for all $f\in L^1(\cS)$ and $\seq{a}{h}{h+k-1}\in \cA^{k}$ that
\#\label{eq::pf_lem_RHS_linear_eq2}
\int_{\cO^{k+1}}|(\UU_{H}^\param f)(\seq{o}{H}{H+k}, \seq{a}{H}{H+k-1})| \ud \seq{o}{H}{H+k} &\leq \int_{\cS\times\cO^{k+1}} \PP^\param(\seq{o}{H}{H+k}\given s_{H}, \seq{a}{H}{H+k-1})\cdot |f(s_H)| \ud \seq{o}{H}{H+k}\ud s_{H}\notag\\
& = \int_{\cS}|f(s_H)| \ud s_{H} = \|f\|_1.
\#
Combining \eqref{eq::pf_lem_RHS_linear_eq0}, \eqref{eq::pf_lem_RHS_linear_eq1}, and \eqref{eq::pf_lem_RHS_linear_eq2} with $h = H$, we obtain that
\#
&\int_{\cO^{H+k-h}} |\ObsB^{\param}_{H-1}(o_{H-1}, a_{H-1})\ldots \ObsB^\param_{h+1}(a_{h+1}, o_{h+1})v_h| \ud \seq{o}{h+1}{H+k} \notag\\
&\qquad =\int_{\cO^{H+k-h}}  | \UU^\param_{H} \Tran^\param_{H-1}(a_{H-1})\diagO^\param_{H-1}(o_{H-1})\ldots \Tran^\param_{h+1}(a_{h+1})\diagO^\param_{h+1}(o_{h+1})\UU^{\param, \dagger}_{h+1}v_h| \ud \seq{o}{h+1}{H+k}\notag\\
&\qquad \leq \int_{\cO^{H-h-1}}\|\Tran^\param_{H-1}(a_{H-1})\diagO^\param_{H-1}(o_{H-1})\ldots \Tran^\param_{h+1}(a_{h+1})\diagO^\param_{h+1}(o_{h+1})\UU^{\param, \dagger}_{h+1}v_h\|_1\ud \seq{o}{h+1}{H-1}\notag\\
&\qquad \leq \|\UU^{\param, \dagger}_{h+1}v_h\|_1.
\#
Finally, by Assumption \ref{asu::lin_future_suff}, it holds that
\$
\|\UU^{\param, \dagger}_{h+1}v_h\|_1 \leq \nu\cdot\|v_h\|_1.
\$
Thus, we completes the proof of Lemma \ref{lem::bound_RHS_linear}.
\end{proof}

\subsection{Proof of Lemma \ref{lem::RHS_II_linear}}
\label{sec::pf_lem_RHS_II_linear}
\begin{proof}
Recall that we define
\$
v_h &= \bigl(\ObsB^\param_h(a^\pi_h, o_h) - \ObsB^{\param'}_h(a^\pi_h, o_h)\bigr)\ObsB^{\param'}_{h-1}(a^\pi_{h-1}, o_{h-1})\ldots \ObsB^{\param'}_{1}(a^\pi_{1}, o_{1})\vecb^{\param'}_1, \quad \forall h\in[H],
\$
where the actions $a^\pi_{h}$ are taken by the policy $\pi$ for all $h\in[H]$. To accomplish the proof, we first handle the dependency of the actions $a^\pi_j$ on policy $\pi$ for $h-\ell\leq j\leq h$. To this end, we utilize the following upper bound,
\#\label{eq::pf_lem_RHS_II_linear_eq1}
\int_{\cO^{h}} \|v_h\|_1 \ud \seq{o}{1}{h} \leq \sum_{\seq{a}{h-\ell}{h}\in\cA^{\ell+1}} \int_{\cO^{H}}& \bigl\|\bigl(\ObsB^\param_h(a^\pi_h, o_h) - \ObsB^{\param'}_h(a_h, o_h)\bigr)\ObsB^{\param'}_{h-1}(a_{h-1}, o_{h-1})\ldots\\
&\ldots \ObsB^{\param'}_{h-\ell}(a_{h-\ell}, o_{h-\ell})\ObsB^{\param'}_{h-\ell-1}(a^\pi_{h-\ell-1}, o_{h-\ell-1}) \ldots\vecb^{\param'}_1\bigr\|_1 \ud \seq{o}{1}{h}.\notag
\#
Here we abuse the notation of index slightly for simplicity. We remark that the sequence of product of Bellman operators $B^{\param'}_{j}(a_j, o_j)$ ends at the index $j = 1$. Recall that we have
\$
\ObsB^\param_h(a_h, o_h) = \UU^\param_{h+1}\Tran^\param_h(a_h)\diagO^\param_{h}(o_h)\UU^{\param, \dagger}_{h},
\$
where the linear operators $\{\Tran^\param_h, \diagO^\param_h\}_{h\in[H]}$ are defined in \eqref{eq::def_tranOP_linear} and \eqref{eq::def_diagO_linear}, respectively. Thus, by Lemma \ref{lem::left_inv}, it holds for the right-hand side of \eqref{eq::pf_lem_RHS_II_linear_eq1} that
\#\label{eq::pf_lem_RHS_II_linear_eq2}
&\ObsB^{\param'}_{h-1}(a_{h-1}, o_{h-1})\ldots\ObsB^{\param'}_{h-\ell}(a_{h-\ell}, o_{h-\ell})\ObsB^{\param'}_{h-\ell-1}(a^\pi_{h-\ell-1}, o_{h-\ell-1}) \ldots\vecb^{\param'}_1\notag\\
&\qquad = \UU^{\param'}_{h} \Tran^{\param'}_{h-1}(a_{h-1})\diagO^\param_{h-1}(o_{h-1})\ldots \diagO^{\param'}_{1}(o_{1}) \mu_1,
\#
where $\mu_1 \in L^1(\cS)$ is the initial state probability density function. By the definition of linear operators $\{\Tran^\param_h, \diagO^\param_h\}_{h\in[H]}$ in \eqref{eq::def_tranOP_linear} and \eqref{eq::def_diagO_linear}, respectively, it further holds that
\#\label{eq::pf_lem_RHS_II_linear_eq3}
&\Tran^{\param'}_{h-1}(a_{h-1})\diagO^{\param'}_{h-1}(o_{h-1})\Tran^{\param'}_{h-2}(a_{h-2})\ldots \diagO^{\param'}_{1}(o_{1}) \mu_1 = \PP^{\param', \pi}(\seq{o}{1}{h-1}, s_{h} = \cdot \given \seq{a}{h-\ell}{h-1}) \in L^1(\cS).
\#
By plugging \eqref{eq::pf_lem_RHS_II_linear_eq2} and \eqref{eq::pf_lem_RHS_II_linear_eq3} into \eqref{eq::pf_lem_RHS_II_linear_eq1}, we obtain that
\#\label{eq::pf_lem_RHS_II_linear_eq4}
&\int_{\cO^{h}} \|v_h\|_1 \ud \seq{o}{1}{h}\leq \sum_{\seq{a}{h-\ell}{h}\in\cA^{\ell+1}} \int_{\cO^{h}} \bigl\|\bigl(\ObsB^\param_h(a_h, o_h) - \ObsB^{\param'}_h(a_h, o_h)\bigr)\UU^{\param'}_{h} \PP^{\param', \pi}(\seq{o}{1}{h-1}, s_{h} = \cdot \given \seq{a}{h-\ell}{h-1})\bigr\|_1 \ud \seq{o}{1}{h}.
\#
Meanwhile, it holds for all $s_{h}\in\cS$ that
\$
&\PP^{\param', \pi}(\seq{o}{1}{h-1}, s_{h} \given \seq{a}{h-\ell}{h-1}) = \sum_{\lat_{h-1}\in[d]} \PP^{\param'}_{h-1}(s_{h}\given {\lat_{h-1}})\cdot \PP^{\param', \pi}_{h-1}({\lat_{h-1}}, \seq{o}{h-\ell}{h-1}  \given \seq{a}{h-\ell}{h-1}).
\$
Thus, it follows from Jensen's inequality that
\#\label{eq::pf_lem_RHS_II_linear_eq5}
&\int_{\cO^{H}} \bigl\|\bigl(\ObsB^\param_h(a^\pi_h, o_h) - \ObsB^{\param'}_h(a_h, o_h)\bigr)\UU^{\param'}_{h}\PP^{\param', \pi}(\seq{o}{1}{h-1}, s_{h} = \cdot \given \seq{a}{h-\ell}{h-1})\bigr\|_1 \ud \seq{o}{1}{H-1}\notag\\
&\qquad \leq \int_{\cO^{H}} \sum_{\lat_{h-1}\in[d]} w_{h}\cdot \PP^{\param', \pi}(\seq{o}{1}{h-1}, {\lat_{h-1}} \given \seq{a}{h-\ell}{h-1}) \ud \seq{o}{1}{H-1}\notag\\
&\qquad = \int_{\cO} \sum_{\lat_{h-1}\in[d]} w_{h}\cdot \PP^{\param', \pi}({\lat_{h-1}} \given \seq{a}{h-\ell}{h-1}) \ud o_{h},
\#
where we write
\$
w_{h} = \bigl\|\bigl(\ObsB^\param_h(a^\pi_h, o_h) - \ObsB^{\param'}_h(a_h, o_h)\bigr)\UU^{\param'}_{h} \PP^{\param'}_{h-1}(s_h = \cdot\given {\lat_{h-1}})\bigr\|_1
\$
for notational simplicity. By plugging \eqref{eq::pf_lem_RHS_II_linear_eq5} into \eqref{eq::pf_lem_RHS_II_linear_eq4}, we complete the proof of Lemma \ref{lem::RHS_II_linear}.
\end{proof}

%%%%%%%%%%%%%%%%%%%%%%%%%%%%%%%%%%%%%%%%%%%%%%%%%%%%%%%%%%%%%%%%%%%
%\newpage
\section{Analysis for the Tabular POMDPs}
\label{sec::tabular_POMDP}
In the sequel, we present an analysis for the tabular POMDPs. We remark that our analysis extends the previous analysis of undercomplete POMDPs \citep{azizzadenesheli2016reinforcement, guo2016pac, jin2020sample}, where the emission matrices are left invertible. In particular, our analysis handles the overcomplete POMDPs with $O<S$, where $O$ and $S$ are the size of observation and state spaces $\cO$ amd $\cS$, respectively.

\subsection{Bellman Operator}
\label{sec::bellman_tabular}
We first introduce notations for matrices to simplify the discussions of POMDPs.
\smallsec{Notation.} We denote by $M = [f(i, j)]_{i, j} \in\RR^{n\times m}$ the $n$-by-$m$ matrix, where $f(i, j) \in\RR$ is the element in the $i$-th row and $j$-th column of $M$. In addition, for a matrix $M$, we denote by $M_{i,j}$ the $(i, j)$-th element of $M$.

In addition, recall that we denote by $\traj^{h+k}_h = \{o_h, a_h, \ldots, a_{h+k-1}, o_{h+k}\}$ the trajectory from the $h$-th observation $o_h$ to the $(h+k)$-th observation $o_{h+k}$. Similarly, we denote by $\utraj^k_h = (o_h,  a_h, \ldots, o_{h+k}, a_{h+k})$ the trajectory from the $h$-th observation $o_h$ to the $(h+k)$-th action $a_{h+k}$. We denote by $\seq{a}{h}{h+k-1} = (a_h, \ldots, a_{h+k-1})$ and $\seq{o}{h}{h+k} = (o_h, \ldots, o_{h+k})$ the action and observation sequences, respectively. Meanwhile, recall that we write
\$
\PP^\pi(\traj^{h+k}_h) &= \PP^\pi(o_h, \ldots, o_{h+k} \given a_{h}, \ldots, a_{h+k-1})  = \PP^\pi(\seq{o}{h}{h+k} \given  \seq{a}{h}{h+k-1}),\notag\\
\PP^\pi(\traj^{h+k}_h\given s_h) &= \PP^\pi(o_h, \ldots, o_{h+k} \given s_h, a_{h}, \ldots, a_{h+k-1})  = \PP^\pi(\seq{o}{h}{h+k} \given s_h,  \seq{a}{h}{h+k-1}).
\$
for notational simplicity.
\smallsec{Forward Emission Operator.} In the sequel, we define several matrices that describes the transition and emission in POMDPs. We define
\$
\diagO_h(o_h) &= \diag\bigl(\OO_h(o_h\given \cdot)\bigr) = \diag\Bigl(\bigl[\OO(o_h\given s_h)\bigr]_{s_h}\Bigr) \in \RR^{S\times S},\\
\Tran_h(a_h) &= \PP_h( \cdot\given  \cdot, a_h) = \bigl[\PP_h(s_{h+1}\given s_h, a_h)\bigr]_{s_{h} , s_{h+1}} \in \RR^{S\times S},\\
\ObsO_h &= \OO_h( \cdot \given \cdot) = \bigl[\OO_h(o_{h}\given s_h)\bigr]_{ o_{h}, s_{h}} \in \RR^{O\times S},
\$
where we denote by $\diag(v)\in\RR^{S\times S}$ the diagonal matrix where the diagonal entries aligns with the vector $v\in\RR^S$.

\begin{definition}[Forward Emission Operator]
For all $h\in[H]$ and $k>0$, we define the following forward emission operator,
\$
\UU_{h} &= \ObsO_{h+k}\Tran_{h+k-1}(\cdot)\diagO_{h+k-1}(\cdot)\cdots\Tran_{h}(\cdot)\diagO_{h}(\cdot) \\
&=\bigl[\ind(o_{h+k})^\top\ObsO_{h+k}\Tran_{h+k-1}(a_{h+k-1})\diagO_{h+k-1}(o_{h+k-1})\notag\\
&\qquad\cdots\Tran_{h}(a_{h})\diagO_{h}(o_{h}) \ind(s_h) \bigr]_{\traj^{h+k}_h, s_{h}}\in \RR^{(O^{k+1}\cdot A^{k})\times S},
\$
where we index the column of $\UU_{h}$ by the state $s_{h}\in\cS$, and the row of $\UU_{h}$ by the observation and action arrays
\$
\seq{a}{h}{h+k-1} = (a_{h}, \ldots, a_{h+k-1})\in\cA^{k}, \quad \seq{o}{h}{h+k} = (o_{h}, \ldots, o_{h+k})\in\cO^{k+1}.
\$
\end{definition}

\begin{lemma}[Forward Emission Operator]
\label{lem::FOO}
It holds for all $h\in[H]$ that
\$
\UU_{h} = \bigl[\PP(\traj^{h+k}_h \given s_h) \bigr]_{\traj^{h+k}_h, s_{h}} \in \RR^{(O^{k+1}\cdot A^{k})\times S}.
\$
\end{lemma}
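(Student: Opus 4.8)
The plan is to verify the claimed identity entrywise: I would fix a trajectory $\traj^{h+k}_h = \{\seq{a}{h}{h+k-1}, \seq{o}{h}{h+k}\}$ together with a state $s_h\in\cS$, write out the corresponding entry of the matrix product that defines $\UU_h$, and match it against the Markov factorization of $\PP(\traj^{h+k}_h\given s_h)$. By definition, the $(\traj^{h+k}_h, s_h)$-entry of $\UU_h$ is
\[
\ind(o_{h+k})^\top \ObsO_{h+k} \Tran_{h+k-1}(a_{h+k-1}) \diagO_{h+k-1}(o_{h+k-1}) \cdots \Tran_h(a_h) \diagO_h(o_h) \ind(s_h).
\]
The first step is to insert a resolution of the identity (a sum over the state basis) between each consecutive pair of matrices, which converts the product into a single sum over the intermediate states $s_{h+1},\ldots,s_{h+k}\in\cS$ that are contracted by the matrix multiplications.

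Next I would substitute the definitions of the three families of matrices to turn each factor into an emission or transition probability: each $\diagO_j(o_j)$ contributes $\OO_j(o_j\given s_j)$, each $\Tran_j(a_j)$ contributes $\PP_j(s_{j+1}\given s_j, a_j)$, and the leftmost block $\ind(o_{h+k})^\top\ObsO_{h+k}$ selects $\OO_{h+k}(o_{h+k}\given s_{h+k})$. Collecting these factors yields
\[
[\UU_h]_{\traj^{h+k}_h,\, s_h} = \sum_{s_{h+1},\ldots,s_{h+k}} \OO_h(o_h\given s_h)\prod_{j=h}^{h+k-1}\PP_j(s_{j+1}\given s_j, a_j)\,\OO_{j+1}(o_{j+1}\given s_{j+1}).
\]
The final step is to recognize the summand as the joint density $\PP(\seq{o}{h}{h+k}, s_{h+1},\ldots,s_{h+k}\given s_h, \seq{a}{h}{h+k-1})$, which holds by the conditional-independence structure of the POMDP (observations depend only on the contemporaneous state, and the states form a controlled Markov chain). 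Marginalizing over $s_{h+1},\ldots,s_{h+k}$ then gives $\PP(\seq{o}{h}{h+k}\given s_h, \seq{a}{h}{h+k-1})$, which by the notational convention introduced earlier is precisely $\PP(\traj^{h+k}_h\given s_h)$, completing the proof. An equivalent and cleaner route is induction on $k$: the base case isolates the single emission $\OO_h(o_h\given s_h)$, and the inductive step peels off one $\Tran_j\diagO_{j+1}$ block and applies the one-step Markov property before invoking the inductive hypothesis.

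The only genuine obstacle is bookkeeping of the row/column indexing conventions. The matrices $\Tran_j(a_j)$, $\diagO_j(o_j)$, and $\ObsO_{h+k}$ each carry their state indices in a fixed order, and I must ensure that in every matrix multiplication the contracted index is exactly the intermediate state $s_{j+1}$ being summed out, so that the telescoping product reproduces the Markov factorization rather than a transposed version of it. Once the convention is pinned down so that forward propagation is consistent (as forced by the requirement that $\ObsO_{h+k}$ map a state distribution to an observation distribution), the rest is a routine expansion and the identity follows immediately.
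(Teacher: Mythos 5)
Your proof is correct and is essentially the paper's argument: the paper proves the lemma by iteratively peeling off one $\Tran_j(a_j)\diagO_j(o_j)$ block at a time and invoking the one-step conditional independences ($o_j \perp \cdot \given s_j$ and the controlled Markov property), which is exactly the inductive route you mention and is the same computation as your one-shot entrywise expansion over intermediate states $s_{h+1},\ldots,s_{h+k}$. Your explicit attention to the row/column indexing of $\Tran_h(a_h)$ is well placed, since the paper's stated convention $[\PP_h(s_{h+1}\given s_h,a_h)]_{s_h,s_{h+1}}$ is only consistent with its own proof when read as propagating state distributions forward, precisely as you resolve it.
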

\begin{proof}
See \S\ref{pf::lem_FOO} for a detailed proof.
\end{proof}

Lemma \ref{lem::FOO} characterizes the semantic meaning of the forward emission operator $\UU_{h+1}$. More specifically, Lemma \ref{lem::FOO} allows us to write $\UU_{h+1}$ in the following operator form,
\$
\UU_{h} = \PP(\underbrace{\seq{o}{h}{h+k} = \cdot}_{\displaystyle \text{\rm (a)}}  \given \underbrace{s_{h} = \cdot}_{\displaystyle\text{\rm (b)}}, \underbrace{\seq{a}{h}{h+k-1} = \cdot}_{\displaystyle\text{\rm (c)}} )\in \RR^{(O^{k+1}\cdot A^{k})\times S},
\$
where (a) and (c) correspond to the row indices, and (b) corresponds to the column indices of the forward emission matrix $\UU_{h}$. For a state distribution vector $\mu_{h} \in\RR^S$ of state $s_h$, it holds that 
\$
\UU_{h}\mu_{h} = \PP(\seq{o}{h}{h+k} = \cdot  \given \seq{a}{h}{h+k-1} = \cdot) \in \RR^{O^{k+1}\cdot A^{k}},
\$
which corresponds to the forward emission probability of $\seq{o}{h}{h+k} = (o_{h}, \ldots, o_{h+k})$ given an action sequence $\seq{a}{h}{h+k-1} = (a_{h}, \ldots, a_{h+k-1})$.

\begin{assumption}[Future Sufficiency]
\label{asu::inv_U}
We assume for some $k>0$ that the forward emission operator $\UU_{h}$ has full column rank for all $h\in[H]$. We denote by $\UU_{h}^{\dagger}$ the left inverse of $\UU_{h}$ for all $h\in[H]$. We assume further that $\|\UU_{h}^{\dagger}\|_{1\mapsto1} \leq \nu$ for all $h\in[H]$.
\end{assumption}

\smallsec{Planning with Bellman Operator.} In the sequel, we introduce the Bellman operators $\{\ObsB_h\}_{h\in[H]}$, which plays a central role in solving the overcomplete POMDP. We define the Bellman operators as follows.
\begin{definition}[Bellman Operator]
\label{def::obs_op}
For all $h\in[H]$, we define the Bellman operator $\ObsB_h$ as follows,
\$
\ObsB_h(a_h, o_h) = \UU_{h+1}\Tran_h(a_h)\diagO_h(o_h) \UU_{h}^{\dagger}, \quad \forall (a_h, o_h)\in\cA\times\cO.
\$
\end{definition}
Given the Bellman operators $\{\ObsB_h\}_{h\in[H]}$, we are able to estimate the probability of any given observation sequence $(o_1, a_1, \ldots, o_H)$. In particular, the following lemma holds.
\begin{lemma}
\label{lem::trans_est}
It holds for the Bellman operator defined in Definition \ref{def::obs_op} that
\$%\PP(\seq{o}{1}{H} \given \seq{a}{1}{H-1})
\PP(\traj_1^{H-1}) = \eid{o_{H}}^\top \ObsB_{H-1}(a_{H-1}, o_{H-1}) \ldots \ObsB_1(a_1, o_1) \vecb_1, \quad \vecb_1 = \UU_{1} \mu_1.
\$
Here $\eid{o_{H}}$ is an indicator vector that takes value one at the indices $(\seq{o}{H}{H+k}, \seq{a}{H}{H+k-1})$ for any dummy observation sequence $\seq{o}{H+1}{H+k}$ and a randomly fixed action sequence $\seq{a}{H}{H+k-1}$. In addition, $\mu_1\in\RR^S$ is the probability array of initial state distribution, and $\vecb_1 \in\RR^{O^{k+1}\cdot A^k}$ is the probability distribution of the first $k$ steps, namely, 
\$
\vecb_1 = \UU_{1} \mu_1 = \bigl[\PP(\traj_1^k)\bigr]_{\traj^k_h} \in \RR^{O^{k+1}\cdot A^k}.
\$
\end{lemma}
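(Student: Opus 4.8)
The plan is to prove the identity by a purely algebraic telescoping of the Bellman operators, followed by a probabilistic reading of the resulting matrix product. First I would substitute the definition $\ObsB_h(a_h,o_h) = \UU_{h+1}\Tran_h(a_h)\diagO_h(o_h)\UU_h^\dagger$ from Definition \ref{def::obs_op} into every factor of the product $\ObsB_{H-1}(a_{H-1},o_{H-1})\cdots\ObsB_1(a_1,o_1)$. At each junction between $\ObsB_h$ and $\ObsB_{h-1}$ the adjacent pair $\UU_h^\dagger\UU_h$ appears, and by Assumption \ref{asu::inv_U} the operator $\UU_h^\dagger$ is a genuine left inverse, so $\UU_h^\dagger\UU_h = I$ for each $h\in\{2,\ldots,H-1\}$. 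Hence the product telescopes to
\[
\ObsB_{H-1}(a_{H-1},o_{H-1})\cdots\ObsB_1(a_1,o_1) = \UU_H\,\Tran_{H-1}(a_{H-1})\diagO_{H-1}(o_{H-1})\cdots\Tran_1(a_1)\diagO_1(o_1)\,\UU_1^\dagger.
\]
Applying this to $\vecb_1 = \UU_1\mu_1$ and invoking $\UU_1^\dagger\UU_1 = I$ one last time strips off the trailing $\UU_1^\dagger$, leaving the vector $\UU_H\Tran_{H-1}(a_{H-1})\diagO_{H-1}(o_{H-1})\cdots\Tran_1(a_1)\diagO_1(o_1)\mu_1$.

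Next I would identify the inner product as a filtering recursion. Reading right to left and using $\diagO_h(o_h) = \diag(\OO_h(o_h\given\cdot))$ and $\Tran_h(a_h) = [\PP_h(s_{h+1}\given s_h,a_h)]$, a short induction on $h$ shows that each application of $\diagO_h(o_h)$ reweights the running joint vector by the emission probability of $o_h$ while each $\Tran_h(a_h)$ propagates the state one step, so that
\[
\Tran_{H-1}(a_{H-1})\diagO_{H-1}(o_{H-1})\cdots\Tran_1(a_1)\diagO_1(o_1)\,\mu_1 = \bigl[\PP(\seq{o}{1}{H-1},\, s_H=\cdot\given\seq{a}{1}{H-1})\bigr]_{s_H}\in\RR^S.
\]
The Markov structure of the POMDP is exactly what guarantees that these local updates assemble into the stated joint density over $s_H$.

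Finally I would apply $\UU_H$ and then the indicator row $\eid{o_H}^\top$. By Lemma \ref{lem::FOO} the column of $\UU_H$ indexed by $s_H$ equals $[\PP(\traj^{H+k}_H\given s_H)]$, and because $\eid{o_H}^\top$ is one on all indices sharing the fixed dummy action sequence $\seq{a}{H}{H+k-1}$, the composite row satisfies $[\eid{o_H}^\top\UU_H]_{s_H} = \sum_{\seq{o}{H}{H+k}}\PP(\seq{o}{H}{H+k}\given s_H,\seq{a}{H}{H+k-1}) = 1$; that is, $\eid{o_H}^\top\UU_H$ collapses to the all-ones row over states. Multiplying the filtering vector above then gives $\sum_{s_H}\PP(\seq{o}{1}{H-1}, s_H\given\seq{a}{1}{H-1}) = \PP(\seq{o}{1}{H-1}\given\seq{a}{1}{H-1})$, which equals $\PP(\traj_1^{H-1})$ since the observations $\seq{o}{1}{H-1}$ do not depend on the extraneous action $a_{H-1}$.

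I expect the algebraic telescoping to be immediate once the left-inverse property is in hand; the only real care lies in the probabilistic bookkeeping of the last two steps, namely verifying by induction that the $\Tran$-$\diagO$ product is genuinely the step-$H$ filtered joint density and confirming that $\eid{o_H}^\top\UU_H$ reduces to an all-ones row so that the dummy future window is marginalized out cleanly. These are the places where the Markov property and the normalization of the emission probabilities enter, and getting the index shift between the episode length $H-1$ and the future window of length $k$ exactly right is the main bookkeeping hazard.
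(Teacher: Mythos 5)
Your overall route is the paper's own: telescope the product of Bellman operators via the left-inverse identity $\UU_h^\dagger\UU_h = I$ (valid under Assumption \ref{asu::inv_U}), identify the inner $\Tran$--$\diagO$ product with the filtered joint density $\bigl[\PP(\seq{o}{1}{H-1}, s_H = \cdot \given \seq{a}{1}{H-1})\bigr]_{s_H}$ by the inductive computation of Lemma \ref{lem::FOO}, and then apply the indicator row. Your first two steps are correct and essentially identical to the argument in \S\ref{pf::lem_trans_est} (your $\UU_H$ at the head of the telescoped product even corrects an index typo in the paper's display, which writes $\UU_{H-1}$).

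The gap is in your final step. By the lemma's definition, $\eid{o_H}$ takes value one only at the indices $(\seq{o}{H}{H+k}, \seq{a}{H}{H+k-1})$ with $o_H$ \emph{fixed}: only the dummy tail $\seq{o}{H+1}{H+k}$ ranges freely. Hence
\[
\bigl[\eid{o_H}^\top\UU_H\bigr]_{s_H} \;=\; \sum_{\seq{o}{H+1}{H+k}\in\cO^{k}} \PP\bigl(o_H, \seq{o}{H+1}{H+k}\given s_H, \seq{a}{H}{H+k-1}\bigr) \;=\; \OO_H(o_H\given s_H),
\]
that is, the emission row, not the all-ones row; your sum $\sum_{\seq{o}{H}{H+k}}$ incorrectly marginalizes $o_H$ along with the dummy observations. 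Under your all-ones claim the product with the filtered vector yields $\PP(\seq{o}{1}{H-1}\given \seq{a}{1}{H-1})$, which silently drops the emission probability of the final observation, whereas the correct computation gives $\sum_{s_H}\OO_H(o_H\given s_H)\cdot \PP(\seq{o}{1}{H-1}, s_H \given \seq{a}{1}{H-1}) = \PP(\seq{o}{1}{H}\given \seq{a}{1}{H-1})$ --- exactly what the paper's proof concludes ``as desired,'' and what the downstream policy-evaluation formula requires, since the value function collects the reward $r(o_H)$ at the final step. (The statement's left-hand notation $\PP(\traj_1^{H-1})$ is itself index-sloppy --- the paper's own proof ends with $\PP(\seq{o}{1}{H}\given\seq{a}{1}{H-1})$ --- but the definition of $\eid{o_H}$ leaves no ambiguity that $o_H$ is retained rather than summed out, so you cannot appeal to the literal $\traj_1^{H-1}$ reading to rescue the all-ones identity.) The fix is local: replace the all-ones claim by the emission-row identity above and carry the weight $\OO_H(o_H\given\cdot)$ through the final marginalization over $s_H$.
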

\begin{proof}
See \S\ref{pf::lem_trans_est} for a detailed proof.
\end{proof}
Lemma \ref{lem::trans_est} allows us to estimate the probability of any given trajectory. In addition, for a deterministic policy $\pi$, it further holds that
\#\label{eq::lem_policy_traj}
\PP^\pi(\seq{o}{1}{H}) = \PP\bigl(\seq{o}{1}{H}\given \seq{(a^\pi)}{1}{H-1}\bigr),
\#
where $\seq{(a^\pi)}{1}{H-1} = (a^\pi_1, \ldots, a^\pi_{H-1})$ is the action sequence determined the observation sequence $\seq{o}{1}{H-1}$ and the deterministic policy $\pi$. Thus, for a given deterministic policy $\pi$, one can evaluate the policy $\pi$ based on the Bellman operators as follows,
\$
V^\pi &= \sum_{\seq{o}{1}{H-1} \in\cO^H}\PP^\pi(\seq{o}{1}{H})\cdot \sum_{h=1}^H r(o_h)\notag\\
&=\sum_{\seq{o}{1}{H-1} \in\cO^H}\sum_{h=1}^H  r(o_h) \cdot \eid{o_{H}}^\top \ObsB_{H-1}(a_{H-1}, o_{H-1}) \ldots \ObsB_1(a_1, o_1) \vecb_1.
\$

\smallsec{Estimating the Bellman Operator.}~To estimate the Bellman operators based on interactions, we utilize the following identity of Bellman operators,
\#\label{eq::id_OO}
\ObsB_h(a_h, o_h) \pone_h(\seq{o}{h}{h+k}) = \ptwo_h(\seq{a}{h-\ell}{h+k}, o_h).
\#
Here we define the probability tensors $\pone_h$ and $\ptwo_h$ as follows,
\$
\pone_h(\seq{a}{h-\ell}{h-1}) &= \UU_{h}\Tran_{h-1}(a_{h-1})\diagO_{h-1}(\cdot)\ldots \Tran_{h-\ell}(a_{h-\ell})\diagO_{h-\ell}(\cdot)\mu_{h-\ell},\notag\\
\ptwo_h(\seq{a}{h-\ell}{h}, o_h) &= \UU_{h+1}\Tran_h(a_h)\diagO_h(o_h)\Tran_{h-1}(a_{h-1})\diagO_{h-1}(\cdot)\ldots \Tran_{h-\ell}(a_{h-1})\diagO_{h-\ell}(\cdot)\mu_{h-\ell},
\$
where $\mu_{h-\ell} \in\RR^S$ is a probability density array for the state $s_{h-\ell}$. The following lemma characterizes the semantic meaning of the probability tensors $\pone_h$ and $\ptwo_h$ for all $h\in[H]$.
\begin{lemma}
\label{lem::P1_P2}
Let $\distP_{h-\ell}(s_{h-\ell} = \cdot) = \mu_{h-1}\in\RR^S$ be a probability density array for the state $s_{h-\ell}$. It holds for all $h\in[H]$ that
\$
%&\pone_h(\seq{a}{h-\ell}{h-1}) = \bigl[\PP(\seq{o}{h-\ell}{h-1}, \seq{o}{h}{h+k} \given \seq{a}{h-\ell}{h-1}, \seq{a}{h}{h+k-1}) \bigr]_{(\seq{o}{h}{h+k}, \seq{a}{h}{h+k-1}), \seq{o}{h-\ell}{h-1}} ,\\
\pone_h(\seq{a}{h-\ell}{h-1}) &= \bigl[\PP(\traj^{h+k}_{h-\ell}) \bigr]_{\traj^{h+k}_h, \seq{o}{h-\ell}{h-1}} \in\RR^{(A^{k+1}\cdot O^k)\times O^\ell},\\
\ptwo_h(\seq{a}{h-\ell}{h}, o_h) &= \bigl[\PP(\traj^{h+k+1}_{h-\ell})\bigr]_{\traj^{h+k+1}_{h+1}, \seq{o}{h-\ell}{h-1}} \in \RR^{(A^{k+1}\cdot O^k)\times O^\ell}.
\$
\end{lemma}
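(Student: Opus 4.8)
The plan is to mirror the proof of Lemma \ref{lem::FOO}, peeling off the matrix factors one at a time and reading each application through the POMDP dynamics. The core of the argument is a single intermediate identity: the backward chain of transition and emission matrices, applied to the state marginal $\mu_{h-\ell}$, produces the joint law of the past observations and the current state. Concretely, I would first establish that for every $s_h \in \cS$,
\$
\bigl(\Tran_{h-1}(a_{h-1})\diagO_{h-1}(o_{h-1})\cdots \Tran_{h-\ell}(a_{h-\ell})\diagO_{h-\ell}(o_{h-\ell})\mu_{h-\ell}\bigr)(s_h) = \PP(\seq{o}{h-\ell}{h-1}, s_h \given \seq{a}{h-\ell}{h-1}).
\$
This follows by a short induction on the window length $\ell$: applying $\diagO_{h-\ell}(o_{h-\ell})$ to $\mu_{h-\ell}$ multiplies by the emission probability, yielding $\PP(o_{h-\ell}, s_{h-\ell})$; applying $\Tran_{h-\ell}(a_{h-\ell})$ marginalizes $s_{h-\ell}$ against the transition kernel, yielding $\PP(o_{h-\ell}, s_{h-\ell+1}\given a_{h-\ell})$; and iterating the emission-then-transition steps accumulates the joint law of $\seq{o}{h-\ell}{h-1}$ and $s_h$. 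This is the discrete analogue of \eqref{eq::pf_lem_RHS_II_linear_eq3} in the proof of Lemma \ref{lem::RHS_II_linear}.

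Next, for $\pone_h$ I would apply $\UU_h$ to this vector and invoke Lemma \ref{lem::FOO}, which identifies the rows of $\UU_h$ with $\PP(\traj^{h+k}_h \given s_h) = \PP(\seq{o}{h}{h+k}\given s_h, \seq{a}{h}{h+k-1})$. The matrix-vector product is then $\sum_{s_h} \PP(\seq{o}{h}{h+k}\given s_h, \seq{a}{h}{h+k-1})\cdot \PP(\seq{o}{h-\ell}{h-1}, s_h \given \seq{a}{h-\ell}{h-1})$, and the Markov property---the future observations $\seq{o}{h}{h+k}$ are conditionally independent of the past observations $\seq{o}{h-\ell}{h-1}$ given the state $s_h$, while appending the future actions to the conditioning of the past factor leaves it unchanged---collapses the summand into $\PP(\seq{o}{h-\ell}{h+k}, s_h \given \seq{a}{h-\ell}{h+k-1})$. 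Marginalizing $s_h$ gives $\PP(\seq{o}{h-\ell}{h+k}\given \seq{a}{h-\ell}{h+k-1}) = \PP(\traj^{h+k}_{h-\ell})$, with rows indexed by the future trajectory $\traj^{h+k}_h$ and columns by the past observations $\seq{o}{h-\ell}{h-1}$, exactly as claimed.

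The computation for $\ptwo_h$ is identical up to one extra emission-transition pair: starting again from $\PP(\seq{o}{h-\ell}{h-1}, s_h \given \seq{a}{h-\ell}{h-1})$, applying $\diagO_h(o_h)$ attaches the observation $o_h$ and applying $\Tran_h(a_h)$ advances to $\PP(\seq{o}{h-\ell}{h}, s_{h+1} \given \seq{a}{h-\ell}{h})$; then $\UU_{h+1}$ together with Lemma \ref{lem::FOO} and the same conditional-independence step (now at state $s_{h+1}$, separating $\seq{o}{h+1}{h+k+1}$ from $\seq{o}{h-\ell}{h}$) yields $\PP(\traj^{h+k+1}_{h-\ell})$, indexed by $\traj^{h+k+1}_{h+1}$ and $\seq{o}{h-\ell}{h-1}$. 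The main obstacle is purely bookkeeping: tracking which indices are rows, which are columns, and which are fixed arguments across the operator composition, and making the conditional-independence invocation rigorous---that future actions may be appended to the conditioning of the past factor without changing it, and that past observations drop out of the future factor given the separating state. Once these are stated carefully, both identities follow immediately.
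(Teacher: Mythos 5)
Your proposal is correct and follows essentially the same route as the paper's own proof: you establish the same backward-chain identity \eqref{eq::pf_lem_p1p2_eq0} by induction on the window length, apply Lemma \ref{lem::FOO} to identify the rows of $\UU_h$, and collapse the matrix--vector product via the same conditional independence $\seq{o}{h}{h+k} \indp \seq{o}{h-\ell}{h-1} \given s_h$ before marginalizing $s_h$. Your explicit treatment of $\ptwo_h$ (one extra emission--transition pair, then $\UU_{h+1}$ with the separating state $s_{h+1}$) is exactly what the paper compresses into ``the computation is identical,'' so if anything you are slightly more careful on that point.
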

\begin{proof}
See \S\ref{pf::lem_p1_p2} for a detailed proof.
\end{proof}

\subsection{Algorithm} We now introduce \algosp under the tabular POMDPs. In particular, \algosp iteratively (i) collects data and fit the density of visitation trajectory, (ii) fits the Bellman operators and construct confidence sets, and (iii) conducts optimistic planning. See Algorithm \ref{alg::POMDP} for the summary. 

We remark that the data collection process is identical to that for the low-rank POMDPs. Meanwhile, in the tabular POMDPs, we estimate the density of visitation trajectory by count-based estimators as follows.
\#
\hat \vecb^t_1 &= \frac{1}{t}\cdot\sum_{\seq{a}{1}{k}\in\cA^k} \biggl(\sum_{\traj^{k+1}_1 \in\cD^t(\seq{a}{1}{k})} \eid{\traj^{k+1}_1}\biggr),\label{eq::tabular_hatb}\\
\hat \pone^t_h(\seq{a}{h-\ell}{h-1}) &=  \frac{1}{t}\cdot\sum_{\seq{a}{h}{h+k-1}\in\cA^k}\biggl(\sum_{\traj^{h+k}_{h-\ell}\in\cD^t(\seq{a}{h-\ell}{h+k-1})} \eid{\traj^{h+k}_h}\eid{\seq{o}{h-\ell}{h-1}}^\top\biggr),\label{eq::tabular_hatX}\\
\hat\ptwo^t_h(\seq{a}{h-\ell}{h}, o_h) &= \frac{1}{t}\cdot\sum_{\seq{a}{h+1}{h+k}\in\cA^k} \biggl(\sum_{\traj^{h+k+1}_{h-\ell}\in\cD^t(\seq{a}{h-\ell}{h+k})} \eid{\traj^{h+k+1}_{h+1}}\eid{\seq{o}{h-\ell}{h-1}}^\top\biggr),\label{eq::tabular_hatY}
\#
In the sequel, we summarize the estimations of initial trajectory density and Bellman operators in the $t$-th iterate by the parameter $\param^t$. Accordingly, we estimate the Bellman operator in the $t$-th iterate by minimizing the following objective,
\$
\hat L^t_h = \sup_{\seq{a}{h-\ell}{h}\in\cA^{\ell+1}}\| \ObsB^\param_h(a_h, o_h)\hat \pone^t_h(\seq{a}{h-\ell}{h-1}) - \ptwo^t_h(\seq{a}{h-\ell}{h}, o_h) \|_1
\$
We define the following confidence set of the parameter $\param$ in the $t$-th iteration.
\#\label{eq::def_CI}
{\cC^t} = \Bigl\{\param\in\Param: &\max_{h\in[H]}\bigl\{\|\vecb^\param_1 - \hat\vecb^t_1\|_1, 
\hat L^t_h\bigr\} \leq \beta_t\cdot \sqrt{1/t},\forall h\in[H]\Bigr\}.
\#
where we set
\$
\beta_t =  (1+\nu)\cdot (k+\ell)\cdot \sqrt{A^{5k+1}\cdot O^{k+\ell}\cdot\log(O\cdot A\cdot T\cdot H/\delta)/t}.
\$
Note that the initial density $\vecb^\param_1$ and Bellman operators $\{\ObsB^\param_h\}_{h\in[H]}$ are sufficient for policy evaluation since they recovers the visitation density of an arbitrary deterministic policy (Lemma \ref{lem::traj_density_lemma}). We conduct optimistic planning in the $t$-th iteration as follows,
\$
\pi^t = \argmax_{\pi\in\Pi, \param^t\in{\cC^t}} V^\pi(\param^t),
\$
where $V^\pi(\param^t)$ is the policy evaluation of $\pi$ with parameter $\param^t$ and $\Pi$ is the set of all deterministic policies.

\begin{algorithm}[htpb]
\caption{Embed to Control for Tabular POMDP}
\label{alg::POMDP}
\begin{algorithmic}[1]
\REQUIRE Number of iterates $T\in\NN$. A set of tuning parameters $\{\beta_t\}_{t\in[T]}$.
\STATE{\bf Initialization:} Set $\pi_0$ as a deterministic policy. Set the dataset $\cD^0_{h}(\seq{a}{h-\ell}{h+k})$ as an empty set for all $(h, \seq{a}{h-\ell}{h+k})\in[H]\times\cA^{k+\ell+1}$.
\FOR{$t\in[T]$}
\FOR{$(h, \seq{a}{h-\ell}{h+k})\in[H]\times\cA^{k+\ell+1}$}
\STATE \label{line::samp_st}Start a new episode from the $(1-\ell)$-th step.
\STATE Execute policy $\pi^{t-1}$ until the $(h-\ell)$-th step and receive the observations $\sp{t}{}{\seq{o}{1-\ell}{h-\ell}}$.
\STATE \label{line::samp_ed} Execute the action sequence $\seq{a}{h-\ell}{h+k}$ regardless of the observations and receive the observations $\sp{t}{}{\seq{o}{h-\ell+1}{h+k+1}}$.
\STATE Update the dataset $
\cD^t_h(\seq{a}{h-\ell}{h+k}) \leftarrow \cD^{t-1}_h(\seq{a}{h-\ell}{h+k}) \cup \bigl\{\sp{t}{}{\seq{o}{h-\ell+1}{h+k+1}}\bigr\}.
$
\ENDFOR
\STATE Update the density mappings $\hat\pone^t_h$ and $\hat\ptwo^t_h$ by \eqref{eq::tabular_hatX} and \eqref{eq::tabular_hatY}, respectively.
\STATE Update the initial density estimation $\hat\vecb^t_1(\traj^H_1) \leftarrow$ by \eqref{eq::tabular_hatb}.
\STATE Update the confidence set $\cC^t$ by \eqref{eq::def_CI}.
\STATE Update the policy $
\pi^t \leftarrow \argmax_{\pi\in\Pi}\max_{\param\in\cC^t} V^\pi(\param)
$. %by optimistic planning with ${\cC^t}$.
\ENDFOR
\STATE {\bf Output:} policy set $\{\pi^t\}_{t\in[T]}$.
\end{algorithmic}
\end{algorithm}

\subsection{Theory}
In the sequel, we present the sample efficiency analysis of \algosp for the tabular POMDPs.

\smallsec{Calculating the Performance Difference.} Similar to the analysis under the low-rank POMDPs, we first calculate the performance difference of a policy between two different POMDPs defined by the parameter $\param$ and $\param'$, respectively. The following lemma is adopted from \cite{jin2020sample}.
\begin{lemma}[Trajectory Density \citep{jin2020sample}]
\label{lem::traj_density_lemma}
It holds that 
\$
\PP^{\param, \pi}(\seq{o}{1}{H-1}) = \PP^\param\bigl(\seq{o}{1}{H-1}\given \seq{(a^\pi)}{1}{H}\bigr),
\$
where $\seq{(a^\pi)}{1}{H} = (a^\pi_1, \ldots, a^\pi_H)$ and $a^\pi_h = \pi(\seq{a}{1}{h-1},\seq{o}{1}{h})$ is the action taken by $\pi$ in the $h$-th step for all $h\in[H]$.
\end{lemma}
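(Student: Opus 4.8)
The plan is to prove the identity by unfolding the joint law that the deterministic policy $\pi$ induces on the interaction trajectory via the chain rule, and then observing that the deterministic action selection is exactly equivalent to conditioning on the induced action sequence. First I would write the joint density of observations and actions generated under $\pi$ as a product over the horizon,
\[
\PP^{\param,\pi}(\seq{o}{1}{H-1},\seq{a}{1}{H-1}) = \prod_{h=1}^{H-1}\PP^\param\bigl(o_h\given \seq{o}{1}{h-1},\seq{a}{1}{h-1}\bigr)\cdot\pi\bigl(a_h\given \seq{o}{1}{h},\seq{a}{1}{h-1}\bigr),
\]
where each factor $\PP^\param(o_h\given \seq{o}{1}{h-1},\seq{a}{1}{h-1})$ is the emission probability obtained after marginalizing the latent states $\seq{s}{1}{h}$ against the transition kernels $\{\PP^\param_j\}$ and the emission kernel $\OO^\param_h$, in the spirit of the forward emission factorization recorded in Lemma \ref{lem::FOO}.

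Because $\pi$ is deterministic, every policy factor reduces to an indicator $\pi(a_h\given \seq{o}{1}{h},\seq{a}{1}{h-1}) = \mathbf{1}\{a_h = a^\pi_h\}$ with $a^\pi_h = \pi(\seq{a}{1}{h-1},\seq{o}{1}{h})$. Summing the joint law over all action sequences then collapses onto the single sequence $\seq{(a^\pi)}{1}{H-1}$ that $\pi$ emits along the realized observation path, so that
\[
\PP^{\param,\pi}(\seq{o}{1}{H-1}) = \prod_{h=1}^{H-1}\PP^\param\bigl(o_h\given \seq{o}{1}{h-1},\seq{(a^\pi)}{1}{h-1}\bigr).
\]
I would then expand the right-hand side $\PP^\param(\seq{o}{1}{H-1}\given \seq{(a^\pi)}{1}{H})$ by the same chain rule into $\prod_{h=1}^{H-1}\PP^\param(o_h\given \seq{o}{1}{h-1},\seq{(a^\pi)}{1}{H})$.

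The decisive step is the non-anticipating structure of the POMDP under interventional conditioning on the action sequence: the observation $o_h$ is emitted from the state $s_h$, which is determined by the initial state together with the past actions $\seq{a}{1}{h-1}$ alone, hence $o_h$ is conditionally independent of the future actions $\seq{a}{h}{H}$ given $(\seq{o}{1}{h-1},\seq{a}{1}{h-1})$. This lets me strip the future actions from every conditioning event on the right-hand side, reducing its $h$-th factor to $\PP^\param(o_h\given \seq{o}{1}{h-1},\seq{(a^\pi)}{1}{h-1})$, which coincides term-by-term with the left-hand factorization above; multiplying over $h$ yields the claim.

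The main obstacle, which is really the only delicate point, is the bookkeeping of the deterministic-policy collapse: one must check that marginalizing the indicator-weighted joint selects precisely the action $a^\pi_h$ that $\pi$ would play after seeing $\seq{o}{1}{h}$, and that the extra terminal action $a^\pi_H$ appearing in the conditioning on the right is inert because no observation in $\seq{o}{1}{H-1}$ depends on an action at step $H$ or later. Both facts are immediate consequences of the non-anticipation of $\pi$ and the forward emission factorization, so once the causality argument is in place the two products match and the identity follows.
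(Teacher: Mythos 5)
Your proof is correct. The paper does not actually prove this lemma itself---it defers entirely to the cited reference \cite{jin2020sample}---and your argument (chain-rule factorization of the joint law under $\pi$, collapse of the deterministic policy factors to indicators $\mathbf{1}\{a_h = a^\pi_h\}$, and removal of the future actions $\seq{a}{h}{H}$ from each conditional factor via non-anticipation of the interventional conditioning) is precisely the standard derivation used in that reference, including the observation that the terminal action in the conditioning sequence is inert.
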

\begin{proof}
See \cite{jin2020provably} for a detailed proof.
\end{proof}
We now calculate the performance difference in the following lemma.
\begin{lemma}[Performance Difference]
\label{lem::perform_diff}
It holds for any policy $\pi$ that
\$
&|V^\pi(\param) - V^\pi(\param')|\notag\\
&\qquad\leq\nu\cdot\sqrt{S}\cdot H\cdot \sum_{h=2}^{H-1}\sum_{\seq{o}{1}{h}\in\cO^h} \bigl\|\bigl(\ObsB^\param_{h}(a^\pi_{h}, o_{h}) - \ObsB^{\param'}_{h}(a^\pi_{h}, o_{h})\bigr)\ObsB^{\param'}_{h-1}(a^\pi_{h-1}, o_{h-1})  \ldots  \ObsB^{\param'}_1(a^\pi_1, o_1)\vecb^{\param'}_1\bigr\|_1\notag\\
&\qquad\qquad + \nu\cdot\sqrt{S}\cdot H\cdot\sum_{a_1\in\cA}\sum_{o_1\in\cO}\bigl\|\bigl(\ObsB^\param_{1}(a^\pi_{1}, o_{1}) - \ObsB^{\param'}_{1}(a^\pi_{1}, o_{1})\bigr)\vecb^{\param'}_1\bigr\|_1 +\nu\cdot\sqrt{S}\cdot H\cdot\|\vecb^\param_1 - \vecb^{\param'}_1\|_1,
\$
where $\ObsB^\param_h$ is the Bellman operator corresponding to the parameter $\param$ for all $h\in[H]$, and $\vecb^\param_1 = \UU^\param_{1,k} \mu_1$ is the initial trajectory distribution corresponding to the parameter $\param$. Here the action $a^\pi_h = \pi(\seq{(a^\pi)}{1}{H-1}, \seq{o}{1}{H-1})$ is the action taken by $\pi$ in the $h$-th step for all $h\in[H]$.
\end{lemma}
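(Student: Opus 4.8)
The plan is to transport the operator-level argument behind Lemma~\ref{lem::perf_diff_linear} into the finite matrix setting, using the trajectory-density factorization of Lemma~\ref{lem::trans_est} in place of its operator analogue. First I would apply Lemma~\ref{lem::traj_density_lemma} to replace the policy-induced observation law by an action-conditioned law, so that $V^\pi(\param)$ becomes a sum of the cumulative reward $r(\seq{o}{1}{H-1})$ against $\PP^\param(\seq{o}{1}{H-1}\given\seq{(a^\pi)}{1}{H})$, with the actions $a^\pi_h$ pinned down along each observation prefix by the deterministic policy. Since $0\le r\le H$, the triangle inequality reduces the value gap to the trajectory-law gap,
\$
|V^\pi(\param)-V^\pi(\param')|\le H\cdot\sum_{\seq{o}{1}{H-1}\in\cO^{H-1}}\bigl|\PP^\param\bigl(\seq{o}{1}{H-1}\given\seq{(a^\pi)}{1}{H}\bigr)-\PP^{\param'}\bigl(\seq{o}{1}{H-1}\given\seq{(a^\pi)}{1}{H}\bigr)\bigr|,
\$
and Lemma~\ref{lem::trans_est} rewrites each term on the right as $\eid{o_H}^\top\ObsB^\param_{H-1}(a^\pi_{H-1},o_{H-1})\cdots\ObsB^\param_1(a^\pi_1,o_1)\vecb^\param_1$.

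Next I would telescope the two matrix products. Setting $v_0=\vecb^\param_1-\vecb^{\param'}_1$ and
\$
v_h=\bigl(\ObsB^\param_h(a^\pi_h,o_h)-\ObsB^{\param'}_h(a^\pi_h,o_h)\bigr)\ObsB^{\param'}_{h-1}(a^\pi_{h-1},o_{h-1})\cdots\ObsB^{\param'}_1(a^\pi_1,o_1)\vecb^{\param'}_1
\$
for $h\in[H-1]$, the single-factor-swap identity behind \eqref{eq::perf_diff_linear_eq0.2} gives $\ObsB^\param_{H-1}\cdots\ObsB^\param_1\vecb^\param_1-\ObsB^{\param'}_{H-1}\cdots\ObsB^{\param'}_1\vecb^{\param'}_1=\sum_{h=0}^{H-1}\ObsB^\param_{H-1}\cdots\ObsB^\param_{h+1}v_h$. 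Feeding this into the reduction above and interchanging summations leaves me to control, for each $h$, the tail quantity $\sum_{\seq{o}{h+1}{H-1}}|\eid{o_H}^\top\ObsB^\param_{H-1}\cdots\ObsB^\param_{h+1}v_h|$, the outer sum over the prefix $\seq{o}{1}{h}$ being exactly the $\|v_h\|_1$-type sums in the statement; splitting the ranges $h\ge2$, $h=1$ (where I additionally overcount by summing over all $a_1$), and $h=0$ reproduces its three terms.

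The heart of the argument is the tabular analogue of Lemma~\ref{lem::bound_RHS_linear} applied to these tails. Because $\UU^\param_h$ has full column rank and $\UU^{\param,\dagger}_h\UU^\param_h=I_S$ (Assumption~\ref{asu::inv_U}), the interior forward-operator/inverse pairs cancel and
\$
\ObsB^\param_{H-1}\cdots\ObsB^\param_{h+1}=\UU^\param_H\,\Tran^\param_{H-1}(a_{H-1})\diagO^\param_{H-1}(o_{H-1})\cdots\Tran^\param_{h+1}(a_{h+1})\diagO^\param_{h+1}(o_{h+1})\,\UU^{\param,\dagger}_{h+1},
\$
the matrix counterpart of \eqref{eq::pf_lem_RHS_linear_eq0}. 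Here $\eid{o_H}^\top\UU^\param_H$ collapses, after marginalizing the dummy future observations, to the single emission row $[\OO^\param_H(o_H\given\cdot)]^\top$, each $\sum_{o_j}\diagO^\param_j(o_j)$ collapses to the identity, and each $\Tran^\param_j$ is stochastic; dominating the signed input by its entrywise absolute value and invoking $\|\UU^{\param,\dagger}_{h+1}\|_{1\mapsto1}\le\nu$ then controls the tail by a constant multiple of $\nu\,\|v_h\|_1$.

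I expect the genuinely delicate point to be the dimensional factor $\sqrt{S}$ that the statement carries. Exploiting the nonnegativity of $\eid{o_H}^\top\UU^\param_H$ and of the transition/emission matrices, as just sketched, one can push a signed vector through while preserving its $\ell^1$ mass, which suggests the constant $\nu$ alone with no $S$-dependence; the stated $\sqrt{S}$ instead appears to come from a coarser, spectral estimate of the terminal contraction, where bounding the retained emission row against a state-indexed signed vector and converting $\ell^2$ control back to the $\ell^1$ norm incurs $\|x\|_1\le\sqrt{S}\,\|x\|_2$, yielding $\nu\sqrt{S}\,\|v_h\|_1$. The main obstacle is therefore to decide which route the target bound demands and to carry out the chosen norm conversion uniformly over all $h$ (in particular for $h=0$, where it acts on $\vecb^\param_1-\vecb^{\param'}_1$); the telescoping identity and the $0\le r\le H$ reduction are routine bookkeeping.
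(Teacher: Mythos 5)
Your proposal is correct and follows essentially the same route as the paper's proof in \S\ref{sec::pf_lem_perform_diff}: the reduction via Lemma \ref{lem::traj_density_lemma} and Lemma \ref{lem::trans_est} with $0\le r\le H$, the identical telescoping into $v_0,\ldots,v_{H-1}$, the factorization $\ObsB^\param_{H-1}\cdots\ObsB^\param_{h+1}=\UU^\param_H\Tran^\param_{H-1}\diagO^\param_{H-1}\cdots\Tran^\param_{h+1}\diagO^\param_{h+1}\UU^{\param,\dagger}_{h+1}$ combined with the $\ell^1$-mass-preservation bound (Lemma \ref{lem::norm_bound}). Your diagnosis of the $\sqrt{S}$ factor is also exactly right: the paper's step \eqref{eq::pf_lem_perform_diff_6} bounds $\|\UU_h^\dagger\|_{1\mapsto1}\le\sqrt{S}\,\|\UU_h^\dagger\|_2\le\nu\sqrt{S}$ (treating $\nu$ spectrally), whereas under the literal $1\mapsto1$ reading of Assumption \ref{asu::inv_U} your sharper bound with $\nu$ alone already implies the stated inequality.
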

\begin{proof}
See \S\ref{sec::pf_lem_perform_diff} for a detailed proof.
\end{proof}

We define the following state density array,
\#\label{eq::def_state_arr_pf_mu}
\mu^{\param}_{h-1}(\seq{a}{h-\ell}{h-1},\seq{o}{1}{h-1}; \pi) &= \underbrace{\diagO^{\param}_{h-1}(o_{h-1})\Tran^{\param}_{h-2}(a_{h-2})\cdots \Tran^{\param}_{h-\ell}(a_{h-\ell}) \diagO^{\param}_{h-\ell}(o_{h-\ell})}_{\displaystyle\rm (i)}\notag\\
&\qquad\underbrace{\cdot\Tran^{\param}_{h-\ell-1}(a^\pi_{h-\ell-1})\cdots \Tran^{\param}_{1}(a^\pi_{1}) \diagO^{\param}_{1}(o_{1})\mu_1}_{\displaystyle\rm (ii)}\notag\\
&=\bigl[\PP^{\param, \pi}(s_h, \seq{o}{1}{h-1} \given \seq{a}{h-\ell}{h-1})\bigr]_{s_h\in\cS} \in\RR^S.
\#
Here the actions $a^\pi_{h-\ell-1}, \ldots, a^\pi_{1}$ in (ii) of \eqref{eq::def_state_arr_pf_mu} is determined by the observations array $\seq{o}{1}{h-\ell-2}$ and the policy $\pi$. Meanwhile, the action array $\seq{a}{h-\ell}{h-1}$ is the fixed action array that defines the state density array $\mu^{\param'}_{h-1}(\pi, \seq{a}{h-\ell}{h-1},\seq{o}{1}{h-1})$. In addition, we denote by $\PP^{\param}$ the probability that corresponds to the transition dynamics defined by the operators $\{\diagO^\param_h, \Tran^\param_h\}_{h\in[H]}$. Based on \eqref{eq::def_state_arr_pf_mu}, we further define the following marginal state density array,
\#\label{eq::def_state_arr_pf_marginal}
\tilde\mu^{\param}_{h-1}( \seq{a}{h-\ell}{h-1};\pi) &= \sum_{\seq{o}{1}{h-1}\in\cO^{h-1}}\mu^{\param}_{h-1}(\pi, \seq{a}{h-\ell}{h-1},\seq{o}{1}{h-1}) \notag\\
&= \bigl[\PP^{\param, \pi}(s_{h-1} \given \seq{a}{h-\ell}{h-1})\bigr]_{s_h\in\cS} \in\RR^S.
\#
The marginal state density array $\tilde\mu^{\param}_{h-1}(\seq{a}{h-\ell}{h-1};\pi)$ captures the state distribution of $s_{h-1}$ given the following interaction protocol: (i) starting with the initial observation, interacting with the environment based on policy $\pi$ until the $(h-\ell)$-th step and observing $o_{h-\ell}$, and (ii) interacting with the environment with a fixed action sequence $\seq{a}{h-\ell}{h-1}$ regardless of the observations until the $(h-1)$-th step and observing $o_{h-1}$. We remark that such interaction protocol is identical to the sampling process in Line \ref{line::samp_st}--\ref{line::samp_ed} of Algorithm \ref{alg::POMDP}. The following lemma upper bounds the performance difference calculated in Lemma \ref{lem::perform_diff}.

%The following lemma upper bounds the right-hand side of \eqref{}.

\begin{lemma}[Upper Bound of Performance Difference]
\label{lem::upper_bound_rhs}
It holds for all $\pi$ and  $h>1$ that
\$
&\sum_{\seq{o}{1}{h}\in\cO^h} \bigl\|\bigl(\ObsB^\param_{h}(a^\pi_{h}, o_{h}) - \ObsB^{\param'}_{h}(a^\pi_{h}, o_{h})\bigr)\ObsB^{\param'}_{h-1}(a^\pi_{h-1}, o_{h-1})  \ldots  \ObsB^{\param'}_1(a^\pi_1, o_1)\vecb^{\param'}_1\bigr\|_1 \notag\\
&\qquad \leq \sum_{\seq{a}{h-\ell}{h}\in\cA^{\ell+1}}\sum_{o_h\in\cO}\sum_{s_{h-1}\in\cS} \bigl\|\bigl(\ObsB^\param_{h}(a_{h}, o_{h}) - \ObsB^{\param'}_{h}(a_{h}, o_{h})\bigr)\UU^{\param'}_{h,k} \Tran^{\param'}_{h-1}(s_{h-1}, a_{h-1})\bigr\|_1\cdot \PP^\pi(s_{h-1} \given \seq{a}{h-\ell}{h-1}),\\
\$
where the action $a^\pi_h = \pi(\seq{(a^\pi)}{1}{h-1}, \seq{o}{1}{h})$ is the action taken by $\pi$ in the $h$-th step for all $h\in[H]$.Here $\Tran^{\param'}_{h-1}(s_{h-1}, a_{h-1}) \in\RR^S$ is the state distribution array $[\Tran^{\param'}_{h-1}(s_{h}\given s_{h-1}, a_{h-1})]_{s_h} \in\RR^S$ for all $h>1$.
\end{lemma}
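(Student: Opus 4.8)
The plan is to mirror the argument used for the low-rank case in Lemma~\ref{lem::RHS_II_linear}, adapting the telescoping to the matrix operators $\{\Tran^\param_h, \diagO^\param_h, \UU^\param_h\}$ of the tabular model. The proof proceeds in three steps. First, I would free the policy-dependent actions on the last $\ell+1$ steps: since the actions $a^\pi_j$ for $h-\ell\le j\le h$ constitute a single deterministic choice and every summand in the $\seq{o}{1}{h}$ sum is nonnegative, replacing those actions by an unrestricted sum over $\seq{a}{h-\ell}{h}\in\cA^{\ell+1}$ only enlarges the left-hand side. The actions on steps below $h-\ell$ remain policy-determined and get absorbed into the $\pi$ superscript on $\PP^{\param', \pi}$.

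Second, I would telescope the product $\ObsB^{\param'}_{h-1}(a_{h-1}, o_{h-1})\cdots\ObsB^{\param'}_1(a_1, o_1)\vecb^{\param'}_1$. Writing each Bellman operator in factored form $\ObsB^{\param'}_j(a_j, o_j) = \UU^{\param'}_{j+1}\Tran^{\param'}_j(a_j)\diagO^{\param'}_j(o_j)\UU^{\param',\dagger}_j$ (Definition~\ref{def::obs_op}) and invoking the full-column-rank identity $\UU^{\param',\dagger}_j\UU^{\param'}_j = I$ guaranteed by Assumption~\ref{asu::inv_U}, the interior inverse--operator pairs cancel, and the trailing $\UU^{\param',\dagger}_1$ annihilates $\vecb^{\param'}_1 = \UU^{\param'}_1\mu_1$. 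By the semantics of the transition and emission matrices established in Lemma~\ref{lem::P1_P2}, the product collapses to $\UU^{\param'}_{h}\cdot\PP^{\param', \pi}(\seq{o}{1}{h-1}, s_h = \cdot\given \seq{a}{h-\ell}{h-1})$, where the fixed block $\seq{a}{h-\ell}{h-1}$ enters through conditioning and the lower-index policy actions enter through $\pi$.

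Third, I would decompose through the state $s_{h-1}$ and marginalize. Inserting the one-step factorization $\PP^{\param', \pi}(\seq{o}{1}{h-1}, s_h = \cdot) = \sum_{s_{h-1}}\Tran^{\param'}_{h-1}(\cdot\given s_{h-1}, a_{h-1})\cdot\PP^{\param', \pi}(\seq{o}{1}{h-1}, s_{h-1})$, I would use linearity of the operator $\bigl(\ObsB^\param_h(a_h,o_h) - \ObsB^{\param'}_h(a_h,o_h)\bigr)\UU^{\param'}_{h}$ together with the triangle inequality to pull the sum over $s_{h-1}$ outside the $L^1$ norm (valid because the joint density is nonnegative), leaving $\Tran^{\param'}_{h-1}(s_{h-1}, a_{h-1}) = [\Tran^{\param'}_{h-1}(s_h\given s_{h-1}, a_{h-1})]_{s_h}$ as the input. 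Summing over $\seq{o}{1}{h-1}$ then collapses $\PP^{\param', \pi}(\seq{o}{1}{h-1}, s_{h-1}\given \seq{a}{h-\ell}{h-1})$ into the marginal $\PP^\pi(s_{h-1}\given \seq{a}{h-\ell}{h-1})$, which, combined with the outer sum over $\seq{a}{h-\ell}{h}\in\cA^{\ell+1}$ and the remaining integration over $o_h$, yields exactly the claimed bound.

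The main obstacle will be the bookkeeping of the action indices rather than any analytic difficulty: I must track carefully which actions are policy-determined (steps below $h-\ell$) versus freed into the $\cA^{\ell+1}$ sum (steps $h-\ell$ through $h$), and verify that once the actions are decoupled from the observations the telescoping identity still matches, i.e. that conditioning on the fixed action block $\seq{a}{h-\ell}{h-1}$ is correctly aligned with the state density $\PP^{\param', \pi}(\seq{o}{1}{h-1}, s_h=\cdot\given\seq{a}{h-\ell}{h-1})$ produced by the operator product. The cancellation in Step~2 and the nonnegativity needed for the triangle inequality in Step~3 are the two facts that make the decoupling legitimate.
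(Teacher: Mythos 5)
Your proposal is correct and matches the paper's proof essentially step for step: the paper likewise enlarges the sum by freeing the actions $\seq{a}{h-\ell}{h}$ via nonnegativity, collapses the Bellman-operator product through the cancellations $\UU^{\param',\dagger}_j\UU^{\param'}_j = I$ into $\UU^{\param'}_{h,k}\Tran^{\param'}_{h-1}(a_{h-1})\mu^{\param'}_{h-1}$ (your $\UU^{\param'}_h\,\PP^{\param',\pi}(\seq{o}{1}{h-1}, s_h=\cdot\given\seq{a}{h-\ell}{h-1})$ is the same object with the one-step transition factored out), and then applies the column-wise triangle inequality over $s_{h-1}$ followed by marginalization of $\seq{o}{1}{h-1}$ to obtain $\PP^\pi(s_{h-1}\given\seq{a}{h-\ell}{h-1})$. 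The two bookkeeping concerns you flag (which actions are freed versus policy-determined, and nonnegativity justifying the decoupling) are exactly the points the paper's proof handles, so no gap remains.
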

\begin{proof}
See \S\ref{sec::pf_lem_upper} for a detailed proof.
\end{proof}

\smallsec{Confidence Set Analysis.} We now analyze the confidence set utilized for optimistic planning. We define the following visitation measure of mix policy in the $t$-th iteration for all $t>0$,
\$
\PP^t(s_h) = \frac{1}{t}\cdot\sum^{t-1}_{\omega = 0}\PP^{\pi^\omega}(s_h),
\$
where $\{\pi^\omega\}_{\omega \in [t]}$ is the set of policy returned by Algorithm \ref{alg::POMDP}. Meanwhile, recall that we define the empirical density estimators,
\$
\hat \vecb^t_1 &= \frac{1}{t}\cdot\sum_{\seq{a}{1}{k}\in\cA^k} \biggl(\sum_{\traj^{k+1}_1 \in\cD^t(\seq{a}{1}{k})} \eid{\traj^{k+1}_1}\biggr),\notag\\
\hat \pone^t_h(\seq{a}{h-\ell}{h-1}) &=  \frac{1}{t}\cdot\sum_{\seq{a}{h}{h+k-1}\in\cA^k}\biggl(\sum_{\traj^{h+k}_{h-\ell}\in\cD^t(\seq{a}{h-\ell}{h+k-1})} \eid{\traj^{h+k}_h}\eid{\seq{o}{h-\ell}{h-1}}^\top\biggr),\notag\\
\hat\ptwo^t_h(\seq{a}{h-\ell}{h}, o_h) &= \frac{1}{t}\cdot\sum_{\seq{a}{h+1}{h+k}\in\cA^k} \biggl(\sum_{\traj^{h+k+1}_{h-\ell}\in\cD^t(\seq{a}{h-\ell}{h+k})} \eid{\traj^{h+k+1}_{h+1}}\eid{\seq{o}{h-\ell}{h-1}}^\top\biggr),
\$
where we denote by $\eid{x}$ the indicator vector that takes value one at the index $x$. Recall that we define the confidence set as follows,
\$
{\cC^t} = \Bigl\{\param\in\Param: &\max\bigl\{\|\vecb^\param_1 - \hat\vecb^t_1\|_1, 
\hat L^t_h\bigr\} \leq \beta_t\cdot \sqrt{1/t},\forall h\in[H]\Bigr\}.
\$
where we set
\$
\beta_t =  A^k\cdot (k+\ell)\cdot\sqrt{\log(O\cdot A\cdot T\cdot H/\delta)}.
\$
Recall that in the $t$-th iteration of Algorithm \ref{alg::POMDP}, we update the policy $\pi^t$ as follows,
\$
\pi^t = \argmax_{\pi\in\Pi, \param\in{\cC^t}} V^\pi(\param).
\$
The following lemma shows that the empirical estimations aligns closely to the true density corresponding to the exploration.

\begin{lemma}[Concentration Bound of Density Estimation]
\label{lem::concen}
It holds for all $t\in[T]$ with probability at least $1 - \delta$ that
\$
&\max\bigl\{\|\pone^t_h(\seq{a}{h-\ell}{h-1}) - \hat \pone^t_h(\seq{a}{h-\ell}{h-1})\|_F, \|\vecb_1 - \hat \vecb^t_1\|_2, \|\ptwo^t_h(\seq{a}{h-\ell}{h}, o_h) -  \hat\ptwo^t_h(\seq{a}{h-\ell}{h}, o_h)\|_F \bigr\} \notag\\
&\qquad = \cO\bigl(A^k\cdot (k+\ell)\cdot \sqrt{\log(O\cdot A\cdot T\cdot H/\delta)/t}\bigr)
\$
with probability at least $1 - \delta$.
\end{lemma}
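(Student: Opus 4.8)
The plan is to recognize that each of $\hat\vecb^t_1$, $\hat\pone^t_h$, and $\hat\ptwo^t_h$ is an empirical average of bounded indicator terms collected over the $t$ iterations, and that the corresponding population objects $\vecb_1$, $\pone^t_h$, $\ptwo^t_h$ are exactly the conditional means under the mixture visitation $\PP^t = \tfrac1t\sum_{\omega=0}^{t-1}\PP^{\pi^\omega}$. Concretely, let $\mathcal{F}_{\omega-1}$ be the $\sigma$-field generated by all data gathered before iteration $\omega$. Since the sample drawn at iteration $\omega$ follows $\pi^{\omega-1}$ up to step $h-\ell$ (which is $\mathcal{F}_{\omega-1}$-measurable) and then a \emph{fixed} forced action sequence (see Lines \ref{line::samp_st}--\ref{line::samp_ed} of Algorithm \ref{alg::POMDP}), the per-iteration indicator contribution $X_\omega$ to $\hat\pone^t_h(\seq{a}{h-\ell}{h-1})$ satisfies $\EE[X_\omega\mid\mathcal{F}_{\omega-1}] = \pone^{\pi^{\omega-1}}_h(\seq{a}{h-\ell}{h-1})$, and averaging over $\omega=1,\dots,t$ gives $\EE$-mean equal to $\pone^t_h(\seq{a}{h-\ell}{h-1})$ by Lemma \ref{lem::P1_P2}. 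First I would therefore write the estimation error $\hat\pone^t_h - \pone^t_h = \tfrac1t\sum_{\omega=1}^{t}\bigl(X_\omega - \EE[X_\omega\mid\mathcal{F}_{\omega-1}]\bigr)$ as a centered, Hilbert-space-valued martingale average (with the Frobenius inner product), and identically for $\hat\ptwo^t_h$ and $\hat\vecb^t_1$.

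Next I would bound the per-iteration increment in Frobenius (resp.\ Euclidean) norm. The key observation is that $X_\omega = \sum_{\seq{a}{h}{h+k-1}\in\cA^k}\eid{\traj^{h+k}_h}\eid{\seq{o}{h-\ell}{h-1}}^\top$ is a sum of $A^k$ rank-one indicator outer products whose row blocks are indexed by \emph{distinct} future action sequences. Hence these rank-one terms are mutually orthogonal under the Frobenius inner product, so $\|X_\omega\|_F = \sqrt{A^k}$; the same disjoint-support argument gives $\|Z_\omega\|_2=\sqrt{A^k}$ for the $\hat\vecb^t_1$ increments and $\|Y_\omega\|_F=\sqrt{A^k}$ for $\hat\ptwo^t_h$. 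The martingale differences are thus bounded in norm by $2\sqrt{A^k}$. I would then invoke a dimension-free Hilbert-space Azuma (Pinelis-type) concentration inequality for martingale averages, which yields, for a single fixed tuple $(h,\seq{a}{h-\ell}{h},o_h,t)$,
\$
\Bigl\|\tfrac1t\textstyle\sum_{\omega=1}^{t}\bigl(X_\omega - \EE[X_\omega\mid\mathcal{F}_{\omega-1}]\bigr)\Bigr\|_F = \cO\bigl(\sqrt{A^k\cdot\log(1/\delta')/t}\bigr)
\$
with probability at least $1-\delta'$.

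Finally I would take a union bound over the indices appearing in the statement: the step $h\in[H]$, the fixed action sequences $\seq{a}{h-\ell}{h}\in\cA^{\ell+1}$, the observation $o_h\in\cO$ (for $\ptwo$), and the iteration $t\in[T]$, a total of at most $H\cdot T\cdot A^{\ell+1}\cdot O$ events. Setting $\delta' = \delta/(H T A^{\ell+1}O)$ gives $\log(1/\delta') = \cO\bigl((k+\ell)\log(O\cdot A\cdot T\cdot H/\delta)\bigr)$, so the per-tuple bound becomes $\cO\bigl(\sqrt{A^k\cdot(k+\ell)\cdot\log(O\cdot A\cdot T\cdot H/\delta)/t}\bigr)$, which I would loosen to the stated bound by $\sqrt{A^k}\le A^k$ and $\sqrt{k+\ell}\le k+\ell$. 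I expect the main obstacle to be handling the \emph{adaptive} data collection correctly: the exploration policies $\pi^{\omega-1}$ depend on previously collected data, so the samples are not i.i.d.\ and one must argue concentration through the martingale-difference structure rather than a standard Hoeffding/Hellinger argument; secondary care is needed to establish the sharp per-increment norm $\sqrt{A^k}$ via the orthogonality of disjoint-row indicator blocks and to bookkeep the union bound so that the $(k+\ell)$ and $A^k$ factors land as claimed.
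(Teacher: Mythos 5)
Your proof is correct, and it reaches the stated bound by a slightly different decomposition than the paper. The paper's proof (\S\ref{sec::pf_lem_concen}) applies the martingale concentration inequality \emph{separately to each forced-action dataset} $\cD^t(\seq{a}{h-\ell}{h+k-1})$ — each per-episode increment $\eid{\traj^{h+k}_{h-\ell}}\eid{\seq{o}{h-\ell}{h-1}}^\top$ has Frobenius norm at most $1$, giving a per-dataset error of order $(k+\ell)\sqrt{\log(O\cdot A\cdot T/\delta)/t}$ on an event of probability $1-\delta/A^k$ — and then sums the errors over the $A^k$ future action sequences by the triangle inequality, which is where the factor $A^k$ enters. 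You instead aggregate the $A^k$ disjoint-row-block indicators into a single per-iteration Hilbert-space increment $X_\omega$ and correctly observe that disjoint supports make the blocks Frobenius-orthogonal, so $\|X_\omega\|_F=\sqrt{A^k}$; a Pinelis-type dimension-free Azuma bound then yields $\cO\bigl(\sqrt{A^k\cdot(k+\ell)\cdot\log(O\cdot A\cdot T\cdot H/\delta)/t}\bigr)$, which is \emph{sharper} than the lemma's statement by a factor of roughly $\sqrt{A^k(k+\ell)}$, and you loosen it to match. Both routes handle the adaptive data collection identically and correctly: the increments are centered at $\EE[X_\omega\given\cF_{\omega-1}]$, which equals the density under $\pi^{\omega-1}$ followed by the forced actions, so the averaged conditional means recover exactly the mixture targets $\pone^t_h$, $\ptwo^t_h$, $\vecb_1$ — the same reason the paper invokes a martingale rather than i.i.d.\ inequality. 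What each approach buys: the paper's per-dataset argument is more modular (each dataset's concentration is a self-contained scalar-norm statement, with the future sequences absorbed into the failure probability $\delta/A^k$), while yours exploits the block-orthogonal structure to avoid both the triangle-inequality blowup and the union over future action sequences, trading this for the need to quote a Hilbert-space-valued (Pinelis) inequality instead of an entrywise one. One bookkeeping remark: your union bound over $(h,\seq{a}{h-\ell}{h},o_h,t)$ gives $\log(1/\delta')=\cO\bigl(\ell\cdot\log(O\cdot A\cdot T\cdot H/\delta)\bigr)$ rather than $(k+\ell)\cdot\log(\cdot)$ as you wrote, but since $\ell\le k+\ell$ this only makes your intermediate bound slightly better than claimed, and the final loosening step is unaffected.
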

\begin{proof}
See \S\ref{sec::pf_lem_concen} for a detailed proof.
\end{proof}

In what follows, we define the reverse emission operators for the tabular POMDPs.
\begin{definition}[Revserse Emission]
\label{def::rev_op}
For all $1<h\leq H$ and $\seq{a}{h-\ell}{h-2} \in\cA^{\ell-1}$, we define
\$
A^\pi_{h-1, \ell}(\seq{a}{h-\ell}{h-2}) &= \diagO_{h-1}(o_{h-1} = \cdot)\Tran_{h-2}(a_{h-2})\ldots \cdot\diagO_{h-\ell}(o_{h-\ell} = \cdot)\diag\bigl(\PP^\pi(s_{h-\ell} = \cdot)\bigr)\notag\\
&=\diagO_{h-1}(o_{h-1} = \cdot) \bigl( \Pi^{h-\ell}_{i = h-2}\Tran_{i}(a_{i})\diagO_{i}(o_{i} = \cdot) \bigr)\diagO_{h-\ell}(o_{h-\ell} = \cdot)\diag\bigl(\PP^\pi(s_{h-\ell} = \cdot)\bigr)\notag\\
&\in\RR^{S\times O^\ell}.
\$
\end{definition}
By Definition \ref{def::rev_op} and the identity in Lemma \ref{lem::P1_P2}, we have the following identity,
\#\label{eq::rev_op_id}
\pone^t_h(\seq{a}{h-\ell}{h-1}) = \UU_{h}\Tran_{h-1}(a_{h-1})A^{\overline \pi^t}_{h-1, \ell}(\seq{a}{h-\ell}{h-2}), \qquad \forall \seq{a}{h-\ell}{h-1} \in\cA^\ell.
\#
Here we denote by $\overline \pi^t$ the mixed policy induced by the policies $\{\pi^\omega\}_{\omega\in[t]}$ obtained until the $t$-th iteration of Algorithm \ref{alg::POMDP}.

\begin{assumption}[Past Sufficiency]
\label{asu::inv_rev_op}
We define the following matrix for all policy $\pi$, $\seq{a}{h-\ell}{h-2}\in\cA^{\ell-1}$, and $0<h\leq H$,
\$
C^\pi_{h-1, \ell}(\seq{a}{h-\ell}{h-1}) = \diag(\PP^\pi(s_{h-1}\given \seq{a}{h-\ell}{h-1}))^{-1} A^\pi_{h-1, \ell} \in \RR^{S\times (A^{\ell-1}\cdot O^\ell)}.
\$
Here recall that $\diag(v)$ is the diagonal matrix where the diagonal entries align with the vector $v$. We assume that $C^\pi_{h-1, \ell}$ has full row rank for all $\pi$, $\seq{a}{h-\ell}{h-1}\in\cA^{\ell-1}$, and $0<h\leq H$. We denote by $C^{\pi, \dagger}_{h-1, \ell}(\seq{a}{h-\ell}{h-1})$ the right inverse of $C^\pi_{h-1, \ell}(\seq{a}{h-\ell}{h-1})$. We assume further that 
\$
\|C^{\pi, \dagger}_{h-1, \ell}(\seq{a}{h-\ell}{h-1})^\top\|_{1\mapsto1} \leq \gamma
\$ 
for an absolute constant $\gamma >0$ for all $\pi$, $\seq{a}{h-\ell}{h-1}\in\cA^{\ell-1}$, and $0<h\leq H$.
\end{assumption}

\begin{lemma}[Good Event Probability]
\label{lem::good_event}
Under Assumptions \ref{asu::inv_U} and \ref{asu::inv_rev_op}, it holds with probability at least $1 - \delta$ that $\param^* \in \cC_t$. Moreover, it holds for all $t\in[T]$ with probability at least $1-\delta$ that
\#
&\|\vecb_1 - \hat \vecb^{\param^t}_1\|_1 = \cO\Bigl(\nu\cdot (k+\ell)\cdot \sqrt{A^{5k+1}\cdot O^{k+\ell}\cdot\log(O\cdot A\cdot T\cdot H/\delta)/t}\Bigr),\label{eq::lem_ge_eq1}\\
&\bigl\|\bigl(\ObsB^{\param^t}_{1}(a_{1}, o_{1}) - \ObsB^{\param^*}_{1}(a_{1}, o_{1})\bigr) \vecb_1\bigr\|_1 = \cO\Bigl(\nu\cdot (k+\ell)\cdot \sqrt{A^{5k+1}\cdot O^{k+\ell}\cdot\log(O\cdot A\cdot T\cdot H/\delta)/t}\Bigr)\label{eq::lem_ge_eq2},
\#
Meanwhile, it holds for all $1<h\leq H$ and $t\in[T]$ with probability at least $1-\delta$ that
\#
&\sum_{s_{h-1}\in\cS}\bigl\|\bigl(\ObsB^{\param^t}_{h}(a_{h}, o_{h}) - \ObsB^{\param^*}_{h}(a_{h}, o_{h})\bigr)\UU^{\param^*}_{h,k} \Tran^{\param^*}_{h-1}(s_{h-1}, a_{h-1})\bigr\|_1\cdot\PP^t(s_{h-1} \given \seq{a}{h-\ell}{h-1})\notag\\
&\qquad =  \cO\Bigl(\gamma\cdot \nu\cdot (k+\ell)\cdot \sqrt{ A^{5k+\ell}\cdot O^{k+1}\cdot \log(O\cdot A\cdot T\cdot H/\delta)/t}\Bigr)\label{eq::lem_ge_eq3}
\#
for all $1<h\leq H$. Here $\{\ObsB^{\param^t}_h\}_{h\in[H]}$ and $\vecb^t_1$ are the updated Bellman operators and initial trajectory distribution, respectively, in the $t$-th iterate of Algorithm \ref{alg::POMDP}.
\end{lemma}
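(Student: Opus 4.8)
The plan is to follow the template of the low-rank analogue, Lemma \ref{lem::good_event_linear}, making two substitutions: the oracle guarantee of Assumption \ref{asu::density_est} is replaced by the count-based concentration bound of Lemma \ref{lem::concen}, and the reverse-emission inversion of Assumption \ref{asu::inv_ROO_linear} is replaced by the past-sufficiency condition of Assumption \ref{asu::inv_rev_op}. First I would show $\param^*\in\cC^t$. The exact identity \eqref{eq::id_OO} gives $\ObsB^{\param^*}_h(a_h,o_h)\pone^t_h(\seq{a}{h-\ell}{h-1}) = \ptwo^t_h(\seq{a}{h-\ell}{h},o_h)$ for the true mixed-policy densities, so by the triangle inequality $\hat L^t_h(\param^*)$ is controlled by $\|\ObsB^{\param^*}_h(a_h,o_h)(\hat\pone^t_h-\pone^t_h)\|_1 + \|\hat\ptwo^t_h-\ptwo^t_h\|_1$. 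Here the first term uses the tabular norm bound $\|\ObsB^{\param^*}_h\|_{1\mapsto1}\le\nu\cdot A^k$, which follows from Definition \ref{def::obs_op} and Assumption \ref{asu::inv_U} exactly as in Lemma \ref{lem::norm_bound_linear}, and both estimation errors are bounded by Lemma \ref{lem::concen} once its Frobenius bound is converted into an $\ell_1$ bound at the cost of the stated dimension factors. The same concentration controls $\|\vecb^{\param^*}_1-\hat\vecb^t_1\|_1$. With $\beta_t$ as chosen, both quantities fall below $\beta_t\cdot\sqrt{1/t}$, giving $\param^*\in\cC^t$ with probability at least $1-\delta$.

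Next I would dispatch \eqref{eq::lem_ge_eq1} and \eqref{eq::lem_ge_eq2}. For \eqref{eq::lem_ge_eq1}, membership $\param^t\in\cC^t$ gives $\|\vecb^{\param^t}_1-\hat\vecb^t_1\|_1\le\beta_t\cdot\sqrt{1/t}$, and a triangle inequality against the concentration of $\hat\vecb^t_1$ around $\vecb_1$ yields the rate. For \eqref{eq::lem_ge_eq2} I would use that at $h=1$ there is no past, so $\pone^t_1=\vecb_1$; since both $\param^t$ and $\param^*$ lie in $\cC^t$, each satisfies $\|\ObsB^\param_1(a_1,o_1)\hat\pone^t_1-\hat\ptwo^t_1\|_1\le\beta_t\cdot\sqrt{1/t}$, and a triangle inequality followed by the concentration of $\hat\pone^t_1$ around $\vecb_1$ bounds $\|(\ObsB^{\param^t}_1-\ObsB^{\param^*}_1)\vecb_1\|_1$ as claimed.

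The crux is \eqref{eq::lem_ge_eq3}. I would stack the per-state vectors $w^t_h(s_{h-1}) = (\ObsB^{\param^t}_h(a_h,o_h)-\ObsB^{\param^*}_h(a_h,o_h))\UU^{\param^*}_{h,k}\Tran^{\param^*}_{h-1}(s_{h-1},a_{h-1})$ into a matrix $W^t_h$ and observe that the left-hand side of \eqref{eq::lem_ge_eq3} is an entrywise $\ell_1$ norm of $W^t_h\diag(\PP^t(s_{h-1}\given\seq{a}{h-\ell}{h-1}))$. Writing $A^{\overline\pi^t}_{h-1,\ell} = \diag(\PP^{\overline\pi^t}(s_{h-1}\given\cdot))\,C^{\overline\pi^t}_{h-1,\ell}$ and inserting the right inverse of Assumption \ref{asu::inv_rev_op} gives $\diag(\PP^t(\cdot)) = A^{\overline\pi^t}_{h-1,\ell}C^{\overline\pi^t,\dagger}_{h-1,\ell}$, so the bound $\|C^{\overline\pi^t,\dagger}_{h-1,\ell}{}^\top\|_{1\mapsto1}\le\gamma$ converts the state-weighted sum into $\gamma\cdot\|W^t_h A^{\overline\pi^t}_{h-1,\ell}\|$. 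The identity \eqref{eq::rev_op_id} then collapses $W^t_h A^{\overline\pi^t}_{h-1,\ell}$ into $(\ObsB^{\param^t}_h-\ObsB^{\param^*}_h)\pone^t_h$, which I would bound exactly as in the first paragraph by inserting $\pm\hat\ptwo^t_h$ and $\pm\ObsB^{\param^t}_h\hat\pone^t_h$, invoking the confidence constraint for $\param^t$, the exact identity \eqref{eq::id_OO} for $\param^*$, and Lemma \ref{lem::concen}. Summing over the $A^{\ell+1}$ action sequences and over $o_h$ and substituting $\beta_t$ produces the stated rate.

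The main obstacle I anticipate is precisely this last reduction: carrying out the reverse-emission inversion while respecting the normalization $\diag(\PP^{\overline\pi^t}(s_{h-1}\given\cdot))$ that relates $A^{\overline\pi^t}_{h-1,\ell}$ to $C^{\overline\pi^t}_{h-1,\ell}$, and the transpose/operator-norm convention under which $\|C^{\overline\pi^t,\dagger}_{h-1,\ell}{}^\top\|_{1\mapsto1}\le\gamma$ is applied. Alongside this, the polynomial bookkeeping is delicate, since Lemma \ref{lem::concen} is stated in Frobenius norm while \eqref{eq::lem_ge_eq3} requires $\ell_1$-type estimates, and the asymmetric exponents $A^{5k+\ell}$ and $O^{k+1}$ (versus $A^{5k+1}$ and $O^{k+\ell}$ in \eqref{eq::lem_ge_eq1}) must be tracked through the conversions and through the sums over the past action sequences.
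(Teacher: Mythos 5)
Your proposal reproduces the paper's proof essentially step for step: membership of $\param^*$ in $\cC^t$ via the identity \eqref{eq::id_OO} together with the concentration bound of Lemma \ref{lem::concen}, the triangle-inequality treatment of \eqref{eq::lem_ge_eq1}, and, for \eqref{eq::lem_ge_eq3}, the same chain consisting of $\diag\bigl(\PP^t(s_{h-1}\given \seq{a}{h-\ell}{h-1})\bigr) = A^{\overline\pi^t}_{h-1,\ell}C^{\overline\pi^t,\dagger}_{h-1,\ell}$, the $\gamma$-bound of Assumption \ref{asu::inv_rev_op}, the collapse of $W^t_h A^{\overline\pi^t}_{h-1,\ell}$ to $(\ObsB^{\param^t}_h-\ObsB^{\param^*}_h)\pone^t_h$ via \eqref{eq::rev_op_id}, and the four-term decomposition anchored at the confidence-set constraint for $\param^t$ and the exact identity for $\param^*$. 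The only imprecision is at $h=1$ in \eqref{eq::lem_ge_eq2}: the extended POMDP does carry a dummy past of length $\ell$, so $\pone^t_1(\seq{a}{1-\ell}{0})$ is a matrix indexed by dummy-past observation sequences rather than literally equal to $\vecb_1$; the paper repairs exactly this point by writing $\vecb_1$ as a mixture of the columns of $\pone^t_1$ with weights $\PP^t(\utraj^0_{1-\ell})\in[0,1]$ (its equation \eqref{eq::pf_lem_ge_II_eq0.5}) and bounding by the full column sum, after which your decomposition applies verbatim.
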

\begin{proof}
See \S\ref{sec::pf_lem_good_event} for a detailed proof.
\end{proof}

\smallsec{Sample Complexity Analysis.} We are now ready to present the sample efficiency analysis of \algosp under the tabular POMDPs.
\begin{theorem}
\label{thm::sample_complexity}
Let
\$
T = \cO\Bigl(\text{poly}(S, A, O, H) \cdot\gamma^2\cdot \nu^4\cdot (k+\ell)^2\cdot  \log(O\cdot A\cdot  H/\delta)/\epsilon^2\Bigr).
\$
Under Assumptions \ref{asu::inv_U} and \ref{asu::inv_rev_op}, it holds with probability at least $1-\delta$ that $\overline\pi^T$ is $\epsilon$-optimal. Here $\text{poly}(S, A, O, H)$ is a polynomial that takes the following order,
\$
\text{poly}(S, A, O, H) = \cO(S^2\cdot A^{10k+2\ell}\cdot O^{2k+2\ell}\cdot H^2).
\$
\end{theorem}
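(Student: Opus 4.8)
The plan is to mirror the proof of Theorem~\ref{thm::sample_complexity_linear}, replacing the density-oracle ingredients by the count-based concentration bound of Lemma~\ref{lem::concen} and the reverse-emission inversion of Assumption~\ref{asu::inv_rev_op}. First I would write the suboptimality of the mixture policy as an average over iterates, $V^{*}(\param^*) - V^{\overline\pi^T}(\param^*) = (1/T)\sum_{t=1}^{T}\bigl(V^{*}(\param^*) - V^{\pi^t}(\param^*)\bigr)$, and reduce each term via optimism. Concretely, Lemma~\ref{lem::good_event} guarantees $\param^*\in\cC^t$ for all $t\in[T]$ with probability at least $1-\delta$, so the optimistic update $\pi^t=\argmax_{\pi,\param\in\cC^t}V^\pi(\param)$ yields $V^{*}(\param^*) - V^{\pi^t}(\param^*)\le V^{\pi^t}(\param^t)-V^{\pi^t}(\param^*)$. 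It then suffices to bound the per-iterate model-misspecification gap $|V^{\pi^t}(\param^t)-V^{\pi^t}(\param^*)|$.

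Next I would expand this gap using the performance-difference identity of Lemma~\ref{lem::perform_diff} together with its upper bound in Lemma~\ref{lem::upper_bound_rhs}. This expresses the gap as $\nu\sqrt{S}\,H$ times a sum over steps $h$ of the discrepancy $\|(\ObsB^{\param^t}_h-\ObsB^{\param^*}_h)(a_h,o_h)\UU^{\param^*}_{h,k}\Tran^{\param^*}_{h-1}(s_{h-1},a_{h-1})\|_1$ weighted by the single-policy visitation $\PP^{\pi^t}(s_{h-1}\given\seq{a}{h-\ell}{h-1})$, plus the initial-distribution term $\|\vecb^{\param^t}_1-\vecb^{\param^*}_1\|_1$ and the first-step term $\|(\ObsB^{\param^t}_1-\ObsB^{\param^*}_1)\vecb^{\param^*}_1\|_1$. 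The last two are controlled directly by \eqref{eq::lem_ge_eq1} and \eqref{eq::lem_ge_eq2} of Lemma~\ref{lem::good_event}.

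The main obstacle is the mismatch of visitation measures: Lemma~\ref{lem::good_event}, and in particular \eqref{eq::lem_ge_eq3}, controls the step-$h$ discrepancy averaged against the \emph{mixture} visitation $\PP^t=(1/t)\sum_{\omega=0}^{t-1}\PP^{\pi^\omega}$ accumulated by past policies, whereas the performance-difference bound requires it averaged against the \emph{current} policy's visitation $\PP^{\pi^t}$. To bridge the two I would invoke the summation trick of Lemma~\ref{lem::sum_trick} (Lemma~16 of \cite{jin2020sample}), setting $z_t$ to the step-$h$ discrepancy under $\pi^t$ and $w_t$ to the corresponding per-policy exploration weight, so that the hypothesis $z_t\cdot S_{t-1}\lesssim\sqrt{1/t}$ is exactly \eqref{eq::lem_ge_eq3}; its conclusion then converts the mixture control into a per-iterate bound of order $\sqrt{1/t}$ at the cost of an extra $\log T$ factor. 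This is the same device used in the linear case and is the only nontrivial conceptual step; the remaining difficulty is bookkeeping the tabular dimension factors arising from enumerating the action and observation sequences in Lemma~\ref{lem::concen} and Assumption~\ref{asu::inv_rev_op}.

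Finally I would collect the three contributions, substitute $\beta_t$, and sum the resulting per-iterate bound of order $\gamma\cdot\nu^2\cdot\sqrt{S}\cdot H\cdot(\text{dimension factors})\cdot\log T\cdot\sqrt{1/t}$ over $t\in[T]$, using $(1/T)\sum_{t=1}^T\sqrt{1/t}=\cO(\sqrt{1/T})$, to obtain $V^{*}(\param^*)-V^{\overline\pi^T}(\param^*)=\cO\bigl(\gamma\,\nu^2\sqrt{S}\,H\cdot(\text{dim})\cdot\log T\cdot\sqrt{1/T}\bigr)$. Tracking the enumeration factors $A^{5k+\ell}O^{k+1}$ from \eqref{eq::lem_ge_eq3} and the additional $\sqrt{S}$, $A^{\ell+1}$, $O$ from Lemma~\ref{lem::upper_bound_rhs}, squaring to solve for $\epsilon$-suboptimality, and absorbing $\log T$ into the $\log(OAH/\epsilon)$ term, yields the stated $T=\cO(\mathrm{poly}(S,A,O,H)\cdot\gamma^2\nu^4(k+\ell)^2\log(OAH/\delta)/\epsilon^2)$ with $\mathrm{poly}(S,A,O,H)=\cO(S^2A^{10k+2\ell}O^{2k+2\ell}H^2)$.
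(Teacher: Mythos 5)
Your proposal matches the paper's proof essentially step for step: the same averaging-plus-optimism reduction via Lemma \ref{lem::good_event}, the same decomposition through Lemmas \ref{lem::perform_diff} and \ref{lem::upper_bound_rhs} into the step-$h$ discrepancy term plus the initial-density and first-step terms, the same resolution of the mixture-versus-current visitation mismatch via Lemma \ref{lem::sum_trick}, and the same final summation and inversion to solve for $T$. Your bookkeeping of the dimension factors is stated loosely, but it is no looser than the paper's own accounting, so there is nothing substantive to flag.
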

\begin{proof}
It holds that
\#\label{eq::pf_thm_eq0}
V^{*}(\param^*) - V^{\overline\pi^t}(\param^*) = \frac{1}{T}\cdot\sum^T_{t = 1} V^*(\param^*) - V^{\pi^t}(\param^*).
\#
It suffices to upper bound the performance difference
\$
V^*(\param^*) - V^{\pi^t}(\param^*)
\$
for all $t\in[T]$. By Lemma \ref{lem::good_event}, it holds with probability at least $1-\delta$ that $\param^*\in{\cC^t}$ for all $t\in[T]$. Thus, by the update of $\pi^t$ in Algorithm \ref{alg::POMDP}, it holds with probability at least $1-\delta$ that
\#\label{eq::pf_thm_eq1}
V^*(\param^*) - V^{\pi^t}(\param^*) \leq V^{\pi^t}(\param^t) - V^{\pi^t}(\param^*).
\#
It now suffices to upper bound the performance difference on the right-hand side of \eqref{eq::pf_thm_eq1}. By Lemma \ref{lem::perform_diff}, we obtain
\#\label{eq::pf_thm_eq2}
&|V^{\pi^t}(\param^t) - V^\pi(\param^*)|\\
&\quad\leq\nu\cdot\sqrt{S}\cdot H\cdot \underbrace{\sum_{h=2}^{H-1}\sum_{\seq{o}{1}{H-1}\in\cO^h} \bigl\|\bigl(\ObsB^{\param^t}_{h}(a_{h}, o_{h}) - \ObsB^{\param^*}_{h}(a_{h}, o_{h})\bigr)\ObsB^{\param^*}_{h-1}(a_{h-1}, o_{h-1})  \ldots  \ObsB^{\param^*}_1(a_1, o_1)\vecb^{\param^*}_1\bigr\|_1}_{\displaystyle\rm (i)}\notag\\
&\quad\quad + \underbrace{\nu\cdot\sqrt{S}\cdot H\cdot\sum_{a_1\in\cA}\sum_{o_1\in\cO}\bigl\|\bigl(\ObsB^{\param^t}_{1}(a_{1}, o_{1}) - \ObsB^{\param^*}_{1}(a_{1}, o_{1})\bigr)\vecb^{\param^*}_1\bigr\|_1}_{\displaystyle \rm (ii)} +\underbrace{\nu\cdot\sqrt{S}\cdot H\cdot\|\vecb^{\param^t}_1 - \vecb^{\param^*}_1\|_1}_{\displaystyle \rm (iii)}.\notag
\#
In the sequel, we upper bound terms (i), (ii), and (iii) on the right-hand side of \eqref{eq::pf_thm_eq2}. By Lemma \ref{lem::good_event}, it holds for all $t\in[T]$ with probability at least $1-\delta$ that
\#\label{eq::pf_thm_ii_iii}
{\rm (ii)} &= \bigl\|\bigl(\ObsB^{\param^t}_{1}(a_{1}, o_{1}) - \ObsB^{\param^*}_{1}(a_{1}, o_{1})\bigr) \vecb_1\bigr\|_1 = \cO\Bigl(\nu\cdot (k+\ell)\cdot \sqrt{A^{5k+1}\cdot O^{k+\ell}\cdot\log(O\cdot A\cdot T\cdot H/\delta)/t}\Bigr),\notag\\
{\rm (iii)} &= \|\vecb_1 - \hat \vecb^{\param^t}_1\|_1 = \cO\Bigl(\nu\cdot (k+\ell)\cdot \sqrt{A^{5k+1}\cdot O^{k+\ell}\cdot\log(O\cdot A\cdot T\cdot H/\delta)/t}\Bigr).
\#
It remains to upper bound term (i) on the right-hand side of \eqref{eq::pf_thm_eq2}. By Lemma \ref{lem::upper_bound_rhs}, we obtain that
\#\label{eq::pf_thm_i}
&{\rm (i)} = \sum_{\seq{o}{1}{H-1}\in\cO^h} \bigl\|\bigl(\ObsB^{\param^t}_{h}(a_{h}, o_{h}) - \ObsB^{\param^*}_{h}(a_{h}, o_{h})\bigr)\ObsB^{\param^*}_{h-1}(a_{h-1}, o_{h-1})  \ldots  \ObsB^{\param^*}_1(a_1, o_1)\vecb^{\param'}_1\bigr\|_1 \\
&\qquad \leq \sum_{\seq{a}{h-\ell}{h}\in\cA^{\ell+1}}\sum_{o_h\in\cO}\sum_{s_{h-1}\in\cS} \bigl\|\bigl(\ObsB^{\param^t}_{h}(a_{h}, o_{h}) - \ObsB^{\param^*}_{h}(a_{h}, o_{h})\bigr)\UU^{\param^*}_{h,k} \Tran^{\param^*}_{h-1}(s_{h-1}, a_{h-1})\bigr\|_1\cdot \PP^{\pi^t}(s_{h-1} \given \seq{a}{h-\ell}{h-1})\notag
\#
Meanwhile, by Lemma \ref{lem::good_event}, it holds for all $h\in[T]$ and $t\in[T]$ with probability at least $1-\delta$ that
\#\label{eq::pf_thm_eq4}
&\sum_{s_{h-1}\in\cS}\bigl\|\bigl(\ObsB^{\param^t}_{h}(a_{h}, o_{h}) - \ObsB^{\param^*}_{h}(a_{h}, o_{h})\bigr)\UU^{\param^*}_{h,k} \Tran^{\param^*}_{h-1}(s_{h-1}, a_{h-1})\bigr\|_1\cdot\PP^t(s_{h-1} \given \seq{a}{h-\ell}{h-1})\\
&\qquad =  \cO\Bigl(\gamma\cdot \nu\cdot (k+\ell)\cdot \sqrt{ A^{5k+\ell}\cdot O^{k+1}\cdot \log(O\cdot A\cdot T\cdot H/\delta)/t}\Bigr),\notag
\#
where we define
\#\label{eq::pf_thm_eq5}
\PP^t = \frac{1}{t}\cdot \sum^{t-1}_{\omega = 0}\PP^{\pi^\omega}
\#
for $t>1$. We remark that the upper bound in \eqref{eq::pf_thm_eq4} does not match the right-hand side of \eqref{eq::pf_thm_eq2}. The only difference is the probaility density of $s_{h-1}$, which is $\PP^{\pi^t}$ on the right-hand side of \eqref{eq::pf_thm_eq2} but $\PP^{t}$ defined in \eqref{eq::pf_thm_eq5} on the left-hand side of \eqref{eq::pf_thm_eq4}, respectively. To this end, we utilize the same calculation trick as \S\ref{sec::pf_thm_sample_complexity_linear} and adopt Lemma \ref{lem::sum_trick}. By the upper bound in \eqref{eq::pf_thm_eq4} and Lemma \ref{lem::sum_trick} with
\$
z_t &= \bigl\|\bigl(\ObsB^{\param^t}_{h}(a_{h}, o_{h}) - \ObsB^{\param^*}_{h}(a_{h}, o_{h})\bigr)\UU^{\param^*}_{h,k} \Tran^{\param^*}_{h-1}(s_{h-1}, a_{h-1})\bigr\|_1, \notag\\
w_t &=\PP^{\pi^t}(s_{h-1} \given \seq{a}{h-\ell}{h-1}),
\$
we obtain for all $s_{h-1}\in\cS$ that
\#\label{eq::pf_thm_eq6}
&\frac{1}{T}\cdot \sum^T_{t = 1}\bigl\|\bigl(\ObsB^{\param^t}_{h}(a_{h}, o_{h}) - \ObsB^{\param^*}_{h}(a_{h}, o_{h})\bigr)\UU^{\param^*}_{h,k} \Tran^{\param^*}_{h-1}(s_{h-1}, a_{h-1})\bigr\|_1\cdot \PP^{\pi^t}(s_{h-1} \given \seq{a}{h-\ell}{h-1})\notag\\
&\qquad = \cO\Bigl(\gamma\cdot \nu^2\cdot (k+\ell)\cdot \sqrt{ A^{5k+\ell}\cdot O^{k+1}\cdot \log T\cdot \log(O\cdot A\cdot T\cdot H/\delta)/T}\Bigr).
\#
Thus,combining \eqref{eq::pf_thm_eq1}, \eqref{eq::pf_thm_ii_iii}, \eqref{eq::pf_thm_i}, and \eqref{eq::pf_thm_eq6}, we obtain
\$
&\frac{1}{T} \cdot\sum^T_{t = 1}|V^{\pi^t}(\param^t) - V^\pi(\param^*)| \notag\\
&\qquad= \cO\Bigl(\text{poly}(S, A, O, H) \cdot\gamma\cdot \nu^2\cdot (k+\ell)\cdot \sqrt{\log T\cdot \log(O\cdot A\cdot T\cdot H/\delta)/T}\Bigr),
\$
with probability at least $1 - \delta$, where we define
\$
\text{poly}(S, A, O, H) = H\cdot \sqrt{S^3\cdot A^{5k+3\ell}\cdot O^{k+3}}.
\$
By \eqref{eq::pf_thm_eq0}, it further holds with probability at least $1-\delta$ that
\$
V^*(\param^*) - V^{\overline\pi^T}(\param^*) = \cO\Bigl(\text{poly}(S, A, O, H) \cdot\gamma\cdot \nu^2\cdot (k+\ell)\cdot \sqrt{\log T\cdot \log(O\cdot A\cdot T\cdot H/\delta)/T}\Bigr).
\$
Hence, by setting 
\$
T = \cO\Bigl(\text{poly}(S, A, O, H) \cdot\gamma^2\cdot \nu^4\cdot (k+\ell)^2\cdot  \log(O\cdot A\cdot  H/\delta)/\epsilon^2\Bigr),
\$
it holds with probability at least $1-\delta$ that $V^*(\param^*) -  V^{\overline\pi^T}(\param^*) \leq \epsilon$, which completes the proof of Theorem \ref{thm::sample_complexity}.
\end{proof}

\section{Proof of Tabular POMDP}
In this section, we present the proof of the auxiliary results in \S\ref{sec::tabular_POMDP}.

\subsection{Proof of Lemma \ref{lem::FOO}}
\label{pf::lem_FOO}
\begin{proof}
Recall that we define for all $h\in[H]$ the following operators,
\$
\diagO_h(o_h) &= \diag\bigl(\OO_h(o_h\given \cdot)\bigr) = \diag\Bigl(\bigl[\OO(o_h\given s_h)\bigr]_{s_h}\Bigr) \in \RR^{S\times S},\\
\Tran_h(a_h) &= \PP_h (\cdot\given \cdot, a_h) = \bigl[\PP_h(s_{h+1}\given s_h, a_h)\bigr]_{s_{h} , s_{h+1}} \in \RR^{S\times S},\\
\ObsO_h &= \OO_h(\cdot \given \cdot) = \bigl[\OO_h(o_{h}\given s_h)\bigr]_{ o_{h}, s_{h}} \in \RR^{O\times S},
\$
where we denote by $\diag(v)\in\RR^{S\times S}$ the diagonal matrix where the diagonal entries aligns with the vector $v\in\RR^S$. Thus, it holds that 
\$
\diagO_{h}(o_{h+1}) \eid{s_{h}} = \OO_h(o_{h}\given s_{h})\cdot \eid{s_{h}} \in\RR^S.
\$
By further calculation, we have
\$
\Tran_{h}(a_{h})\diagO_{h}(o_{h}) \eid{s_{h}} &= \bigl[\PP_h(s_{h+1}\given a_{h}, s_{h})\cdot \OO_h(o_{h}\given s_{h})\bigr]_{s_{h+1}}\notag\\
&=\bigl[\PP(s_{h+1}, o_{h}\given a_{h}, s_{h})\bigr]_{s_{h+1}} \in \RR^S,
\$
where the second equality holds since we have $O_{h} \indp S_{h+1} \given s_{h}, a_{h}$. It then holds that
\#\label{eq::lem_FOO_eq1}
\diagO_{h+1}(o_{h+1})\Tran_{h}(a_{h})\diagO_{h}(o_{h}) \eid{s_{h}} &= \bigl[ \OO_{h+1}(o_{h+1}\given s_{h+1})\cdot\PP(s_{h+1}, o_{h}\given a_{h}, s_{h})\bigr]_{s_{h+1}}\notag\\
&= \bigl[ \PP(s_{h+1}, o_{h+1}, o_{h}\given a_{h}, s_{h}) \bigr]_{s_{h+1}} \in \RR^S,
\#
where the second equality holds since the observation $O_{h+1}$ is independent of all the other random variables in \eqref{eq::lem_FOO_eq1} given the state $S_{h+1} = s_{h+1}$. By further multiplying the right-hand side of \eqref{eq::lem_FOO_eq1} by $\Tran_{h+1}(a_{h+1})$, we obtain that
\#\label{eq::lem_FOO_eq2}
&\Tran_{h+1}(a_{h+1})\diagO_{h+1}(o_{h+1})\Tran_{h}(a_{h})\diagO_{h}(o_{h}) \eid{s_{h}}\notag\\
&\qquad = \biggl[\sum_{s_{h+1}\in\cS}\PP_{h+1}(s_{h+2}\given s_{h+1}, a_{h+1})\cdot \PP(s_{h+1}, o_{h+1}, o_{h}\given a_{h}, s_{h}) \biggr]_{s_{h+2}}\notag\\
&\qquad=\biggl[\sum_{s_{h+1}\in\cS}\PP(s_{h+2}, s_{h+1}, o_{h+1}, o_{h}\given a_{h+1}, a_{h}, s_{h}) \biggr]_{s_{h+2}}\notag\\
&\qquad=\bigl[\PP(s_{h+2}, o_{h+1}, o_{h}\given a_{h+1}, a_{h}, s_{h})\bigr]_{s_{h+2}} \in \RR^S,
\#
where the second equality holds since the state $S_{h+2}$ is independent of all the other random variables in \eqref{eq::lem_FOO_eq2} given the previous state $S_{h+1} = s_{h+1}$ and action $A_{h+1} = a_{h+1}$. By an iterative calculation similar to \eqref{eq::lem_FOO_eq1} and \eqref{eq::lem_FOO_eq2}, we obtain that
\$
&\Tran_{h+k-1}(a_{h+k-1})\diagO_{h+k-1}(o_{h+k-1})\cdots\Tran_{h}(a_{h})\diagO_{h}(o_{h}) \eid{s_{h}}\notag\\
&\qquad= \bigl[\PP(s_{h+k}, o_{h+k-1}, \ldots, o_{h} \given a_{h+k-1}, \ldots, a_{h}, s_{h})\bigr]_{s_{h+k}}\in\RR^S.
\$
By further calculation, we obtain that
\#\label{eq::lem_FOO_eq3}
&\ObsO_{h+k}\Tran_{h+k-1}(a_{h+k-1})\diagO_{h+k-1}(o_{h+k-1})\cdots\Tran_{h}(a_{h})\diagO_{h}(o_{h}) \eid{s_{h}}\notag\\
&\qquad= \bigl[\PP(o_{h+k}, o_{h+k-1}, \ldots, o_{h} \given a_{h+k-1}, \ldots, a_{h}, s_{h})\bigr]_{o_{h+k}}\in\RR^O.
\#
Finally, by multiplying the right-hand side of \eqref{eq::lem_FOO_eq3} with the indicator vector $\eid{o_{h+k}}$, we conclude that
\$
\UU_{h}&=\bigl[\eid{o_{h+k}}^\top\ObsO_{h+k}\Tran_{h+k-1}(a_{h+k-1})\diagO_{h+k-1}(o_{h+k-1})\notag\\
&\qquad\cdots\Tran_{h}(a_{h})\diagO_{h}(o_{h}) \eid{s_{h}} \bigr]_{(\seq{o}{h}{h+k}, \seq{a}{h}{h+k-1}), s_{h}}\notag\\
&=\bigl[\PP(o_{h+k}, o_{h+k-1}, \ldots, o_{h} \given a_{h+k-1}, \ldots, a_{h}, s_{h})\bigr]_{(\seq{o}{h}{h+k}, \seq{a}{h}{h+k-1}), s_{h}}\notag\\
&=\bigl[\PP(\traj^{h+k}_h \given s_{h})\bigr]_{\traj^{h+k}_h, s_{h}} \in \RR^{(A^k\cdot O^{k+1})\times S},
\$
which completes the proof of Lemma \ref{lem::FOO}.
\end{proof}
\subsection{Proof of Lemma \ref{lem::trans_est}}
\label{pf::lem_trans_est}
\begin{proof}
The proof is similar to that of Lemma \ref{lem::FOO}. By the definition of Bellman operators in Definition \ref{def::obs_op}, it holds that
\#\label{eq::pf_lem_trans_est_eq1}
\eid{o_{H}}^\top\ObsB_{H-1}(a_{H-1}, o_{H-1}) \ldots \ObsB_1(a_1, o_1)\UU_{1} &= \Bigl(\Pi_{h = 1}^{H-1}\UU_{h+1}\Tran_h(a_h)\diagO_h(o_h) \UU_{h}^{\dagger}\Bigr)\UU_{1} \notag\\
&= \eid{o_{H}}^\top\UU_{H-1}\Pi_{h = 1}^{H-1}\Tran_h(a_h)\diagO_h(o_h),
\#
where $\mu_1$ is the probability array of initial state distribution. Following the same computation as the proof of Lemma \ref{lem::FOO} in \S\ref{pf::lem_FOO}, we obtain that
\#\label{eq::pf_lem_trans_est_eq2}
\UU_{H-1}\Pi_{h = 1}^{H-1}\Tran_h(a_h)\diagO_h(o_h) \mu_1 = \bigl[\PP(\seq{o}{1}{H+k}\given \seq{a}{1}{H+k-1})\bigr]_{(\seq{o}{1}{H+k}, \seq{a}{1}{H+k-1})}.
\#
Thus, multiplying the right-hand side of \eqref{eq::pf_lem_trans_est_eq2} by the indicator $\eid{o_{H}}$ that takes value $1$ for all indices that contain $o_H$ and a fixed action sequence $\seq{a}{h}{h+k-1}$, we obtain that
\#\label{eq::pf_lem_trans_est_eq3}
\eid{o_{H}}^\top \UU_{H-1}\Pi_{h = 1}^{H-1}\Tran_h(a_h)\diagO_h(o_h) \mu_1 &= \eid{o_{H}}^\top \bigl[\PP(\seq{o}{1}{H+k}\given \seq{a}{1}{H+k-1})\bigr]_{(\seq{o}{1}{H+k}, \seq{a}{1}{H+k-1})}\notag\\
&=\PP(\seq{o}{1}{H} \given \seq{a}{1}{H-1}),
\#
which is as desired. Thus, combining \eqref{eq::pf_lem_trans_est_eq1} and \eqref{eq::pf_lem_trans_est_eq3}, we complete the proof of Lemma \ref{lem::trans_est}.
\end{proof}

\subsection{Proof of Lemma \ref{lem::P1_P2}}
\label{pf::lem_p1_p2}
\begin{proof}
The proof is similar to that of Lemma \ref{lem::FOO}. Recall that we define for all $h\in[H]$ the following operators,
\$
\diagO_h(o_h) &= \diag\bigl(\OO_h(o_h\given \cdot)\bigr) = \diag\Bigl(\bigl[\OO(o_h\given s_h)\bigr]_{s_h}\Bigr) \in \RR^{S\times S},\\
\Tran_h(a_h) &= \PP_h(s_{h+1} = \cdot\given  \cdot, a_h) = \bigl[\PP_h(s_{h+1}\given s_h, a_h)\bigr]_{s_{h} , s_{h+1}} \in \RR^{S\times S},\\
\ObsO_h &= \OO_h( \cdot \given \cdot) = \bigl[\OO_h(o_{h}\given s_h)\bigr]_{ o_{h}, s_{h}} \in \RR^{O\times S}.
\$
Recall that we define
\$
\pone_h(\seq{a}{h-\ell}{h-1}) &= \UU_{h}\Tran_{h-1}(a_{h-1})\diagO_{h-1}(\cdot)\ldots \Tran_{h-\ell}(a_{h-\ell})\diagO_{h-\ell}(\cdot)\mu_{h-\ell},\notag\\
\ptwo_h(\seq{a}{h-\ell}{h}, o_h) &= \UU_{h+1}\Tran_h(a_h)\diagO_h(o_h)\Tran_{h-1}(a_{h-1})\diagO_{h-1}(\cdot)\ldots \Tran_{h-\ell}(a_{h-1})\diagO_{h-\ell}(\cdot)\mu_{h-\ell}.
\$
We first show that the following equation holds,
\#\label{eq::pf_lem_p1p2_eq0}
&\Tran_{h-1}(a_{h-1})\diagO_{h-1}(\cdot)\ldots \Tran_{h-\ell}(a_{h-\ell})\diagO_{h-\ell}(\cdot)\mu_{h-\ell}\notag\\
&\qquad= \PP(s_{h} = \cdot, o_{h-1} = \cdot, \ldots, o_{h-\ell} = \cdot \given a_{h-1}, \ldots, a_{h-\ell}).
\#
To see such a fact, note that for all $o_{h-\ell}\in\cO$, we have
\#\label{eq::pf_lem_p1p2_eq1}
\diagO_{h-\ell}(o_{h-\ell})\mu_{h-\ell} &= \bigl[\OO_{h-\ell}(o_{h-\ell}\given s_{h-\ell})\cdot\PP(s_{h-\ell})\bigr]_{s_{h-\ell}} \in \RR^S.
\#
It thus holds that
\#\label{eq::pf_lem_p1p2_eq2}
\Tran_{h-\ell}(a_{h-\ell})\diagO_{h-\ell}(o_{h-\ell})\mu_{h-\ell} &= \biggl[\sum_{s_{h-\ell}\in\cS}\PP(s_{h-\ell+1}\given s_{h-\ell}, a_{h-\ell})\cdot\OO_{h-\ell}(o_{h-\ell}\given s_{h-\ell})\cdot\PP(s_{h-\ell})\biggr]_{s_{h-\ell+1}}\notag\\
&=\bigl[\PP(s_{h-\ell+1}, o_{h-\ell}\given a_{h-\ell})\bigr]_{s_{h-\ell+1}},
\#
where the second equality follows from the fact that $O_{h-\ell}\indp S_{h-\ell+1}\given s_{h-\ell}$. Thus, by recursive computation similar to \eqref{eq::pf_lem_p1p2_eq1} and \eqref{eq::pf_lem_p1p2_eq2}, we obtain \eqref{eq::pf_lem_p1p2_eq0}, namely,
\#\label{eq::pf_lem_p1p2_eq3}
&\Tran_{h-1}(a_{h-1})\diagO_{h-1}(\cdot)\ldots \Tran_{h-\ell}(a_{h-\ell})\diagO_{h-\ell}(\cdot)\mu_{h-\ell}\notag\\
&\qquad= \PP(s_{h} = \cdot, o_{h-1} = \cdot, \ldots, o_{h-\ell} = \cdot \given a_{h-1}, \ldots, a_{h-\ell})\notag\\
&\qquad= \bigl[\PP(\utraj^{h-1}_{h-\ell}\given s_{h}) \bigr]_{s_{h}, \seq{o}{h-\ell}{h-1}}\in\RR^{S\times O^{\ell}}
\#
Meanwhile, by Lemma \ref{lem::FOO}, we have
\#\label{eq::pf_lem_p1p2_eq4}
\UU_{h} = \bigl[\PP(\traj^{h+k}_h\given s_h )\bigr]_{\traj^{h+k}_h, s_{h}} \in \RR^{(O^{k+1}\cdot A^{k})\times S}.
\#
Thus, it holds for all $h\in[H]$ that
\$
\pone_h(\seq{a}{h-\ell}{h-1}) &= \UU_{h}\Tran_{h-1}(a_{h-1})\diagO_{h-1}(\cdot)\ldots \Tran_{h-\ell}(a_{h-\ell})\diagO_{h-\ell}(\cdot)\mu_{h-\ell}\notag\\
&=\biggl[\sum_{s_{h}\in\cS} \PP( \traj^{h+k}_h \given s_h)\cdot\PP(s_{h}, \seq{o}{h-\ell}{h-1} \given \seq{a}{h-\ell}{h-1})  \biggr]_{(\seq{o}{h}{h+k}, \seq{a}{h}{h+k-1}), \seq{o}{h-\ell}{h-1}}\notag\\
&=\bigl[\PP(\traj^{h+k}_{h-\ell})\bigr]_{\traj^{h+k}_h, \seq{o}{h-\ell}{h-1}} \in \RR^{(O^{k+1}\cdot A^{k})\times O^\ell},
\$
where the second equality follows from the fact that $\seq{o}{h}{h+k} \indp \seq{o}{h-\ell}{h-1} \given s_h$. The computation of $\pone_h(\seq{a}{h-\ell}{h-1})$ is identical to that of $\pone_h(\seq{a}{h-\ell}{h-1})$. In conclusion, we have
\$
\ptwo_h(\seq{a}{h-\ell}{h}, o_h) &= \UU_{h+1}\Tran_h(a_h)\diagO_h(o_h)\Tran_{h-1}(a_{h-1})\diagO_{h-1}(\cdot)\ldots \Tran_{h-\ell}(a_{h-1})\diagO_{h-\ell}(\cdot)\mu_{h-\ell}.\\
&= \bigl[\PP(\traj^{h+k+1}_h)\bigr]_{\traj^{h+k+1}_{h+1}, \seq{o}{h-\ell}{h-1}}\in \RR^{(O^{k+1}\cdot A^{k})\times O^\ell},
\$
which completes the proof of Lemma \ref{lem::P1_P2}.

\end{proof}

\subsection{Proof of Lemma \ref{lem::perform_diff}}
\label{sec::pf_lem_perform_diff}
\begin{proof}
%The proof technique follows that of {\red Lemma XXX by \cite{}}. 
By Lemma \ref{lem::trans_est} and \eqref{eq::lem_policy_traj}, we have
\#\label{eq::pf_lem_perform_diff_1}
V^\pi(\param) &= \sum_{\seq{o}{1}{H-1}\in\cO^H} r(\seq{o}{1}{H-1})\cdot\PP^\pi(\seq{o}{1}{H-1})\notag\\
&= \sum_{\seq{o}{1}{H-1}\in\cO^H} r(\seq{o}{1}{H-1})\cdot \eid{o_{H}}^\top \ObsB^\param_{H-1}(a^\pi_{H-1}, o_{H-1}) \ldots \ObsB^\param_1(a^\pi_1, o_1) \vecb^\param_1.
\#
Here the actions $a^\pi_h$ are taken based on the policy $\pi$ and the past observations and actions taken $(o_1, a^\pi_1, \ldots, a^\pi_{h-1}, o_{h})$. In addition, recall that we define $\vecb^\param_1 = \UU^\param_{1} \mu_1$, where $\mu_1$ is the probability array of initial state distribution. It thus follows from \eqref{eq::pf_lem_perform_diff_1} that
\#\label{eq::pf_lem_perform_diff_2}
V^\pi(\param) - V^\pi(\param') &= \sum_{\seq{o}{1}{H-1}\in\cO^H}\sum_{h=1}^{H-1} r(\seq{o}{1}{H-1})\cdot \eid{o_{H}}^\top \ObsB^\param_{H-1}(a^\pi_{H-1}, o_{H-1}) \cdots\notag\\
&\qquad \qquad \cdots \bigl(\ObsB^\param_{h}(a^\pi_{h}, o_{h}) - \ObsB^{\param'}_{h}(a^\pi_{h}, o_{h})\bigr)\ObsB^{\param'}_{h-1}(a^\pi_{h-1}, o_{h-1}) \cdots  \ObsB^{\param'}_1(a^\pi_1, o_1) \vecb^{\param'}_1\notag\\
&\qquad+ \sum_{\seq{o}{1}{H-1}\in\cO^H} r(\seq{o}{1}{H-1})\cdot \eid{o_{H}}^\top \ObsB^\param_{H-1}(a^\pi_{H-1}, o_{H-1}) \ldots \ObsB^\param_1(a^\pi_1, o_1) (\vecb^\param_1 - \vecb^{\param'}_1).
\#
In the sequel, we upper bound the absolute value of the right-hand side of \eqref{eq::pf_lem_perform_diff_2}. We define the following vectors for all $h\in[H-1]$ for notational simplicity,
\$
v_h &= \bigl(\ObsB^\param_{h}(a^\pi_{h}, o_{h}) - \ObsB^{\param'}_{h}(a^\pi_{h}, o_{h})\bigr)\ObsB^{\param'}_{h-1}(a^\pi_{h-1}, o_{h-1})  \ldots  \ObsB^{\param'}_1(a^\pi_1, o_1)\vecb^{\param'}_1,\notag\\
v_0 &= \vecb^\param_1 - \vecb^{\param'}_1.
\$
Since $0\leq r(\seq{o}{1}{H-1}) \leq H$ for all observation sequences $\seq{o}{1}{H-1}\in\cO^H$, we obtain that
\$
&\biggl|\sum_{\seq{o}{1}{H-1}\in\cO^H}\sum_{h=1}^{H-1} r(\seq{o}{1}{H-1})\cdot \eid{o_{H}}^\top \ObsB^\param_{H-1}(a^\pi_{H-1}, o_{H-1})\ldots \ObsB^\param_{h+1}(a^\pi_{h+1}, o_{h+1}) v_h\biggr|\notag\\
&\qquad\leq H \cdot \biggl|\sum_{\seq{o}{1}{H-1}\in\cO^H}\sum_{h=1}^{H-1}  \eid{o_{H}}^\top \ObsB^\param_{H-1}(a^\pi_{H-1}, o_{H-1})\ldots \ObsB^\param_{h+1}(a^\pi_{h+1}, o_{h+1}) v_h\biggr|.
\$
Moreover, since the vector $\eid{o_{H}}$ takes value in $\{0, 1\}$ for all the indices, we further obtain that
\#\label{eq::pf_lem_perform_diff_3}
&H \cdot \biggl|\sum_{\seq{o}{1}{H-1}\in\cO^H}\sum_{h=1}^{H-1}  \eid{o_{H}}^\top \ObsB^\param_{H-1}(a_{H-1}, o_{H-1})\ldots \ObsB^\param_{h+1}(a^\pi_{h+1}, o_{h+1}) v_h\biggr|\notag\\
&\qquad\leq H\cdot \sum_{\seq{o}{1}{H-1}\in\cO^H}\sum_{h=1}^{H-1} \|\ObsB^\param_{H-1}(a^\pi_{H-1}, o_{H-1})\ldots \ObsB^\param_{h+1}(a^\pi_{h+1}, o_{h+1}) v_h\|_1
\#
It now suffices to upper bound the right-hand side of \eqref{eq::pf_lem_perform_diff_3}. %Following the identical calculation in the proof of Lemma \ref{lem::trans_est} in \S\ref{pf::lem_trans_est}, we obtain for all $h\in[H]$ that
By the definition of Bellman operators in Definition \ref{def::obs_op}, we obtain for all $h\in\{0, \ldots, H-1\}$ that
\#\label{eq::pf_lem_perform_diff_4}
&\|\ObsB^\param_{H-1}(a^\pi_{H-1}, o_{H-1})\ldots \ObsB^\param_{h+1}(a^\pi_{h+1}, o_{h+1}) v_h\|_1\notag\\
&\qquad = \| \UU_{h} \Tran_{H-1}(a^\pi_{H-1})\diagO_{H-1}(o_{H-1})\ldots \Tran_{h}(a^\pi_{h})\diagO_{h}(o_{h})\UU_{h}^\dagger v_h  \|_1
\#
The following lemma upper bound the right-hand side of \eqref{eq::pf_lem_perform_diff_4}.
\begin{lemma}
\label{lem::norm_bound}
It holds for all $\seq{a}{h}{H-1}\in\cA^{H-h}$, $h\in[H]$, and $u \in\RR^S$ that
\$%#\label{eq::pf_lem_perform_diff_5.5}
\sum_{\seq{o}{h}{H-1}\in\cO^{H-h}}\|\UU_{h} \Tran_{H-1}(a_{H-1})\diagO_{H-1}(o_{H-1})\ldots \Tran_{h}(a_{h})\diagO_{h}(o_{h}) u\|_1 \leq \|u\|_1.
\$%#
\end{lemma}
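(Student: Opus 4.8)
The plan is to establish the bound by an outside-in peeling argument, exploiting that each operator in the product is $L^1$-contracting once the relevant observation is summed out. The three ingredients I would assemble are: (i) a single-step contraction $\sum_{o_j\in\cO}\|\Tran_j(a_j)\diagO_j(o_j) f\|_1 \le \|f\|_1$ for any $f\in\RR^S$ and fixed $a_j$; (ii) its iteration across the steps $h,\ldots,H-1$; and (iii) the contraction $\|\UU_{h} w\|_1 \le \|w\|_1$ of the forward emission operator. This is the tabular analogue of Lemma \ref{lem::bound_RHS_linear}, but without the pseudo-inverse factor $\UU^{\param,\dagger}$ at the right end, which is why the constant is $1$ rather than $\nu$.

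First I would prove the single-step bound. For fixed $a_j$, the $s_{j+1}$-th entry of $\Tran_j(a_j)\diagO_j(o_j) f$ is $\sum_{s_j}\PP_j(s_{j+1}\given s_j,a_j)\OO_j(o_j\given s_j) f(s_j)$. Applying the triangle inequality and then summing over $s_{j+1}$ and $o_j$, the normalizations $\sum_{s_{j+1}}\PP_j(s_{j+1}\given s_j,a_j)=1$ and $\sum_{o_j}\OO_j(o_j\given s_j)=1$ collapse both sums to $\sum_{s_j}|f(s_j)|=\|f\|_1$. Writing $T_j := \Tran_j(a_j)\diagO_j(o_j)$, I then iterate inward: applying this bound to the outermost factor $T_{H-1}$ acting on $v := T_{H-2}\cdots T_h u$ gives $\sum_{o_{H-1}}\|T_{H-1}v\|_1 \le \|v\|_1$, and induction on the number of factors yields $\sum_{\seq{o}{h}{H-1}}\|T_{H-1}\cdots T_h u\|_1 \le \|u\|_1$.

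It remains to absorb the leading $\UU_{h}$. By Lemma \ref{lem::FOO}, for any $w\in\RR^S$ the entries of $\UU_{h} w$ are $\sum_{s_h}\PP(\traj^{h+k}_h\given s_h) w(s_h)$; summing the absolute values over the future-window observations while holding the accompanying action sequence fixed, and using $\sum_{\seq{o}{h}{h+k}}\PP(\traj^{h+k}_h\given s_h)=1$, gives $\|\UU_{h} w\|_1 \le \|w\|_1$ for each fixed value of $\seq{o}{h}{H-1}$. Combining this pointwise contraction with the iterated bound from the previous step,
\[
\sum_{\seq{o}{h}{H-1}}\|\UU_{h} T_{H-1}\cdots T_h u\|_1 \le \sum_{\seq{o}{h}{H-1}}\|T_{H-1}\cdots T_h u\|_1 \le \|u\|_1,
\]
which is the claim.

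The argument has no deep obstacle — it is a telescoping contraction estimate built entirely from nonnegativity and normalization of the transition, emission, and forward-emission kernels. The one point requiring care, mirroring \eqref{eq::pf_lem_RHS_linear_eq2} in the low-rank proof, is the precise meaning of the output norm $\|\UU_{h}(\cdot)\|_1$: it sums over the future-window observations with the accompanying action sequence held fixed, so that the emission normalization yields $1$ rather than a spurious factor of $A^k$. Keeping all action sequences fixed throughout, consistent with the statement's fixed $\seq{a}{h}{H-1}$, is exactly what makes every observation sum collapse to one.
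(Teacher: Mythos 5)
Your proof matches the paper's argument essentially step for step: the same single-step contraction $\sum_{o_j\in\cO}\|\Tran_j(a_j)\diagO_j(o_j)f\|_1\leq\|f\|_1$ obtained from the stochasticity of $\Tran_j(a_j)$ and the normalization $\sum_{o_j}\OO_j(o_j\given s_j)=1$, the same induction over the factors from $j=h$ to $j=H-1$, and the same final absorption of the leading forward emission operator via its $L^1$-contraction. Your explicit remark that $\|\UU_{h}(\cdot)\|_1$ must be read with the future action sequence held fixed (so that the emission normalization yields $1$ rather than a factor $A^k$) is, if anything, slightly more careful than the paper's terse justification that $\UU_{h}$ is a transition matrix.
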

\begin{proof}
See \S\ref{sec::pf_lem_norm_bound} for a detailed proof.
\end{proof}
Meanwhile, by Assumption \ref{asu::inv_U} and the fact that $\|A\|_{1 \mapsto 1} \leq \sqrt{S} \|A\|_2$ for any matrix $A\in\RR^{S\times N}$, we obtain that
\#\label{eq::pf_lem_perform_diff_6}
\|\UU_{h}^\dagger v_h \|_1 \leq \|\UU_{h}^\dagger\|_{1 \mapsto 1} \cdot \| v_h \|_1 \leq \sqrt{S} \cdot \|\UU_{h}^\dagger\|_2\cdot \|v_h\|_1 \leq \nu\cdot\sqrt{S}\cdot \|v_h\|_1
\#
for all $h\in[H]$. Combining Lemma \ref{lem::norm_bound}, \eqref{eq::pf_lem_perform_diff_3}, \eqref{eq::pf_lem_perform_diff_4}, and \eqref{eq::pf_lem_perform_diff_6}, we obtain that
\$
|V^\pi(\param) - V^\pi(\param')| &\leq \nu\cdot\sqrt{S}\cdot H\cdot \sum_{h=0}^{H-1}\sum_{\seq{o}{1}{h}\in\cO^h} \|v_h\|_1\notag\\
&=\nu\cdot\sqrt{S}\cdot H\cdot \sum_{h=1}^{H-1}\sum_{\seq{o}{1}{h}\in\cO^h} \bigl\|\bigl(\ObsB^\param_{h}(a_{h}, o_{h}) - \ObsB^{\param'}_{h}(a_{h}, o_{h})\bigr)\ObsB^{\param'}_{h-1}(a_{h-1}, o_{h-1})  \ldots  \ObsB^{\param'}_1(a_1, o_1)\vecb^{\param'}_1\bigr\|_1\notag\\
&\qquad +\nu\cdot\sqrt{S}\cdot H\cdot\bigl\|\bigl(\ObsB^\param_{1}(a_{1}, o_{1}) - \ObsB^{\param'}_{1}(a_{1}, o_{1})\bigr)\vecb^{\param'}_1\bigr\|_1 +\nu\cdot\sqrt{S}\cdot H\cdot\|\vecb^\param_1 - \vecb^{\param'}_1\|_1,
\$
which completes the proof of Lemma \ref{lem::perform_diff}.
\end{proof}

\subsection{Proof of Lemma \ref{lem::upper_bound_rhs}}
\label{sec::pf_lem_upper}
\begin{proof}
It holds for all policy $\pi$ and the corresponding action sequence $a^\pi_{1:h}$ generated by $\pi$ that
\#\label{eq::pf_lem_upper_eq1}
&\sum_{\seq{o}{1}{h}\in\cO^{h}} \bigl\|\bigl(\ObsB^\param_{h}(a^\pi_{h}, o_{h}) - \ObsB^{\param'}_{h}(a^\pi_{h}, o_{h})\bigr)\ObsB^{\param'}_{h-1}(a^\pi_{h-1}, o_{h-1})  \ldots  \ObsB^{\param'}_1(a^\pi_1, o_1)\vecb^{\param'}_1\bigr\|_1 \\
&\quad \leq \sum_{\seq{a}{h-\ell}{h}\in\cA^{\ell+1}}\sum_{\seq{o}{1}{h}\in\cO^{h}}\bigl\|\bigl(\ObsB^\param_{h}(a_{h}, o_{h}) - \ObsB^{\param'}_{h}(a_{h}, o_{h})\bigr)\ObsB^{\param'}_{h-1}(a_{h-1}, o_{h-1})  \ldots  \ObsB^{\param'}_1(a_1, o_1)\vecb^{\param'}_1\bigr\|_1\notag\\
&\quad = \sum_{\seq{a}{h-\ell}{h}\in\cA^{\ell+1}}\sum_{\seq{o}{1}{h}\in\cO^{h}} \bigl\|\bigl(\ObsB^\param_{h}(a_{h}, o_{h}) - \ObsB^{\param'}_{h}(a_{h}, o_{h})\bigr)\UU^{\param'}_{h,k} \Tran^{\param'}_{h-1}(a_{h-1})\mu^{\param'}_{h-1}(\pi, \seq{a}{h-\ell}{h-1}, \seq{o}{1}{h-1}) \bigr\|_1,\notag
\#
where $\mu^{\param'}_{h-1}(\pi, \seq{a}{h-\ell}{h-1}, \seq{o}{1}{h-1})$ is the state distribution array defined in \eqref{eq::def_state_arr_pf_mu}. Note that on the right-hand side of \eqref{eq::pf_lem_upper_eq1}, the state distribution array $\mu^{\param'}_{h-1}(\pi, \seq{a}{h-\ell}{h-1}, \seq{o}{1}{h-1})$ is the only term that is related to policy $\pi$. In what follows, we define
\#\label{eq::pf_lem_upper_def_M}
M_h(\param, \param', a_h, a_{h-1}, o_h) = \bigl(\ObsB^\param_{h}(a_{h}, o_{h}) - \ObsB^{\param'}_{h}(a_{h}, o_{h})\bigr)\UU^{\param'}_{h,k} \Tran^{\param'}_{h-1}(a_{h-1}) \in \RR^{(A^{k+1}\cdot O^k)\times S}
\#
for notational simplicity. It holds that 
\#\label{eq::pf_lem_upper_eq2}
&\sum_{\seq{o}{1}{h}\in\cO^{h}} \|M_h(\param, \param', a_h, a_{h-1}, o_h)\mu^{\param'}_{h-1}(\pi, \seq{a}{h-\ell}{h-1}, \seq{o}{1}{h-1})\|_1\notag\\
&\qquad \leq  \sum_{\seq{o}{1}{h}\in\cO^{h}} \sum_{s_{h-1} \in \cS} \bigl\| \bigl[M_h(\param, \param', a_h, a_{h-1}, o_h)\bigr]_{s_{h-1}}\|_1 \cdot \bigl[\mu^{\param'}_{h-1}(\pi, \seq{a}{h-\ell}{h-1}, \seq{o}{1}{h-1})\bigr]_{s_{h-1}},
\#
where we denote by $[M_h(\param, \param', a_h, a_{h-1}, o_h)]_{s_{h-1}}$ and $[\mu^{\param'}_{h-1}(\pi, \seq{a}{h-\ell}{h-1}, \seq{o}{1}{h-1})]_{s_{h-1}}$ the $s_{h-1}$-th column of $M_h(\param, \param', a_h, a_{h-1}, o_h)$ and the $s_{h-1}$-th entry of $\mu^{\param'}_{h-1}(\pi, \seq{a}{h-\ell}{h-1}, \seq{o}{1}{h-1})$, respectively. Recall that we have
\$
\bigl[\mu^{\param'}_{h-1}(\pi, \seq{a}{h-\ell}{h-1}, \seq{o}{1}{h-1})\bigr]_{s_{h-1}} = \PP^{\param}(s_h, \seq{o}{1}{h-1} \given \seq{a}{h-\ell}{h-1}, \pi).
\$
Thus, by marginalizing over the observation sequence $\seq{o}{1}{h-1}$, it holds that
\#\label{eq::pf_lem_upper_eq3}
\sum_{\seq{o}{1}{h-1}\in\cO^{h-1}}\bigl[\mu^{\param'}_{h-1}(\pi, \seq{a}{h-\ell}{h-1}, \seq{o}{1}{h-1})\bigr]_{s_{h-1}} = \PP^{\param'}(s_h \given \seq{a}{h-\ell}{h-1}, \pi).
\#
Plugging \eqref{eq::pf_lem_upper_eq3} into \eqref{eq::pf_lem_upper_eq2}, we obtain that
\$
&\sum_{\seq{o}{1}{h}\in\cO^{h}}\|M_h(\param, \param', a_h, a_{h-1}, o_h)\mu^{\param'}_{h-1}(\pi, \seq{a}{h-\ell}{h-1}, \seq{o}{1}{h-1})\|_1\notag\\
&\qquad \leq  \sum_{o_h\in\cO} \sum_{s_{h-1} \in \cS} \bigl\| \bigl[M_h(\param, \param', a_h, a_{h-1}, o_h)\bigr]_{s_{h-1}}\|_1 \cdot \PP^{\param'}(s_h \given \seq{a}{h-\ell}{h-1}, \pi)\notag\\
&\qquad = \sum_{\seq{a}{h-\ell}{h}\in\cA^{\ell+1}}\sum_{o_h\in\cO} \bigl\|\bigl(\ObsB^\param_{h}(a_{h}, o_{h}) - \ObsB^{\param'}_{h}(a_{h}, o_{h})\bigr)\UU^{\param'}_{h,k} \Tran^{\param'}_{h-1}(a_{h-1})\diag\bigl(\tilde\mu^{\param'}_{h-1}(\pi, \seq{a}{h-\ell}{h-1})\bigr) \bigr\|_1.
\$
Thus, we complete the proof of Lemma \ref{lem::upper_bound_rhs}.
\end{proof}

\subsection{Proof of Lemma \ref{lem::concen}}
\label{sec::pf_lem_concen}
\begin{proof}
Recall that we aim to recover the following density matrices in the estimation of Bellman operators in \S\ref{sec::bellman_tabular},
\#
&\vecb_1 = \bigl[\PP(\traj^{k}_{1})\bigr]_{\traj^{k}_{1}}\in\RR^{\cA^k\cdot\cO^{k+1}},\notag\\
&\pone^t_h(\seq{a}{h-\ell}{h-1}) = \bigl[\PP^t(\traj^{h+k}_{h-\ell}) \bigr]_{\traj^{h+k}_{h}, \seq{o}{h-\ell}{h-1}}  \in\RR^{(\cA^{k}\cdot \cO^{k+1})\times\cO^\ell},\label{eq::pf_p1t}\\
&\ptwo^t_h(\seq{a}{h-\ell}{h}, o_h)= \bigl[\PP^t(\traj^{h+k+1}_{h-\ell})\bigr]_{\traj^{h+k+1}_{h+1}, \seq{o}{h-\ell}{h-1}}\in\RR^{(\cA^{k}\cdot \cO^{k+1})\times\cO^\ell}.\label{eq::pf_p2t}
\#
Here we denote by $\PP^t$ the visitation measure of the mixed policy $\{\pi^\omega\}_{\omega\in[t]}$ generated by Algorithm \ref{alg::POMDP}. Alternatively, we can write the densities in \eqref{eq::pf_p1t} in the following vector product form,
\$
\pone^t_h(\seq{a}{h-\ell}{h-1}) = \sum_{\traj^{h+k}_{h}\in\cA^k\times\cO^{k+1}} \biggl(\sum_{\seq{o}{h-\ell}{h-1}\in\cO^\ell} \eid{\traj^{h+k}_{h}}\eid{\seq{o}{h-\ell}{h-1}}^\top\cdot \PP^t(\traj^{h+k}_{h-\ell})\biggr).
\$
Recall that we adopt the following estimator of the probability density defined in \eqref{eq::pf_p1t},
\#\label{eq::pf_lem_concen_eq1}
%\hat \pone^t_h(\seq{a}{h-\ell}{h-1}) =  \sum_{\seq{a}{h}{h+k-1}\in\cA^k}\frac{1}{t-1}\cdot\sum_{\omega = 1}^{t-1} \ps{\omega}{}{e}_{\traj^{h+k}_h}\ps{\omega}{}{e}_{\seq{o}{h-\ell}{h-1}}^\top.
\hat \pone^t_h(\seq{a}{h-\ell}{h-1}) &=  \frac{1}{t}\cdot\sum_{\seq{a}{h}{h+k-1}\in\cA^k}\biggl(\sum_{\traj^{h+k}_{h-\ell}\in\cD^t(\seq{a}{h-\ell}{h+k-1})} \eid{\traj^{h+k}_{h-\ell}}\eid{\seq{o}{h-\ell}{h-1}}^\top\biggr).
\#
%Here with a slight abuse of notation, we denote by $\seq{o}{h}{h+k}^\omega$ and $\seq{o}{h-\ell}{h-1}^\omega$ on the right-hand side of \eqref{eq::pf_lem_concen_eq1} the corresponding observation sequences when taking the action sequences $\seq{a}{h-\ell}{h-1}$ and $\seq{a}{h}{h+k-1}$ in the $h$-th step and $\omega$-th iterate of Algorithm \ref{alg::POMDP}. 
By the martingale concentration inequality (see e.g., \cite{jin2019short}) and the fact that
\$
\|\eid{\traj^{h+k}_{h-\ell}}\eid{\seq{o}{h-\ell}{h-1}}^\top\|_F \leq 1
\$
for all trajectory $\traj^{h+k}_{h-\ell}\in\cA^{k+\ell}\times\cO^{k+\ell+1}$ and observation sequence $\seq{o}{h-\ell}{h-1}\in\cO^{\ell}$, we obtain for all $h\in[H]$ and $\seq{a}{h-\ell}{h+k-1}\in\cA^{k+\ell}$ that
%and action sequences $\seq{a}{h-\ell}{h-1} \in\cA^\ell$ and $\seq{a}{h}{h+k-1}\in\cA^k$ that
\$
&\biggl\|\sum_{\seq{o}{h-\ell}{h+k}\in\cO^{k+\ell+1}}\eid{\traj^{h+k}_h}\eid{\seq{o}{h-\ell}{h-1}}^\top\cdot \PP^t(\traj^{h+k}_{h-\ell})\notag\\
&\qquad -  \frac{1}{t-1}\sum_{\traj^{h+k}_{h-\ell}\in\cD^t(\seq{a}{h-\ell}{h+k-1})} \eid{\traj^{h+k}_{h-\ell}}\eid{\seq{o}{h-\ell}{h-1}}^\top \biggr\|_F \leq C\cdot(k+\ell)\cdot\sqrt{\log(O\cdot A\cdot T/\delta)/t}
\$
with probability at least $1 - \delta/A^{k}$, where $C$ is a positive absolute constant. It thus holds for all $h\in[H]$, $t\in[T]$, and $\seq{a}{h-\ell}{h-1}\in\cA^{\ell}$ that
\$
\|\pone^t_h(\seq{a}{h-\ell}{h-1}) - \hat \pone^t_h(\seq{a}{h-\ell}{h-1})\|_F &\leq  \sum_{\seq{a}{h}{h+k-1}\in\cA^k}\biggl\|\sum_{\seq{o}{h-\ell}{h+k}\in\cO^{k+\ell+1}}\eid{\traj^{h+k}_h}\eid{\seq{o}{h-\ell}{h-1}}^\top\cdot \PP^t(\traj^{h+k}_{h-\ell})\notag\\
&\qquad\qquad\qquad\quad- \frac{1}{t-1}\sum_{\traj^{h+k}_{h-\ell}\in\cD^t(\seq{a}{h-\ell}{h+k-1})} \eid{\traj^{h+k}_{h-\ell}}\eid{\seq{o}{h-\ell}{h-1}}^\top\biggr\|_F\notag\\\
&= \cO\bigl(A^k\cdot(k+\ell)\cdot \sqrt{\log(O\cdot A\cdot T\cdot H/\delta)/t}\bigr)
\$
with probability at least $1 - \delta/2$. Meanwhile, recall that we estimate $\ptwo^t_h$ in \eqref{eq::pf_p2t} by the following estimator,
\#\label{eq::pf_lem_concen_eq2}
\hat\ptwo^t_h(\seq{a}{h-\ell}{h}, o_h) &= \frac{1}{t-1}\cdot\sum_{\seq{a}{h+1}{h+k}\in\cA^k} \sum_{\traj^{h+k+1}_{h-\ell}\in\cD^t(\seq{a}{h-\ell}{h+k})} \eid{\traj^{h+k+1}_{h+1}}\eid{\seq{o}{h-\ell}{h-1}}^\top,
\#
%where with a slight abuse of notation, we denote by $\seq{o}{h-\ell}{h-1}$ and $\veco^\omega_{h+1: h+k+1}$ on the right-hand side of \eqref{eq::pf_lem_concen_eq2} the corresponding observation sequences when taking the actions $\seq{a}{h-\ell}{h-1}$, $a_h$, and $\seq{a}{h}{h+k-1}$, and observing $o_h$ in the $h$-th step and $\omega$-th iterate of Algorithm \ref{alg::POMDP}.
Following a similar computation, it holds for all $h\in[H]$, $t\in[T]$, $\seq{a}{h-\ell}{h-1}\in\cA^{\ell}$, $a_h\in\cA$, and $o_h\in\cO$ that
\$
\|\ptwo^t_h(\seq{a}{h-\ell}{h}, o_h) -  \hat\ptwo^t_h(\seq{a}{h-\ell}{h}, o_h)\|_F = \cO\bigl(A^k\cdot(k+\ell)\cdot \sqrt{\log(O\cdot A\cdot T/\delta)/t}\bigr)
\$
with probability at least $1 - \delta/2$. Similarly, it also holds for all $h\in[H]$ and $t\in[T]$ that
\$
\|\vecb_1 - \hat \vecb^t_1\|_2 = \cO(A^k\cdot k\cdot\sqrt{\log(O\cdot A\cdot T/\delta)/t})
\$
with probability at least $1 - \delta/2$. Thus, we complete the proof of Lemma \ref{lem::concen}.
\end{proof}

\subsection{Proof of Lemma \ref{lem::good_event}}
\label{sec::pf_lem_good_event}
\begin{proof}
In what follows, we prove \eqref{eq::lem_ge_eq1}--\eqref{eq::lem_ge_eq3} separately.
\smallsec{Part I: Proof of Upper Bound in \eqref{eq::lem_ge_eq1}.} By Lemma \ref{lem::concen}, it holds with probability at least $1 - \delta$ that
\#\label{eq::pf_lem_ge_eq4}
\|\vecb_1 - \hat \vecb^t_1\|_1 \leq \sqrt{A^k\cdot O^{k+1}}\cdot \|\vecb_1 - \hat \vecb^t_1\|_2 = \cO\Bigl((k+\ell)\cdot\sqrt{A^{3k}\cdot O^{k+1}\cdot\log(O\cdot A\cdot T\cdot H/\delta)/t}\Bigr).
\#
Meanwhile, by the update of $\vecb^{\param^t}_1$ in Algorithm \ref{alg::POMDP}, it holds with probability at least $1 - \delta$ that $\vecb^{\param^t}_1 \in {\cC^t}$, namely, 
\#\label{eq::pf_lem_ge_eq5}
\|\vecb^{\param^t}_1 - \hat\vecb^t_1\|_2 \leq C\cdot\nu\cdot (k+\ell)\cdot \sqrt{A^{5k+1}\cdot O^{k+\ell}\cdot\log(O\cdot A\cdot T\cdot H/\delta)/t}.
\#
Combining \eqref{eq::pf_lem_ge_eq4} and \eqref{eq::pf_lem_ge_eq5}, it holds with probability at least $1 - \delta$ that
\$
\|\vecb_1 - \vecb^{\param^t}_1\|_1 &\leq \|\vecb_1 - \hat \vecb^t_1\|_1 + \|\vecb^{\param^t}_1 - \hat\vecb^t_1\|_1 \leq \sqrt{A^k\cdot O^{k+1}}\cdot\bigl(\|\vecb_1 - \hat \vecb^t_1\|_2 + \|\vecb^{\param^t}_1 - \hat\vecb^t_1\|_2\bigr)\notag\\
& = \cO\Bigl(\nu\cdot (k+\ell)\cdot \sqrt{A^{5k+1}\cdot O^{k+\ell}\cdot\log(O\cdot A\cdot T\cdot H/\delta)/t}\Bigr),
\$
which completes the proof of \eqref{eq::lem_ge_eq1}.
\smallsec{Part II: Proof of Upper Bound in \eqref{eq::lem_ge_eq2}.}
Recall that we define
\$
%&\vecb_1 = \UU_{1} \mu_1 = \bigl[\PP(\seq{o}{1}{k} \given \seq{a}{1}{k-1})\bigr]_{(\seq{o}{1}{k},\seq{a}{1}{k-1})} \in \RR^{O^{k+1}\cdot A^k}, \notag\\
&\vecb_1 = \UU_{1} \mu_1 = \bigl[\PP(\traj^{k+1}_1)\bigr]_{\traj^{k+1}_1} \in \RR^{O^{k+1}\cdot A^k}, \notag\\
%&\pone^t_1(\seq{a}{1-\ell}{0}) = \bigl[\PP^t(\seq{o}{1-\ell}{k+1} \given \seq{a}{1-\ell}{k}) \bigr]_{(\seq{o}{1}{k+1}, \seq{a}{1}{k}), \seq{o}{1-\ell}{0}} \in\RR^{(O^{k+1}\cdot A^k)\times O^\ell}.
&\pone^t_1(\seq{a}{1-\ell}{0}) = \bigl[\PP^t(\traj^{k+1}_{1-\ell}) \bigr]_{\traj^{k+1}_1, \seq{o}{1-\ell}{0}} \in\RR^{(O^{k+1}\cdot A^k)\times O^\ell}.
%&\distP^{(2)}_1(\seq{a}{1-\ell}{0}, a_1, o_1) \notag\\
%&\qquad= \bigl[\PP(\seq{o}{1-\ell}{0}, o_1,\seq{o}{2}{k+2}  \given \seq{a}{1-\ell}{0}, a_1, \veca_{2:1+k})\bigr]_{(\veco_{2: k+2}, \veca_{2: 1+k}), \seq{o}{1-\ell}{0}}.
\$
Thus, it holds for all action array $\seq{a}{1-\ell}{0}$ and $t\in[T]$ that
\#\label{eq::pf_lem_ge_II_eq0.5}
[\vecb_1]_{\traj^{k+1}_1} &= \PP(\traj^{k+1}_1) = \sum_{\seq{o}{1-\ell}{0} \in\cO^\ell} \PP^t(\utraj^0_{1-\ell})\cdot \PP^t(\traj^{k+1}_{1-\ell}).
%\notag\\
%&=\sum_{\seq{o}{1-\ell}{0} \in\cO^\ell} \PP^t(\seq{o}{1-\ell}{0}\given \seq{a}{1-\ell}{0})\cdot \bigl[\pone^t_1(\seq{a}{1-\ell}{0})\bigr]_{\omega^{k+1}_{1},\seq{o}{1-\ell}{0}}.
\#
It thus holds for all $\seq{a}{1-\ell}{0}\in\cA^{\ell}$ that
\#\label{eq::pf_lem_ge_II_eq1}
&\bigl\|\bigl(\ObsB^{\param^t}_{1}(a_{1}, o_{1}) - \ObsB^{\param^*}_{1}(a_{1}, o_{1})\bigr) \vecb_1\|_1  \notag\\
&\qquad= \biggl\|\bigl(\ObsB^{\param^t}_{1}(a_{1}, o_{1}) - \ObsB^{\param^*}_{1}(a_{1}, o_{1})\bigr)\sum_{\seq{o}{1-\ell}{0} \in\cO^\ell} \PP^t(\utraj^0_{1-\ell})\cdot \bigl[\pone^t_1(\seq{a}{1-\ell}{0})\bigr]_{\seq{o}{1-\ell}{0}}  \biggr\|_1\notag\\
&\qquad\leq\biggl\|\sum_{\seq{o}{1-\ell}{0} \in\cO^\ell} \bigl(\ObsB^{\param^t}_{1}(a_{1}, o_{1}) - \ObsB^{\param^*}_{1}(a_{1}, o_{1})\bigr) \bigl[\pone^t_1(\seq{a}{1-\ell}{0})\bigr]_{\seq{o}{1-\ell}{0}}  \biggr\|_1\notag\\
&\qquad =\bigl\|\bigl(\ObsB^{\param^t}_{1}(a_{1}, o_{1}) - \ObsB^{\param^*}_{1}(a_{1}, o_{1})\bigr)\pone^t_1(\seq{a}{1-\ell}{0})  \bigr\|_1,
\#
Here with a slight abuse of notation, we denote by $\bigl[\pone^t_1(\seq{a}{1-\ell}{0})\bigr]_{\seq{o}{1-\ell}{0}}$ the $\seq{o}{1-\ell}{0}$-th column of the matrix $\pone^t_1(\seq{a}{1-\ell}{0})\in\RR^{(O^{k+1}\cdot A^k)\times O^\ell}$. Meanwhile, the inequality follows from the fact that $0 \leq \PP^t(\utraj^{0}_{1-\ell}) \leq 1$ for all $t\in[T]$ and $\utraj^{0}_{1-\ell}\in\cA^\ell\times\cO^\ell$. It remains to establish high confidence bound for the right-hand side of \eqref{eq::pf_lem_ge_II_eq1}. To this end, we first obtain by triangle inequality that
\#\label{eq::pf_lem_ge_II_eq2}
&\bigl\|\bigl(\ObsB^{\param^t}_{1}(a_{1}, o_{1}) - \ObsB^{\param^*}_{1}(a_{1}, o_{1})\bigr)\pone^t_1(\seq{a}{1-\ell}{0})  \bigr\|_1 \leq  {\rm (i)} + {\rm (ii)} + {\rm (iii)} + {\rm (iv)},\notag\\
\#
where we define
\$
{\rm (i)} &= \bigl\| \ObsB^{\param^t}_{1}(a_{1}, o_{1})\bigl(\hat\pone^t_1(\seq{a}{1-\ell}{0}) - \pone^t_1(\seq{a}{1-\ell}{0})\bigr) \bigr\|_1,\notag\\
{\rm (ii)}&= \|\ObsB^{\param^t}_{1}(a_{1}, o_{1})\hat\pone^t_1(\seq{a}{1-\ell}{0}) - \hat\ptwo^t_1(\seq{a}{1-\ell}{1}, o_1)\|_1,\notag\\
{\rm (iii)}&=\|\hat\ptwo^t_1(\seq{a}{1-\ell}{1}, o_1) - \ptwo^t_1(\seq{a}{1-\ell}{1}, o_1)\|_1,\notag\\
{\rm (iv)}&=\|\ObsB^{\param^*}_{1}(a_{1}, o_{1})\pone^t_1(\seq{a}{1-\ell}{0}) - \ptwo^t_1(\seq{a}{1-\ell}{1}, o_1)\|_1.
%\$
%&\qquad\leq \underbrace{\bigl\| \ObsB^{\param^t}_{1}(a_{1}, o_{1})\bigl(\hat\distP^{(1), %%t}_1(\seq{a}{1-\ell}{0}) - \pone^t_1(\seq{a}{1-\ell}{0})\bigr) \bigr\|_F}_{\rm (i)}\notag\\
%&\qquad\qquad+ \underbrace{\|\ObsB^{\param^t}_{1}(a_{1}, o_{1})\hat\pone^t_1(\seq{a}{1-\ell}{0}) - \hat\ptwo^t_h(\seq{a}{h-\ell}{h}, o_h)\|_F}_{\rm (ii)} \notag\\
%&\qquad\qquad+ \underbrace{\|\hat\ptwo^t_h(\seq{a}{h-\ell}{h}, o_h) - \ptwo^t_h(\seq{a}{h-\ell}{h}, o_h)\|_F}_{\rm (iii)}\notag\\
%&\qquad\qquad + \underbrace{\|\ObsB^{\param^*}_{1}(a_{1}, o_{1})\pone^t_1(\seq{a}{1-\ell}{0}) - \ptwo^t_h(\seq{a}{h-\ell}{h}, o_h)\|_F}_{\rm (iv)}.
\$
In what follows, we upper bound terms (i)--(iv) on the right-hand side of \eqref{eq::pf_lem_ge_II_eq2}. By the definition of parameter space and the concentration inequality in Lemma \ref{lem::concen}, we obtain that
\#\label{eq::pf_lem_ge_II_i}
{\rm (i)} &= \bigl\| \ObsB^{\param^t}_{1}(a_{1}, o_{1})\bigl(\hat\pone^t_1(\seq{a}{1-\ell}{0}) - \pone^t_1(\seq{a}{1-\ell}{0})\bigr) \bigr\|_1 \notag\\
&\leq \|\ObsB^{\param^t}_{1}(a_{1}, o_{1})\|_{1\mapsto1}\cdot \|\hat\pone^t_1(\seq{a}{1-\ell}{0}) - \pone^t_1(\seq{a}{1-\ell}{0})\|_1\notag\\
& = \cO\bigl(\nu\cdot A^{2k}\cdot\sqrt{A^{k+1}\cdot O^{k+\ell}}\cdot (k+\ell)\cdot \sqrt{\log(O\cdot A\cdot T\cdot H/\delta)/T}\bigr),
\#
where the third equality follows from the fact that $\|\ObsB^{\param^t}_{1}(a_{1}, o_{1})\|_{1\mapsto1} \leq \nu\cdot A^k$ and the fact that $\|x\|_1 \leq \sqrt{mn}\cdot \|x\|_2$ for $x\in\RR^{m\times n}$. Meanwhile, by the definition of confidence set ${\cC^t}$ in \eqref{eq::def_CI} and the fact that $\param^t \in{\cC^t}$, we obtain that
\#\label{eq::pf_lem_ge_II_ii}
{\rm (ii)} &= \|\ObsB^{\param^t}_{1}(a_{1}, o_{1})\hat\pone^t_1(\seq{a}{1-\ell}{0}) - \hat\ptwo^t_h(\seq{a}{h-\ell}{h}, o_h)\|_1\notag\\
&=\cO\bigl(\nu\cdot A^{2k}\cdot\sqrt{A^{k+1}\cdot O^{k+\ell}}\cdot (k+\ell)\cdot \sqrt{\log(O\cdot A\cdot T\cdot H/\delta)/T}\bigr).
\#
By the concentration inequality in Lemma \ref{lem::concen}, we further obtain that
\#\label{eq::pf_lem_ge_II_iii}
{\rm (iii)} &= \|\hat\ptwo^t_h(\seq{a}{h-\ell}{h}, o_h) - \ptwo^t_h(\seq{a}{h-\ell}{h}, o_h)\|_1 \notag\\
&= \cO\bigl(\sqrt{A^{k+1}\cdot O^{k+\ell}}\cdot A^k\cdot (k+\ell)\cdot \sqrt{\log(O\cdot A\cdot T\cdot H/\delta)/T}\bigr).
\#
Finally, by the identity of Bellman operators in \eqref{eq::id_OO}, we have
\#\label{eq::pf_lem_ge_II_iv}
{\rm (iv)}=\|\ObsB^{\param^*}_{1}(a_{1}, o_{1})\pone^t_1(\seq{a}{1-\ell}{0}) - \ptwo^t_h(\seq{a}{h-\ell}{h}, o_h)\|_F  = 0.
\#
Plugging \eqref{eq::pf_lem_ge_II_i}, \eqref{eq::pf_lem_ge_II_ii}, \eqref{eq::pf_lem_ge_II_iii}, and \eqref{eq::pf_lem_ge_II_iv} into \eqref{eq::pf_lem_ge_II_eq2}, it holds for all $a_1 \in\cA$, $o_1 \in\cO$, and $\seq{a}{1-\ell}{0}\in\cA^\ell$ with probability at least $1 - \delta$ that
\$
&\bigl\|\bigl(\ObsB^{\param^t}_{1}(a_{1}, o_{1}) - \ObsB^{\param^*}_{1}(a_{1}, o_{1})\bigr)\pone^t_1(\seq{a}{1-\ell}{0})  \bigr\|_1  \notag\\
&\qquad= \cO\bigl(\nu\cdot (k+\ell)\cdot \sqrt{A^{5k+1}\cdot O^{k+\ell}\cdot\log(O\cdot A\cdot T\cdot H/\delta)/t}\bigr),
\$
which completes the proof of \eqref{eq::lem_ge_eq2}.

\smallsec{Part III: Proof of Upper Bound in \eqref{eq::lem_ge_eq3}.} Under Assumption \ref{asu::inv_rev_op}, it holds for all $1<h\leq H$ that
\#\label{eq::pf_lem_ge_III_eq1}
&\sum_{s_{h-1}\in\cS}\bigl\|\bigl(\ObsB^{\param^t}_{h}(a_{h}, o_{h}) - \ObsB^{\param^*}_{h}(a_{h}, o_{h})\bigr)\UU^{\param^*}_{h,k} \Tran^{\param^*}_{h-1}(s_{h-1}, a_{h-1})\bigr\|_1\cdot\PP^t(s_{h-1} \given \seq{a}{h-\ell}{h-1})\notag\\
&\qquad = \bigl\|\bigl(\ObsB^\param_{h}(a_{h}, o_{h}) - \ObsB^{\param'}_{h}(a_{h}, o_{h})\bigr)\UU^{\param'}_{h,k} \Tran^{\param'}_{h-1}(a_{h-1})\diag\bigl(\tilde\mu^{\param'}_{h-1}(\overline\pi^t, \seq{a}{h-\ell}{h-1})\bigr) \bigr\|_1\notag\\
&\qquad = \bigl\|\bigl(\ObsB^\param_{h}(a_{h}, o_{h}) - \ObsB^{\param'}_{h}(a_{h}, o_{h})\bigr)\UU^{\param'}_{h,k} \Tran^{\param'}_{h-1}(a_{h-1})A^{\overline\pi^t}_{h-1, \ell}(\seq{a}{h-\ell}{h-2})C^{\overline\pi^t, \dagger}_{h-1, \ell}(\seq{a}{h-\ell}{h-1})\bigr\|_1,
\#
where $A^{\overline\pi^t}_{h-1, \ell}$ is the reverse emission operator defined in \eqref{def::rev_op} and $C^{\overline\pi^t, \dagger}_{h-1, \ell}$ is the right inverse of $C^{\overline\pi^t}_{h-1, \ell}$ in Assumption \ref{asu::inv_rev_op}. Meanwhile, by Assumption \ref{asu::inv_rev_op}, it holds that 
\#\label{eq::pf_lem_ge_III_eq2}
&\bigl\|\bigl(\ObsB^\param_{h}(a_{h}, o_{h}) - \ObsB^{\param'}_{h}(a_{h}, o_{h})\bigr)\UU^{\param'}_{h,k} \Tran^{\param'}_{h-1}(a_{h-1})A^{\overline\pi^t}_{h-1, \ell}(\seq{a}{h-\ell}{h-2})C^{\overline\pi^t, \dagger}_{h-1, \ell}(\seq{a}{h-\ell}{h-1})\bigr\|_1 \notag\\
&\qquad \leq \gamma\cdot \|\bigl(\ObsB^\param_{h}(a_{h}, o_{h}) - \ObsB^{\param'}_{h}(a_{h}, o_{h})\bigr)\UU^{\param'}_{h,k} \Tran^{\param'}_{h-1}(a_{h-1})A^{\overline\pi^t}_{h-1, \ell}(\seq{a}{h-\ell}{h-1})\bigr\|_1.
\#
By the identity of $\pone^t_h(\seq{a}{h-\ell}{h-1})$ in \eqref{eq::rev_op_id}, we further obtain that
\#\label{eq::pf_lem_ge_III_eq3}
&\|\bigl(\ObsB^\param_{h}(a_{h}, o_{h}) - \ObsB^{\param'}_{h}(a_{h}, o_{h})\bigr)\UU^{\param'}_{h,k} \Tran^{\param'}_{h-1}(a_{h-1})A^{\overline\pi^t}_{h-1, \ell}\bigr\|_1 \notag\\
&\qquad = \|\bigl(\ObsB^\param_{h}(a_{h}, o_{h}) - \ObsB^{\param'}_{h}(a_{h}, o_{h})\bigr)\pone^t_h(\seq{a}{h-\ell}{h-1})\bigr\|_1.
\#
We now upper bound the right-hand side of \eqref{eq::pf_lem_ge_III_eq3}. The calculation is similar to that in Part II of the proof. By triangle inequality, we obtain
\#\label{eq::pf_lem_ge_III_eq4}
\|\bigl(\ObsB^\param_{h}(a_{h}, o_{h}) - \ObsB^{\param'}_{h}(a_{h}, o_{h})\bigr) \pone^t_h(\seq{a}{h-\ell}{h-1})\bigr\|_1 \leq {\rm (v)} + {\rm (vi)} + {\rm (vii)} + {\rm (viii)},
\#
where we define
\$
{\rm (v)} &= \bigl\| \ObsB^{\param^t}_{h}(a_{h}, o_{h})\bigl(\hat\pone^t_h(\seq{a}{h-\ell}{h-1}) - \pone^t_h(\seq{a}{h-\ell}{h-1})\bigr) \bigr\|_1,\notag\\
{\rm (vi)}&= \|\ObsB^{\param^t}_{h}(a_{h}, o_{h})\hat\pone^t_h(\seq{a}{h-\ell}{h-1}) - \hat\ptwo^t_h(\seq{a}{h-\ell}{h}, o_h)\|_1,\notag\\
{\rm (vii)}&=\|\hat\ptwo^t_h(\seq{a}{h-\ell}{h}, o_h) - \ptwo^t_h(\seq{a}{h-\ell}{h}, o_h)\|_1,\notag\\
{\rm (viii)}&=\|\ObsB^{\param^*}_{h}(a_{h}, o_{h})\pone^t_h(\seq{a}{h-\ell}{h-1}) - \ptwo^t_h(\seq{a}{h-\ell}{h}, o_h)\|_1.
\$
In what follows, we upper bound terms (v)--(viii) on the right-hand side of \eqref{eq::pf_lem_ge_III_eq4}. By the definition of parameter space and the concentration inequality in Lemma \ref{lem::concen}, we obtain that
\#\label{eq::pf_lem_ge_III_i}
{\rm (v)} &= \bigl\| \ObsB^{\param^t}_{h}(a_{h}, o_{h})\bigl(\hat\pone^t_1(\seq{a}{h-\ell}{h-1}) - \pone^t_h(\seq{a}{h-\ell}{h-1})\bigr) \bigr\|_1 \notag\\
&\leq \|\ObsB^{\param^t}_{h}(a_{h}, o_{h})\|_{1\mapsto1}\cdot \|\hat\pone^t_h(\seq{a}{h-\ell}{h-1}) - \pone^t_h(\seq{a}{h-\ell}{h-1})\|_1\notag\\
& = \cO\bigl(\nu\cdot \sqrt{A^{5k+1}\cdot O^{k+\ell}}\cdot (k+\ell)\cdot \sqrt{\log(O\cdot A\cdot T\cdot H/\delta)/T}\bigr).
\#
Meanwhile, by the definition of confidence set ${\cC^t}$ in \eqref{eq::def_CI} and the fact that $\param^t \in{\cC^t}$, we obtain that
\#\label{eq::pf_lem_ge_III_ii}
{\rm (vi)} &= \|\ObsB^{\param^t}_{h}(a_{h}, o_{h})\hat\pone^t_h(\seq{a}{h-\ell}{h-1}) - \hat\ptwo^t_h(\seq{a}{h-\ell}{h}, o_h)\|_1\notag\\
&=\cO\bigl(\nu\cdot \sqrt{A^{5k+1}\cdot O^{k+\ell}}\cdot (k+\ell)\cdot \sqrt{\log(O\cdot A\cdot T\cdot H/\delta)/T}\bigr).
\#
By the concentration inequality in Lemma \ref{lem::concen}, we further obtain that
\#\label{eq::pf_lem_ge_III_iii}
{\rm (vii)} &= \|\hat\ptwo^t_h(\seq{a}{h-\ell}{h}, o_h) - \ptwo^t_h(\seq{a}{h-\ell}{h}, o_h)\|_1 \notag\\
&= \cO\bigl(\sqrt{A^{3k+1}\cdot O^{k+\ell}}\cdot (k+\ell)\cdot \sqrt{\log(O\cdot A\cdot T\cdot H/\delta)/T}\bigr).
\#
Finally, by the identity of Bellman operators in \eqref{eq::id_OO}, we have
\#\label{eq::pf_lem_ge_III_iv}
{\rm (viii)}=\|\ObsB^{\param^*}_{h}(a_{h}, o_{h})\pone^t_h(\seq{a}{h-\ell}{h-1}) - \ptwo^t_h(\seq{a}{h-\ell}{h}, o_h)\|_1  = 0.
\#
Plugging \eqref{eq::pf_lem_ge_III_i}, \eqref{eq::pf_lem_ge_III_ii}, \eqref{eq::pf_lem_ge_III_iii}, and \eqref{eq::pf_lem_ge_III_iv} into \eqref{eq::pf_lem_ge_III_eq4}, it holds for all $a_h \in\cA$, $o_h \in\cO$, and $\seq{a}{h-\ell}{h-1}\in\cA^\ell$ with probability at least $1 - \delta$ that
\#\label{eq::pf_lem_ge_III_eq5}
&\|\bigl(\ObsB^\param_{h}(a_{h}, o_{h}) - \ObsB^{\param'}_{h}(a_{h}, o_{h})\bigr)\pone^t_h(\seq{a}{h-\ell}{h-1})\bigr\|_1 \notag\\
&\qquad= \cO\bigl(\nu\cdot \sqrt{A^{5k+1}\cdot O^{k+\ell}}\cdot (k+\ell)\cdot \sqrt{\log(O\cdot A\cdot T\cdot H/\delta)/T}\bigr).
\#
Combining \eqref{eq::pf_lem_ge_III_eq1}, \eqref{eq::pf_lem_ge_III_eq2}, \eqref{eq::pf_lem_ge_III_eq3}, and \eqref{eq::pf_lem_ge_III_eq5}, we conclude that
\$
&\sum_{s_{h-1}\in\cS}\bigl\|\bigl(\ObsB^{\param^t}_{h}(a_{h}, o_{h}) - \ObsB^{\param^*}_{h}(a_{h}, o_{h})\bigr)\UU^{\param^*}_{h,k} \Tran^{\param^*}_{h-1}(s_{h-1}, a_{h-1})\bigr\|_1\cdot\PP^t(s_{h-1} \given \seq{a}{h-\ell}{h-1})\notag\\
&\qquad = \bigl\|\bigl(\ObsB^\param_{h}(a_{h}, o_{h}) - \ObsB^{\param'}_{h}(a_{h}, o_{h})\bigr)\UU^{\param'}_{h,k} \Tran^{\param'}_{h-1}(a_{h-1})A^\pi_{h-1, \ell}(\seq{a}{h-\ell}{h-2})C^{\pi, \dagger}_{h-1, \ell}(\seq{a}{h-\ell}{h-2})\bigr\|_1\notag\\
&\qquad\leq \gamma\cdot \|\bigl(\ObsB^\param_{h}(a_{h}, o_{h}) - \ObsB^{\param'}_{h}(a_{h}, o_{h})\bigr)\pone^t_h(\seq{a}{h-\ell}{h-1})\bigr\|_1\notag\\
&\qquad=\cO\Bigl(\gamma\cdot \nu\cdot (k+\ell)\cdot \sqrt{ A^{5k+\ell}\cdot O^{k+1}\cdot \log(O\cdot A\cdot T\cdot H/\delta)/t}\Bigr),
\$
where the first inequality follows from \eqref{eq::pf_lem_ge_III_eq2} and \eqref{eq::pf_lem_ge_III_eq3}. Thus, we complete the proof of \eqref{eq::lem_ge_eq2}.

\end{proof}

\subsection{Proof of Lemma \ref{lem::norm_bound}}
\begin{proof}
\label{sec::pf_lem_norm_bound}
Recall that we define 
\$
\diagO_h(o_h) &= \diag\bigl(\OO_h(o_h\given \cdot)\bigr) = \diag\Bigl(\bigl[\OO(o_h\given s_h)\bigr]_{s_h}\Bigr) \in \RR^{S\times S},
\$
where we denote by $\diag(v)\in\RR^{S\times S}$ the diagonal matrix where the diagonal entries aligns with the vector $v\in\RR^S$. Thus, it holds for all $h\in[H]$ that
\$
\sum_{o_h\in\cO} \|\Tran_h(a_h)\diagO_h(o_h) u\|_1 &\leq \sum_{o_h\in\cO} \|\diagO_h(o_h) u\|_1 = \sum_{o_h\in\cO}\sum_{s_h\in\cS} \OO_h(o_h\given s_h)\cdot |u(s_h)| \notag\\
&= \sum_{s_h\in\cS}|u(s_h)| = \|u\|_1,
\$
where the first inequality follows from the fact that $\Tran_h(a_h)$ is a transition matrix. Here we denote by $u(s_h)$ the $s_h$-th entry of $u\in\RR^S$. Inductively, we obtain that
\#\label{eq::pf_lem_norm_bound_1}
\sum_{\seq{o}{h}{H-1}\in\cO^{H-h}}\|\Tran_{H-1}(a_{H-1})\diagO_{H-1}(o_{H-1})\ldots \Tran_{h}(a_{h})\diagO_{h}(o_{h}) u\|_1 \leq \|u\|_1.
\#
Meanwhile, note that $\UU_{h}$ is a transition matrix. Thus, it holds for all $h\in[H]$ that
\#\label{eq::pf_lem_norm_bound_2}
&\|\UU_{h} \Tran_{H-1}(a_{H-1})\diagO_{H-1}(o_{H-1})\ldots \Tran_{h}(a_{h})\diagO_{h}(o_{h}) u\|_1 \notag\\
&\qquad\leq \|\Tran_{H-1}(a_{H-1})\diagO_{H-1}(o_{H-1})\ldots \Tran_{h}(a_{h})\diagO_{h}(o_{h}) u\|_1
\#
Combining \eqref{eq::pf_lem_norm_bound_1} and \eqref{eq::pf_lem_norm_bound_2}, we conclude
\$
\sum_{\seq{o}{h}{H-1}\in\cO^{H-h}}\|\UU_{h} \Tran_{H-1}(a_{H-1})\diagO_{H-1}(o_{H-1})\ldots \Tran_{h}(a_{h})\diagO_{h}(o_{h}) u\|_1 \leq \|u\|_1,
\$
which completes the proof of Lemma \ref{lem::norm_bound}.
\end{proof}

\end{document}